%% file: main.tex
\documentclass{article}

\PassOptionsToPackage{numbers}{natbib}

\usepackage[preprint]{neurips_2026}

\usepackage[utf8]{inputenc} %
\usepackage[T1]{fontenc}    %
\usepackage{hyperref}       %
\usepackage{url}            %
\usepackage{booktabs}       %
\usepackage{amsfonts}       %
\usepackage{nicefrac}       %
\usepackage{microtype}      %
\usepackage[table,dvipsnames]{xcolor}         %
\usepackage{amsmath,amsthm,amssymb}
\usepackage{mathtools}
\usepackage{tikz}
\usepackage{wrapfig}
\usepackage{multirow}
\usepackage{xspace}
\usepackage[nameinlink,capitalize]{cleveref}
\usepackage{subcaption}
\usepackage{makecell}
\hypersetup{
	colorlinks,
	linkcolor={black},
} 
\usepackage{aliascnt}

\makeatletter
\AddToHook{cmd/appendix/before}{%
  \def\cref@section@alias{appendix}%
  \def\cref@subsection@alias{appendix}%
}
\makeatother

\author{%
  Sudip~Bhujel\textsuperscript{\textdagger}\quad
  Shanghao~Shi\textsuperscript{\textdaggerdbl}\quad
  Ruiquan~Huang\textsuperscript{\textdagger}\quad
  Ning~Zhang\textsuperscript{\textdaggerdbl}\quad
  Yang~Xiao\textsuperscript{\textdagger} \\[0.25em]
  \textsuperscript{\textdagger}Department of Computer Science, University of Kentucky\\
  \textsuperscript{\textdaggerdbl}Department of Computer Science and Engineering, Washington University in St. Louis \\[0.25em]
  \texttt{\{sudipbhujel,RuiquanHuang,xiaoy\}@uky.edu}\\
  \texttt{\{shanghao,zhang.ning\}@wustl.edu}
}

\newcommand{\sysname}{{TRACE}\xspace}%

\newcolumntype{g}{>{\columncolor{gray!10}}c}
\newcolumntype{G}{>{\columncolor{gray!10}}r}

\newtheorem{theorem}{Theorem}[section]

\newaliascnt{proposition}{theorem}
\newtheorem{proposition}[proposition]{Proposition}
\aliascntresetthe{proposition}

\newaliascnt{corollary}{theorem}
\newtheorem{corollary}[corollary]{Corollary}
\aliascntresetthe{corollary}

\newaliascnt{lemma}{theorem}
\newtheorem{lemma}[lemma]{Lemma}
\aliascntresetthe{lemma}

\newaliascnt{definition}{theorem}
\newtheorem{definition}[definition]{Definition}
\aliascntresetthe{definition}

\newaliascnt{remark}{theorem}
\newtheorem{remark}[remark]{Remark}
\aliascntresetthe{remark}

\crefname{theorem}{theorem}{theorems}
\Crefname{theorem}{Theorem}{Theorems}
\crefname{proposition}{proposition}{propositions}
\Crefname{proposition}{Proposition}{Propositions}
\crefname{corollary}{corollary}{corollaries}
\Crefname{corollary}{Corollary}{Corollaries}
\crefname{lemma}{lemma}{lemmas}
\Crefname{lemma}{Lemma}{Lemmas}
\crefname{definition}{definition}{definitions}
\Crefname{definition}{Definition}{Definitions}
\crefname{remark}{remark}{remarks}
\Crefname{remark}{Remark}{Remarks}

\crefname{section}{sec.}{secs.}
\Crefname{section}{Sec.}{Secs.}
\crefname{appendix}{app.}{apps.}
\Crefname{appendix}{App.}{Apps.}
\crefname{figure}{fig.}{figs.}
\Crefname{figure}{Fig.}{Figs.}

\begin{document}

\title{
Temporal Gradient Inversion for Private Trajectory Reconstruction in Embodied Reinforcement Learning
}

\maketitle

\input{sections/abstract}

\input{sections/intro}

\input{sections/problem_formulation}

\input{sections/methodology}

\input{sections/theoretical_analysis}

\input{sections/evaluation}

\input{sections/related_work}

\input{sections/conclusion}

\bibliographystyle{plainnat}
\bibliography{references,software}

\newpage
\appendix
\input{appendices/broader_impact}
\input{appendices/notations}

\input{appendices/preliminaries}

\input{appendices/threat_model}
\input{appendices/theoretical_analysis_derivation}
\input{appendices/evaluation_metrics}

\input{appendices/add_experimental_details}

\input{appendices/limitations}

\end{document}

%% file: sections/abstract.tex
\begin{abstract}
    Distributed learning in embodied reinforcement-learning agents offers a degree of privacy by retaining raw sensor data on-device and transmitting only policy gradients to the server. Yet temporal structure can amplify this leakage beyond single-frame attacks. We introduce \textbf{T}emporal \textbf{R}econstruction \textbf{A}ttack on \textbf{C}onsecutive \textbf{E}ncodings (TRACE), an amortized temporal gradient-inversion attack that autoregressively reconstructs the sequence of private observation-action trajectories from per-step policy-learning gradients. The attack exploits two structural signals ignored by prior single-frame methods: (i) cross-time correlation between successive embodied gradients, which we formalize via a conditional mutual-information bound, and (ii) closed-form action recovery from policy-head gradient structure, which we prove exact when standard entropy regularization is sufficiently small. On held-out embodied scenes, TRACE reaches $18.8$ dB PSNR with near-perfect action recovery at $3$--$4.5$ ms per reconstructed frame, dominating the learning-based baseline across all reconstruction metrics and exceeding optimization attacks while running orders of magnitude faster. Further evaluation demonstrates TRACE's broader applicability across recurrent, residual, and compact transformer victim architectures, multi-modal inputs, and larger discrete action spaces. Defense experiments suggest that protecting temporal gradient streams may require sequence-aware privacy mechanisms.
\end{abstract}

%% file: sections/intro.tex
\section{Introduction}
\label{sec:introduction}

Gradient inversion attacks have emerged as a fundamental privacy threat in modern machine learning. By leveraging gradients computed on private input, an adversary can reconstruct the original data with high fidelity \citep{zhu2019deep,geiping2020inverting,yin2021see}. Although initially studied in standard supervised learning settings, this privacy risk becomes particularly relevant wherever gradients are exposed. Notably, in distributed reinforcement learning (RL) scenarios such as robot fleets and shared navigation, the embodied RL agents share model/policy gradients to an external aggregating server while keeping raw environmental observations on-device.
In these settings, 
the privacy risk imposed by gradient inversion attacks becomes aggravated: 
a recovered embodied trajectory leaks the agent's time-ordered camera view and actions, such as room layout, objects, and where the agent moved, looked, and lingered, enabling activity profiling, premise mapping, and persistent monitoring by a curious server.

\begin{figure*}[t]
    \vspace{-4pt}
    \centering
    \begin{subfigure}{0.64\textwidth}
        \centering
        \includegraphics[width=\linewidth]{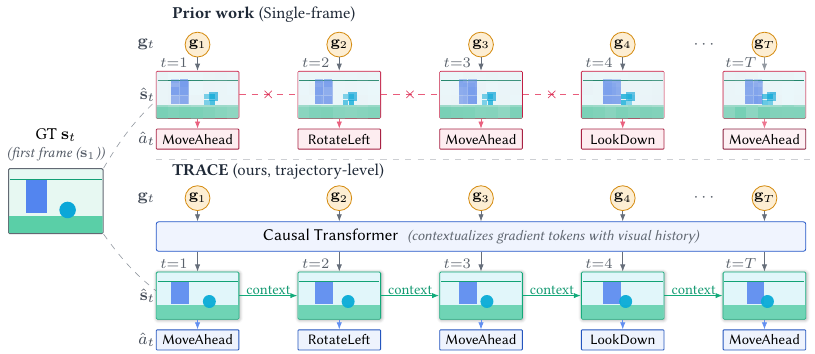}
    \end{subfigure}
    \hspace{2pt}
    \begin{subfigure}{0.34\textwidth}
        \centering
        \includegraphics[width=\linewidth]{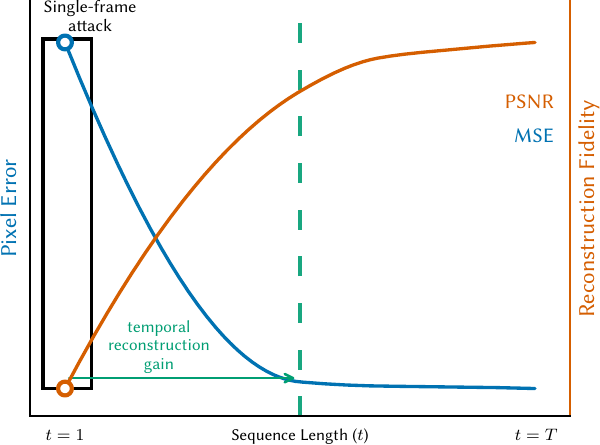}
    \end{subfigure}
    \caption{
    Illustration of trajectory reconstruction from embodied RL gradients. (Left) Prior work inverts each gradient $\mathbf{g}_t$ independently, ignoring temporal coherence; \sysname processes the full trajectory $\{\mathbf{g}_1,\ldots,\mathbf{g}_T\}$ with a causal transformer that sequentially decodes observations ($\mathbf{\hat{s}}_t$) and discrete actions ($\mathbf{\hat{a}}_t$), improving recovery accuracy. (Right) Qualitative illustration of pixel error (MSE) and reconstruction fidelity (PSNR) versus sequence length $t$: single-frame optimization attacks ($t{=}1$) incur high error, while \sysname improves with longer context---realizing temporal reconstruction gain---until returns diminish as $t \to T$ (see \Cref{rem:diminishing_returns}).
    }
    \label{fig:hero}
    \vspace{-14pt}
\end{figure*}

From the attacker's perspective, existing gradient inversion techniques target single images or static mini-batches, treating them as i.i.d. samples optimized with simple supervised losses such as cross-entropy~\citep{zhu2019deep,geiping2020inverting,yin2021see,wu2023learning}.
By contrast, embodied policy-learning traces are ordered measurements of a physical process, not i.i.d. samples. Consecutive frames captured by the agent's camera are coupled by transition dynamics: bounded translations, bounded rotations, and a largely persistent 3D scene cause nearby observations to share layout, objects, lighting, and viewpoint-dependent geometry. In actor-critic methods such as PPO~\citep{schulman2017proximal} and A2C~\citep{mnih2016asynchronous,konda1999actor}, the communicated gradients further combine policy, value, and entropy terms whose weights depend on rollout-derived advantages and returns. RL gradient leakage is therefore a trajectory-inversion problem rather than a collection of independent image inversions: the attacker must recover both private RGB observations and discrete actions from a noisy, loss-weighted gradient stream, while the same temporal coherence provides side information that can reduce per-step reconstruction ambiguity. Thus, a valid reconstruction should satisfy the environment's transition dynamics, and those dynamics also provide the structure \sysname exploits.

\textbf{This work.} We introduce \sysname (\textbf{T}emporal \textbf{R}econstruction \textbf{A}ttack on \textbf{C}onsecutive \textbf{E}ncodings), a temporal gradient inversion attack targeting embodied RL agents through near real-time \emph{inference}. Our key insight is that, in distributed learning settings for embodied agents, consecutive observations leave correlated signatures in their corresponding policy gradients. Unlike prior attacks---whether per-gradient optimization or single-frame learning-based mappers---that invert each gradient independently, \sysname amortizes the attack across the trajectory through an autoregressive model with three components: a gradient encoder maps gradient updates to compact latents, a causal transformer that propagates trajectory context, and a decoder that jointly reconstructs observations and predicts actions. When the server observes per-step gradients from consecutive viewpoints, temporal context amplifies leakage. We substantiate this claim both theoretically, through closed-form discrete-action recovery and an information-theoretic characterization of temporal gain, and empirically. Because policy memory, visual backbones, and auxiliary observations change the gradient representation, we also test whether reconstruction leakage persists beyond the CNN victim.

Our main contributions include:
\vspace{-2mm}
\begin{itemize}
    \setlength\itemsep{0.1em}
    \item We propose \sysname, an amortized trajectory-level inversion framework that uses gradient encoding, causal temporal modeling, and autoregressive decoding to recover observations/actions from gradient streams in milliseconds (\Cref{sec:overview}).
    \item We develop a theoretical foundation for actor-critic trajectory-gradient leakage, proving closed-form action recovery and characterizing temporal reconstruction gain (\Cref{sec:theoretical_analysis}).
    \item We evaluate \sysname against both optimization-based and learning-based baselines, as well as common gradient defenses, demonstrating high reconstruction fidelity and robust action recovery in realistic embodied-RL settings. The results show temporal gradient leakage transfers across unseen embodied environments, as \sysname achieves strong zero-shot reconstruction on held-out scenes and exhibits rapid improvement when adapted using only a small subset of target-domain trajectories (\Cref{sec:evaluation}). To assess the broader applicability of \sysname, we also evaluate recurrent, residual, and transformer victims, multi-modal inputs, and larger discrete action spaces (\Cref{sec:victim_variants}).
\end{itemize}

%% file: sections/problem_formulation.tex
\section{Problem Formulation}
\label{sec:problem_formulation}

We adopt a standard assumption from gradient inversion research \citep{zhu2019deep,geiping2020inverting,zhao2020idlg,yin2021see,jeon2021gradient}: the model server is \emph{passive} and \emph{honest-but-curious}, faithfully aggregating while observing the client's transmitted gradient updates. 
We consider the clients are tasked with policy learning via A2C~\citep{mnih2016asynchronous} or PPO~\citep{schulman2017proximal}, two widely used actor-critic methods. At each communication round, the server broadcasts a policy model $\theta_0$ to the client. The client then interacts with its private environment using this model, computes per-step gradients from its local rollout, and sends the ordered gradients to the server. 
Given a contiguous trajectory of gradients $\mathcal{G}_{1:T} = (\mathbf{g}_1,\ldots,\mathbf{g}_T)$ from a victim client where $T$ denotes the trajectory length, the curious server seeks to reconstruct the corresponding trajectory: observations $\hat{\mathbf{s}}_{1:T}$ and actions $\hat{a}_{1:T}$, by minimizing expected reconstruction error and action prediction error.

It is worth noting that cryptographic protections, such as secure aggregation~\citep{bonawitz2017practical}, can prevent a server from seeing individual gradients. However, they rely on heavy cryptography methods and introduce extra system/security assumptions that do not apply to the ordered-gradient setting studied here. 
Under the standard white-box assumption for distributed learning, we assume the adversary (curious server) has full knowledge of model architecture, the current checkpoint $\theta_0$, the action space, and the learning rule, but not the underlying data~\citep{zhu2019deep}. 
The adversary has auxiliary trajectories from the same task family for offline training, after which inference relies solely on the observed ordered gradients.

Formally, let $\mathcal{S} = [0,1]^{3\times H_I \times W_I}$ denote the RGB observation space, with image height $H_I$ and width $W_I$, and let $\mathcal{A}=\{1,\dots,K\}$ denote the discrete action space, where $K=|\mathcal{A}|$. A summary of the key symbols used throughout is provided in \Cref{sec:notations}. For each rollout pair $(\mathbf{s}_t, a_t)$, the client computes a generalized advantage estimate $\hat{A}_t$ and return target $\hat{R}_t$, both derived from the client's local trajectory and hidden from the attacker. We adopt a unified per-step actor-critic objective
{
\small
\begin{equation}
    \label{eq:capture-loss}
    \mathcal{L}^{\mathrm{ac}}_t(\theta) = \ell_t^{\mathrm{pol}}(\theta) + c_v\big(V_\theta(\mathbf{s}_t)-\hat{R}_t\big)^2 - c_e\,\mathcal{H}\!\left(\pi_\theta(\cdot \mid \mathbf{s}_t)\right),
\end{equation}
}
where the policy term $\ell^{\mathrm{pol}}_t$ instantiates the underlying algorithm:
{
\small
\begin{equation}
    \label{eq:policy-loss-instances}
    \ell^{\mathrm{pol}}_t(\theta) \;=\;
    \begin{cases}
        -\hat{A}_t \log \pi_\theta(a_t \mid \mathbf{s}_t), & \text{A2C},\\[4pt]
        -\min\!\Big( r_t(\theta)\hat{A}_t,\; \mathrm{clip} (r_t(\theta),1-\epsilon,1+\epsilon)\hat{A}_t \Big), & \text{PPO},
    \end{cases}
    \quad \text{where} \quad
    r_t(\theta) = \frac{\pi_\theta(a_t \mid \mathbf{s}_t)} {\pi_{\theta_0}(a_t \mid \mathbf{s}_t)}.
\end{equation}
}
The attacker observes the corresponding per-step communicated gradient
\begin{equation}
    \label{eq:per-step-grad}
    \mathbf{g}_t = \nabla_\theta \mathcal{L}^{\mathrm{ac}}_t(\theta)\big|_{\theta=\theta_0} \in \mathbb{R}^d, \qquad \mathbf{s}_t \in \mathcal{S},\; a_t \in \mathcal{A}.
\end{equation}
\textbf{Modeling assumption.} We treat $\mathbf{g}_t$ as the per-step gradient at the broadcast parameters $\theta_0$, consistent with per-step policy-gradient sharing in collaborative policy learning~\citep{li2024privacy,he2025gradient}.
Multi-step client policy updates, defined as $\Delta \theta = \theta_0 - \theta_{\mathrm{client}}$, combine gradients computed along a changing optimization path and may reduce the temporal structure available to the attacker~\citep{he2025gradient}. However, we focus on the per-step setting in this work.
Here, $\pi_\theta(\cdot \mid \mathbf{s}_t)$, $V_\theta(\mathbf{s}_t)$, and $\mathcal{H}(\pi_\theta(\cdot \mid \mathbf{s}_t))$ denote the policy, value, and entropy, so the three terms in \Cref{eq:capture-loss} carry advantage-weighted policy, return-regression, and entropy-regularization signals from both actor and critic branches. A2C~\citep{mnih2016asynchronous} and PPO~\citep{schulman2017proximal} share the same critic and entropy terms and differ only in $\ell^{\mathrm{pol}}_t$. Later in \Cref{prop:local_ppo_capture}, we show their policy gradients coincide at $\theta_0$, so our attack applies to both without modification.

%% file: sections/methodology.tex
\section{\sysname Design}
\label{sec:overview}

\begin{wrapfigure}{r}{0.7\textwidth}
    \centering
    \vspace{-40pt}
    \includegraphics[width=\linewidth]{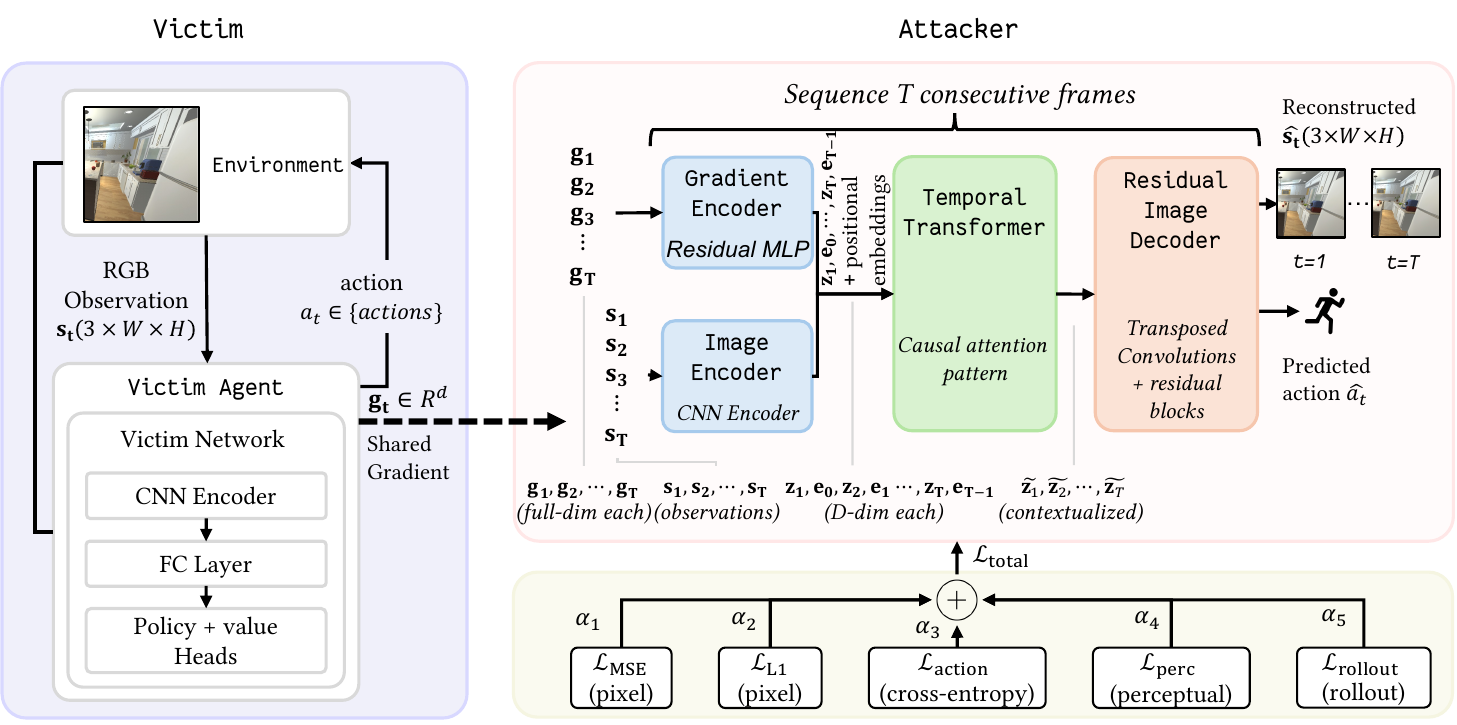}
    \vspace{-12pt}
    \caption{
    \sysname attack overview.
    }
    \label{fig:system}
    \vspace{-10pt}
\end{wrapfigure}
The core intuition behind \sysname is twofold. 
First, gradient updates from policy learning tend to reflect the agent's current observation: because the policy gradient $\nabla_\theta \log \pi_\theta(a_t \mid \mathbf{s}_t)$ passes through $\mathbf{s}_t$ via intermediate activations, the gradient vector retains information about the input observation that may be exploited for reconstruction. 
Second, embodied RL trajectories exhibit strong temporal coherence---consecutive observations share scene geometry, lighting, and object layout---providing cross-step priors that single-frame inversion methods cannot exploit. 
As shown in \Cref{fig:system}, \sysname leverages both signals jointly: 
a gradient encoder and an image encoder extract a per-step latent from the gradient updates $\mathbf{g}_{1:T}$; a temporal (causal) transformer propagates scene context across the trajectory. 
Finally, a residual decoder reconstructs $\hat{\mathbf{s}}_t$ in the pixel space from the contextualized latent and predicts the action $\hat{a}_t$.

\subsection{Attacker Model}

The attacker's model has three components: i) a gradient encoder that compresses the million-dimensional update into a low-dimensional latent representation; ii) a temporal transformer that contextualizes each latent using causal gradient and image-context tokens; and iii) a decoder that reconstructs the image and predicts the corresponding action. 

\vspace{-2pt}
\textbf{Encoder.}
The encoder component comprises a gradient encoder $\phi_\mathrm{enc} : \mathbb{R}^d \to \mathbb{R}^D$ and an image encoder $\phi_\mathrm{img}: [0, 1]^{3 \times H_I \times W_I} \to \mathbb{R}^D$, where $D \ll d$ is the shared latent dimension. 
The gradient encoder maps the high-dimensional raw gradients $\{\mathbf{g}_1, \dots, \mathbf{g}_T\}$ to compact representations $\mathbf{z}_t = \phi_{\mathrm{enc}}(\mathbf{g}_t) \in \mathbb{R}^D$.
The image encoder produces context embeddings. During the teacher-forced warmup: ground-truth observations provide $\mathbf{e}_t = \phi_{\mathrm{img}}(\mathbf{s}_t)$ for $t \geq 1$ while a learned start token $\mathbf{e}_0 \in \mathbb{R}^D$, initialized from $\mathcal{N}(\mathbf{0}, \sigma_{\mathrm{init}}^2\mathbf{I})$, provides the initial context before any decoded frame is available. This warmup path forms the interleaved transformer input:
\begin{equation}
    \small
    \label{eq:interleaved_sequence}
    \mathbf{X}^{\mathrm{TF}} = [\mathbf{z}_1,\, \mathbf{e}_0,\, \mathbf{z}_2,\, \mathbf{e}_1,\, \ldots,\, \mathbf{z}_T,\, \mathbf{e}_{T-1}] \in \mathbb{R}^{2T \times D}.
\end{equation}

\vspace{-2pt}
\textbf{Temporal Transformer.}
The causal transformer $\phi_{\mathrm{temp}} : \mathbb{R}^{2T \times D} \to \mathbb{R}^{2T \times D}$ uses $N$-layer multi-head self-attention with a lower-triangular causal mask $\mathbf{M} \in \{0,1\}^{2T \times 2T}$, $M_{ij}=\mathbf{1}[j\leq i]$. In the teacher-forced warmup sequence of \Cref{eq:interleaved_sequence}, the representation used to decode timestep $t$ is taken from the gradient-token position $\mathbf{z}_t$. Because $\mathbf{e}_{t-1}$ appears immediately after $\mathbf{z}_t$, this parallel warmup path uses lagged image context: $\mathbf{e}_{t-1}$ is available only to later gradient tokens. The sequential scheduled-sampling, rollout, and inference protocol is described in \Cref{sec:autoregressive-training}: after decoding step $t$, the encoded ground-truth or predicted frame is appended as context for later steps. Thus, inference is amortized per step but sequential across $T$ decoding steps.
The model uses gated residual connections with learnable scalars $\alpha_{\mathrm{attn}}^{(n)}$ and $\alpha_{\mathrm{ffn}}^{(n)}$, initialized so residual branches are initially down-weighted:
\vspace{-2pt}
\begin{equation}
    \small
    \label{eq:gated_residual}
    \mathbf{x}^{(n+1)} = \mathbf{x}^{(n)} + \sigma(\alpha_{\mathrm{attn}}^{(n)}) \cdot \mathrm{Attn}^{(n)}(\mathbf{x}^{(n)}) + \sigma(\alpha_{\mathrm{ffn}}^{(n)}) \cdot \mathrm{FFN}^{(n)}(\mathbf{x}^{(n)}).
\end{equation}
The contextualized gradient representations $\tilde{\mathbf{z}}_t$ are extracted from the output of the transformer.

\vspace{-2pt}
\textbf{Decoder.}
The decoder $\phi_{\mathrm{dec}} : \mathbb{R}^D \to [0,1]^{3 \times H_I \times W_I} \times \Delta^{K-1}$ comprises two heads sharing the contextualized latent $\tilde{\mathbf{z}}_t$: an image head $\phi_{\mathrm{img,dec}}$ that upsamples $\tilde{\mathbf{z}}_t$ to a reconstructed observation $\hat{\mathbf{s}}_t$ via transposed convolutions with a sigmoidal output, and an action head $\phi_{\mathrm{act}}$---a two-layer MLP---that produces a categorical distribution over actions:
\begin{equation}
    \small
    \label{eq:decoder}
    \hat{\mathbf{s}}_t = \phi_{\mathrm{img,dec}}(\tilde{\mathbf{z}}_t), \qquad
    \hat{p}_t = \mathrm{softmax}\!\bigl(\phi_{\mathrm{act}}(\tilde{\mathbf{z}}_t)\bigr), \qquad
    \hat{a}_t = \operatorname*{arg\,max}_{k} [\hat{p}_t]_k.
\end{equation}

\input{sections/exposure_bias}

\textbf{Training Objective.}
\label{sec:training_objective}
Lastly, we train our attacker model using a weighted multi-term objective, a standard approach in gradient inversion~\citep{geiping2020inverting,yin2021see,jeon2021gradient,hatamizadeh2022gradvit}, combining complementary loss functions:
\begin{equation}
    \small
    \label{eq:total-loss}
    \mathcal{L}_{\mathrm{total}} = 
    \alpha_1 \mathcal{L}_{\mathrm{MSE}} + \alpha_2  \mathcal{L}_{\mathrm{L1}} + \alpha_3 \mathcal{L}_{\mathrm{act}} + \alpha_4 \mathcal{L}_{\mathrm{perc}}
    + \alpha_5 \mathcal{L}_{\mathrm{rollout}}
    .
\end{equation}
where $\mathcal{L}_{\mathrm{MSE}}$ and $\mathcal{L}_{\mathrm{L1}}$ evaluate pixel fidelity loss, $\mathcal{L}_{\mathrm{act}}$ action supervision loss, $\mathcal{L}_{\mathrm{perc}}$ perceptual quality loss, and $\mathcal{L}_{\mathrm{rollout}}$ rollout-supervision loss that exposes the model to its own predictions during training to mitigate exposure bias. $\alpha$'s are the weights. We provide a full derivation in \Cref{sec:derivation_of_training_objective}.

%% file: sections/exposure_bias.tex
\subsection{Autoregressive Training}
\label{sec:autoregressive-training}

\begin{wrapfigure}{r}{0.65\linewidth}
    \vspace{-30pt}
    \centering
    \begin{subfigure}{0.48\linewidth}
        \centering
        \includegraphics[width=\linewidth]{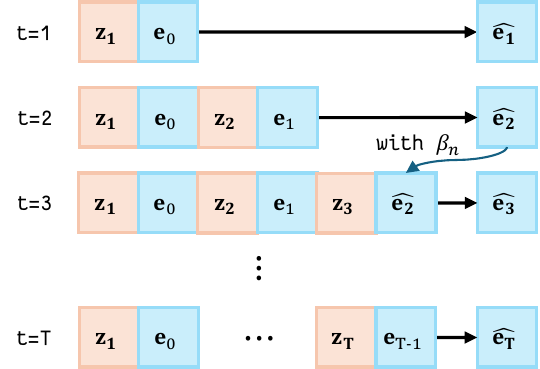}
        \caption{
        }
        \label{fig:training}
    \end{subfigure}
    \hspace{4pt}
    \begin{subfigure}{0.48\linewidth}
        \centering
        \includegraphics[width=\linewidth]{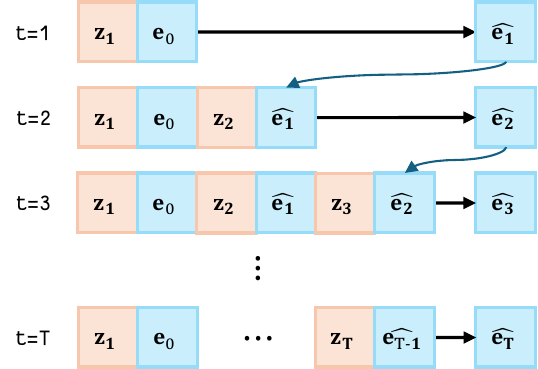} 
        \caption{
        }
        \label{fig:inference}
    \end{subfigure}
    \vspace{-16pt}
    \caption{Train--inference protocol of \sysname. (a) During training, after warmup, the next image-context token is encoded from either the ground-truth observation or the model's prediction with probability $\beta_n$. (b) During inference, each decoded observation is encoded and appended as image context for subsequent timesteps.}
    \label{fig:train_inf}
    \vspace{-12pt}
\end{wrapfigure}
\Cref{fig:train_inf} illustrates the training and inference processes of \sysname. Unlike video-generation settings such as MAGI~\citep{zhou2025taming},  which generate future frames from prior visual context alone, \sysname conditions on per-step gradients to invert private trajectories.
Because \sysname models trajectories sequentially, generating the state at time $t$ depends on the previous predictions. During the initial teacher-forced warmup, the model uses the ground-truth image context in the lagged parallel path of \Cref{eq:interleaved_sequence}. This parallelizes the forward pass and stabilizes early training by preventing compounding errors.

During training, however, teacher forcing introduces \emph{exposure bias}~\citep{ranzato2015sequence}: the model relies on an ideal historical context unavailable during inference. To address this discrepancy, we employ two sequential strategies. First, we employ \emph{scheduled sampling}~\citep{bengio2015scheduled} to gradually replace ground-truth frames with the model's own predictions when constructing future image--context tokens. 
The epoch-wise replacement probability is
{
\small
\begin{equation}
    \label{eq:scheduled_sampling}
    \beta_n =
    \begin{cases}
        0, & n \le \tau,\\
        \beta_{\mathrm{start}}
        + (\beta_{\mathrm{end}}-\beta_{\mathrm{start}})
        \dfrac{n-\tau}{N-\tau}, & n > \tau,
    \end{cases}
\end{equation}
}%
where $n$ denotes the current epoch, $\tau$ is the warmup duration, $N$ is the total number of epochs, and $(\beta_{\mathrm{start}}, \beta_{\mathrm{end}})$ are schedule endpoints. Hence, training is fully teacher-forced for $n \le \tau$, then linearly transitions to stronger self-conditioned context usage.
Second, we add an auxiliary short-horizon \emph{rollout loss}:
from a random start index, we unroll $R$ autoregressive steps, supervise the predictions, where $R$ is the rollout horizon, and feed each predicted frame back as context. Feedback frames are detached to preserve supervision under self-generated context while avoiding full gradient propagation through the feedback path. Inference follows the same sequential feedback pattern over all $T$ timesteps, yielding per-step decoding instead of a parallel full-trajectory pass.

%% file: sections/theoretical_analysis.tex
\section{Theoretical Analysis}
\label{sec:theoretical_analysis}

We focus on two questions: (i) What information is revealed by the per-step policy gradient? (ii) Why does temporal conditioning enhance trajectory reconstruction? Full proofs are deferred to \Cref{sec:theoretical_analysis_derivation}.

\subsection{Gradient Leakage and Action Recoverability}
\label{sec:gradient_leakage}

We first establish that at the broadcast checkpoint $\theta_0$, A2C~\citep{mnih2016asynchronous} and PPO~\citep{schulman2017proximal} produce the same communicated gradient, thereby yielding a clean three-term decomposition.

\begin{proposition}[A2C--PPO gradient equivalence and decomposition]
    \label{prop:local_ppo_capture}
    For each timestep $t$, there exists a neighborhood $\mathcal{U}_t$ of $\theta_0$ such that
    $\mathrm{clip}(r_t(\theta),1-\epsilon,1+\epsilon) = r_t(\theta)$ for all $\theta \in \mathcal{U}_t$.
    Consequently, $\nabla_\theta \ell^{\mathrm{pol,PPO}}_t(\theta)\big|_{\theta_0} = \nabla_\theta \ell^{\mathrm{pol,A2C}}_t(\theta)\big|_{\theta_0}$, so both algorithms produce the \emph{same} communicated gradient at $\theta_0$, which decomposes as
    {
        \small
        \begin{equation}
            \label{eq:gradient_decomposition}
            \mathbf{g}_t
            =
            \underbrace{-\hat{A}_t \nabla_\theta \log \pi_\theta(a_t \mid \mathbf{s}_t)\big|_{\theta_0}}_{\text{policy gradient}\;\mathbf{g}_t^{(\pi)}}
            \;+\;
            \underbrace{2c_v\bigl(V_{\theta_0}(\mathbf{s}_t)-\hat{R}_t\bigr)\,\nabla_\theta V_\theta(\mathbf{s}_t)\big|_{\theta_0}}_{\text{value gradient}\;\mathbf{g}_t^{(v)}}
            \;\underbrace{-\,c_e\,\nabla_\theta \mathcal{H}\!\bigl(\pi_\theta(\cdot \mid \mathbf{s}_t)\bigr)\big|_{\theta_0}}_{\text{entropy gradient}\;\mathbf{g}_t^{(e)}}.
        \end{equation}
    }
\end{proposition}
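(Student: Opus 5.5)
The plan is to work directly from the definitions in \Cref{eq:policy-loss-instances}, using only continuity and the chain rule. The key observation is that $r_t(\theta_0)=\pi_{\theta_0}(a_t\mid\mathbf{s}_t)/\pi_{\theta_0}(a_t\mid\mathbf{s}_t)=1$, which lies strictly inside $(1-\epsilon,1+\epsilon)$ for any $\epsilon>0$. For a softmax policy head with a differentiable network, $\pi_\theta(a_t\mid\mathbf{s}_t)$ is continuous in $\theta$ and strictly positive, so $r_t$ is continuous near $\theta_0$. The preimage $\mathcal{U}_t=r_t^{-1}\big((1-\epsilon,1+\epsilon)\big)$ is therefore an open set containing $\theta_0$. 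On it the clip is inactive: $\mathrm{clip}(r_t(\theta),1-\epsilon,1+\epsilon)=r_t(\theta)$. This gives the first claim.

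Next, on $\mathcal{U}_t$ both arguments of the $\min$ coincide, so $\ell^{\mathrm{pol,PPO}}_t(\theta)=-r_t(\theta)\hat{A}_t$ identically there, whatever the sign of $\hat{A}_t$. This removes any case split on the advantage. Two functions that agree on an open neighbourhood of $\theta_0$ have the same gradient at $\theta_0$, so
\[
\nabla_\theta \ell^{\mathrm{pol,PPO}}_t\big|_{\theta_0}=-\hat{A}_t\nabla_\theta r_t(\theta)\big|_{\theta_0}.
\]
Here $\hat{A}_t$ is treated as a constant computed from the rollout, which is the standard stop-gradient convention and matches how the paper defines it. Write $r_t=\exp\big(\log\pi_\theta(a_t\mid\mathbf{s}_t)-\log\pi_{\theta_0}(a_t\mid\mathbf{s}_t)\big)$. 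Then $\nabla_\theta r_t=r_t\,\nabla_\theta\log\pi_\theta(a_t\mid\mathbf{s}_t)$, and at $\theta_0$, where $r_t=1$, this reduces to $\nabla_\theta\log\pi_\theta(a_t\mid\mathbf{s}_t)|_{\theta_0}$. That is exactly the gradient of the A2C term $-\hat{A}_t\log\pi_\theta(a_t\mid\mathbf{s}_t)$ at $\theta_0$, which establishes the equivalence.

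Finally, the decomposition follows from linearity of the gradient applied to \Cref{eq:capture-loss}. The policy term contributes $\mathbf{g}_t^{(\pi)}$ as just shown. For the value term, the chain rule with $\hat{R}_t$ held fixed gives $2c_v(V_{\theta_0}(\mathbf{s}_t)-\hat{R}_t)\nabla_\theta V_\theta(\mathbf{s}_t)|_{\theta_0}$. The entropy term contributes $-c_e\nabla_\theta\mathcal{H}$ directly. These terms are identical for A2C and PPO, so the full communicated gradients $\mathbf{g}_t$ coincide.

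There is no serious obstacle. The one point to state carefully is the regularity assumption: the policy must be differentiable with $\pi_\theta(a_t\mid\mathbf{s}_t)>0$ near $\theta_0$, which holds for softmax heads. Without it, continuity of $r_t$ and differentiability of $\log\pi_\theta$ could fail. I would also state explicitly that $\hat{A}_t$, $\hat{R}_t$ and the denominator $\pi_{\theta_0}$ are constants with respect to $\theta$.
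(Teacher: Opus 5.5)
Your proposal is correct and follows essentially the same route as the paper: continuity of $r_t$ at $r_t(\theta_0)=1$ makes the clip inactive on a neighbourhood, the identity $\nabla_\theta r_t = r_t\,\nabla_\theta\log\pi_\theta(a_t\mid\mathbf{s}_t)$ at $\theta_0$ gives the gradient equivalence, and linearity of the gradient gives the three-term decomposition. Taking the open preimage $r_t^{-1}\bigl((1-\epsilon,1+\epsilon)\bigr)$ and noting that functions agreeing on an open neighbourhood share a gradient at $\theta_0$ are small tightenings of steps the paper leaves implicit.
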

This decomposition isolates three leakage channels. The policy term carries action information (\Cref{thm:action_identifiability}), while value and entropy terms encode additional state-dependent structure.

Next, we show that the action label is recoverable from the policy-head gradient component $\mathbf{g}_t^{(\pi)}$.
\begin{theorem}[Discrete-Action recovery from the policy-head gradient structure]
    \label{thm:action_identifiability}
    Let $\boldsymbol{\ell}_t = \mathbf{W}_a^\top \mathbf{h}(\mathbf{s}_t)\in\mathbb{R}^K$ be the policy logits, $\pi_\theta(\cdot\mid\mathbf{s}_t)=\mathrm{softmax}(\boldsymbol{\ell}_t)$ and $\pi_k := \pi_\theta(k\mid\mathbf{s}_t)$.
    Assume: \textbf{(A1)} the actor and critic heads share no parameters, so $V_\theta$ does not depend on $\mathbf{W}_a$; \textbf{(A2)} $\hat{A}_t \neq 0$; \textbf{(A3)} $\mathbf{h}(\mathbf{s}_t)\in\mathbb{R}_{\ge 0}^{d_h}$ with $\lVert\mathbf{h}(\mathbf{s}_t)\rVert_1>0$ (e.g., a ReLU activation); \textbf{(A4)} the policy is non-degenerate at $\mathbf{s}_t$, $\pi_{a_t} < 1$. Let $\widetilde{\sigma}_k \;:=\; \sum_{i=1}^{d_h}\bigl[\nabla_{\mathbf{W}_a}\mathcal{L}^{\mathrm{ac}}_t(\theta)\bigr]_{i,k}\Big|_{\theta_0}$ denote the column sums of the observed policy-head gradient.

    \textbf{(i) Exact recovery without entropy regularization.}
    If $c_e=0$, then $\widetilde{\sigma}_k = \sigma_k := \hat{A}_t\,\lVert\mathbf{h}(\mathbf{s}_t)\rVert_1\,\bigl(\pi_k - \mathbf{1}[k=a_t]\bigr)$, and the true action is recovered exactly by
    {
    \small
    \begin{equation}
        \label{eq:action_recovery}
        a_t = \arg\min_k \widetilde{\sigma}_k \text{ if } \hat{A}_t > 0, \qquad a_t = \arg\max_k \widetilde{\sigma}_k \text{ if } \hat{A}_t < 0.
    \end{equation}
    }%
    For $K \ge 3$, the sign-agnostic rule $a_t = \operatorname*{arg\,max}_{k}|\widetilde{\sigma}_k|$ is equivalent to \eqref{eq:action_recovery}; for $K=2$ the two magnitudes coincide and \eqref{eq:action_recovery} must be used, with $\mathrm{sign}(\hat{A}_t)$ inferred from cross-head information such as the value-head gradient (see \Cref{sec:proof_action_recovery}).

    \textbf{(ii) Robust recovery under entropy regularization.}
    Let $L_t := \max_k \ell_{t,k} - \min_k \ell_{t,k}$ denote the logit spread, $\pi_{\max} := \max_k \pi_k$, and define the \emph{identifiability gap}
    $\Delta_t \;:=\; |\hat{A}_t|\,\lVert\mathbf{h}(\mathbf{s}_t)\rVert_1\,\bigl(1 - \pi_{a_t} + \min_{k\neq a_t}\pi_k\bigr) \;>\;0,$
    which is strictly positive under (\textbf{A4}). If $c_e>0$ and
    {
    \vspace{-6pt}
    \small
    \begin{equation}
        \label{eq:entropy_safety}
        2\,c_e\, L_t\, \pi_{\max} \;<\; \frac{\Delta_t}{\lVert\mathbf{h}(\mathbf{s}_t)\rVert_1},
    \end{equation}
    }
    then \eqref{eq:action_recovery} still recovers $a_t$ exactly.
\end{theorem}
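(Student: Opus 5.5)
By \Cref{prop:local_ppo_capture}, A2C and PPO yield the same $\mathbf{g}_t$ at $\theta_0$, so it suffices to work with the decomposition \eqref{eq:gradient_decomposition}. The plan is to reduce everything to the gradient with respect to the logits. Since $\boldsymbol{\ell}_t=\mathbf{W}_a^\top\mathbf{h}(\mathbf{s}_t)$, the chain rule gives
\[
\bigl[\nabla_{\mathbf{W}_a}\mathcal{L}^{\mathrm{ac}}_t\bigr]_{i,k}=h_i(\mathbf{s}_t)\,\frac{\partial\mathcal{L}^{\mathrm{ac}}_t}{\partial \ell_{t,k}} .
\]
Summing over $i$ and using \textbf{(A3)} ($h_i\ge 0$, so $\sum_i h_i=\lVert\mathbf{h}\rVert_1$) gives $\widetilde{\sigma}_k=\lVert\mathbf{h}(\mathbf{s}_t)\rVert_1\,\partial\mathcal{L}^{\mathrm{ac}}_t/\partial\ell_{t,k}$. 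By \textbf{(A1)}, the value term contributes nothing to $\nabla_{\mathbf{W}_a}$. The policy term uses the softmax identity $\partial\log\pi_{a_t}/\partial\ell_{t,k}=\mathbf{1}[k=a_t]-\pi_k$, which yields $\hat{A}_t(\pi_k-\mathbf{1}[k=a_t])$. With $c_e=0$ this gives exactly $\widetilde{\sigma}_k=\sigma_k$.

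For part (i), take $\hat{A}_t>0$. Then $\sigma_{a_t}=\hat{A}_t\lVert\mathbf{h}\rVert_1(\pi_{a_t}-1)<0$ by \textbf{(A4)} and \textbf{(A3)}. Every other column has $\sigma_k=\hat{A}_t\lVert\mathbf{h}\rVert_1\pi_k>0$, since softmax probabilities are strictly positive. So $a_t$ is the unique $\arg\min$. The case $\hat{A}_t<0$ is symmetric, and \textbf{(A2)} excludes the degenerate case $\hat{A}_t=0$.

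For the sign-agnostic rule, note $|\sigma_{a_t}|=|\hat{A}_t|\lVert\mathbf{h}\rVert_1\sum_{j\ne a_t}\pi_j$. When $K\ge3$, this sum contains at least one strictly positive term besides any fixed $\pi_k$, so $|\sigma_{a_t}|>|\sigma_k|$ for every $k\ne a_t$. When $K=2$, we have $\sum_{j\ne a_t}\pi_j=\pi_k$ for the single other action, so the two magnitudes are equal and the sign of $\hat{A}_t$ is needed.

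For part (ii), compute the entropy logit-gradient. For $\mathcal{H}=-\sum_j\pi_j\log\pi_j$,
\[
\frac{\partial\mathcal{H}}{\partial\ell_{t,k}}=-\pi_k\bigl(\log\pi_k+\mathcal{H}\bigr),
\]
so $\widetilde{\sigma}_k=\sigma_k+c_e\lVert\mathbf{h}\rVert_1\,\pi_k(\log\pi_k+\mathcal{H})$. The key estimate is
\[
\log\pi_k+\mathcal{H}=\log\pi_k-\sum_j\pi_j\log\pi_j=\sum_j\pi_j(\ell_{t,k}-\ell_{t,j}),
\]
where the log-partition terms cancel. This is a convex combination of logit differences, so its absolute value is at most $L_t$. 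Each column is therefore perturbed by at most $\varepsilon:=c_e\lVert\mathbf{h}\rVert_1\pi_{\max}L_t$.

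Next, bound the unperturbed margin. When $\hat{A}_t>0$, for every $k\ne a_t$,
\[
\sigma_k-\sigma_{a_t}=\hat{A}_t\lVert\mathbf{h}\rVert_1\bigl(\pi_k+1-\pi_{a_t}\bigr)\ge\Delta_t .
\]
Moving two entries by at most $\varepsilon$ each changes their difference by at most $2\varepsilon$. Hence $\widetilde{\sigma}_k-\widetilde{\sigma}_{a_t}\ge\Delta_t-2\varepsilon>0$, and this is exactly condition \eqref{eq:entropy_safety} multiplied by $\lVert\mathbf{h}\rVert_1$. The case $\hat{A}_t<0$ is symmetric, with $\arg\max$ in place of $\arg\min$.

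I expect the main step to be the uniform bound $|\log\pi_k+\mathcal{H}|\le L_t$. The convex-combination rewriting in logit space handles it cleanly, whereas a direct attempt via bounds on $\log\pi_k$ would blow up for small $\pi_k$. The rest is bookkeeping of signs and checking that $\Delta_t>0$, which follows from \textbf{(A2)}--\textbf{(A4)}.
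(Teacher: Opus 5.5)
Your proposal is correct and follows essentially the same route as the paper. You reduce the $\mathbf{W}_a$-block column sums to $\lVert\mathbf{h}(\mathbf{s}_t)\rVert_1$ times the logit gradient, read off the signs (and the $K\ge 3$ magnitude comparison) for part (i), and for part (ii) bound the entropy perturbation uniformly by $c_e\lVert\mathbf{h}\rVert_1\pi_{\max}L_t$ against the margin $\Delta_t$, where your quantity $\log\pi_k+\mathcal{H}=\sum_j\pi_j(\ell_{t,k}-\ell_{t,j})$ is exactly the paper's $\ell_{t,k}-\bar{\ell}$ with $\bar{\ell}=\sum_j\pi_j\ell_{t,j}$.
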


\begin{remark}[Necessity of (\textbf{A4}) and deterministic-policy degeneracy]
    \label{rem:deterministic_policy}
    Assumption (\textbf{A4}) is essential: in the one-hot limit $\pi_{a_t}\!\to\!1$, $\pi_k\!\to\!0$ for $k\neq a_t$, \emph{every} column sum vanishes ($\sigma_k\equiv 0$), $\Delta_t = 0$, and \eqref{eq:action_recovery} is ill-posed because the $-\hat{A}_t\log\pi_\theta(a_t\mid\mathbf{s}_t)$ contribution to $\nabla_{\mathbf{W}_a}$ is zero. Thus, the action is unrecoverable from the policy-head gradient block alone for a fully deterministic policy, whereas moderately confident PPO/A2C policies keep $\Delta_t$ bounded away from zero.
\end{remark}

\vspace{-4pt}
\Cref{thm:action_identifiability} gives a closed-form, learning-free recovery rule and shows that it remains exact when entropy regularization is small relative to the identifiability gap. Our experiments use $K{=}5$ actions, placing them in the $K\ge3$ regime. The near-perfect action recovery in our experiments indicates that the evaluated checkpoints avoid the degenerate regime of \Cref{rem:deterministic_policy}; by \Cref{prop:local_ppo_capture}, the same advantage-weighted policy-gradient structure remains identical for both A2C and PPO at $\theta_0$.

\subsection{Temporal Conditioning Benefit}
\label{sec:temporal_benefit}

We formalize the non-negativity of temporal gain using a mutual-information analysis.

\begin{definition}[Temporal-context gain]
    \label{def:info_gain}
    The temporal-context gain at step $t$ is the conditional mutual information:
    {
    \small
    \begin{equation}
        \label{eq:info_gain}
        \Delta I_t \;:=\; I(\mathbf{s}_t;\, \mathbf{g}_{1:t-1} \mid \mathbf{g}_t) \;=\; H(\mathbf{s}_t \mid \mathbf{g}_t) - H(\mathbf{s}_t \mid \mathbf{g}_{1:t}) \ge 0,
    \end{equation}
    }
    i.e., the residual entropy of $\mathbf{s}_t$, after the current gradient $\mathbf{g}_t$ has been observed, that is resolved by additionally conditioning on the past gradient stream $\mathbf{g}_{1:t-1}$.
\end{definition}

We first consider a tractable special case: under a Markov victim policy and one-step (TD($0$)) advantage--return estimators with stop-gradient on the bootstrap value, $\mathbf{g}_t$ depends on the trajectory only through $(\mathbf{s}_t, a_t)$ and one-step noise, giving $\mathbf{g}_t \perp\!\!\!\perp \mathbf{g}_{1:t-1} \mid \mathbf{s}_t$ (formalized as \Cref{lem:gradient_sufficiency}). Generalized Advantage Estimation (GAE) with $\lambda{>}0$ and rollout-level normalization fall outside this case. The resulting scaling, which depends on the specific algorithm, is reported empirically in \Cref{sec:evaluation}.

\begin{theorem}[Temporal reconstruction gain]
    \label{thm:temporal_gain}
    Let $(\mathbf{s}_1, \ldots, \mathbf{s}_T)$ be a Markov Decision Process (MDP) trajectory with transition kernel {\small$\mathbf{s}_{t+1} \sim P(\cdot \mid \mathbf{s}_t, a_t)$}, and let {\small$\mathrm{MSE}^*(Y) := \min_{f}\mathbb{E}\!\left[\lVert\mathbf{s}_t - f(Y)\rVert_2^2\right]$} denote the Bayes-optimal mean squared reconstruction error of $\mathbf{s}_t$ from observation $Y$. 
    
    \textbf{(i) Non-negativity:} {\small$\Delta I_t \geq 0$} for all $t$, by non-negativity of conditional mutual information. By the Law of Total Variance, {\small$\mathrm{MSE}^*(\mathbf{g}_{1:t}) \leq \mathrm{MSE}^*(\mathbf{g}_t)$}, i.e., temporal conditioning cannot hurt. 
    
    \textbf{(ii) Exact identity under conditional sufficiency:} under \Cref{lem:gradient_sufficiency} (Markov victim policy and causally adapted, un-normalized one-step advantage/return estimators), and condition throughout on $\theta_0$. By \Cref{lem:gradient_sufficiency} we have the Markov chain $\mathbf{g}_{1:t-1} \;\to\; \mathbf{s}_t \;\to\; \mathbf{g}_t,$ and therefore
    {
        \small
        \begin{equation}
            \label{eq:temporal_identity}
            \Delta I_t
            \;=\;
            I(\mathbf{s}_t;\, \mathbf{g}_{1:t-1})
            \;-\;
            I(\mathbf{g}_t;\, \mathbf{g}_{1:t-1}) \;\leq\; I(\mathbf{s}_t;\, \mathbf{g}_{1:t-1})
        \end{equation}
    }%
    Thus, temporal conditioning helps exactly to the extent that the past gradient stream contains information about $\mathbf{s}_t$ that is not already captured by the current gradient $\mathbf{g}_t$.
\end{theorem}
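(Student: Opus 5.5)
The plan is to prove the two parts separately with standard information-theoretic identities, taking the conditional independence of \Cref{lem:gradient_sufficiency} as given for part (ii).

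\textbf{Part (i).} Non-negativity of $\Delta I_t$ follows because conditional mutual information equals an expected KL divergence, $\Delta I_t = \mathbb{E}_{\mathbf{g}_t}\bigl[D_{\mathrm{KL}}(p(\mathbf{s}_t,\mathbf{g}_{1:t-1}\mid\mathbf{g}_t)\,\Vert\, p(\mathbf{s}_t\mid\mathbf{g}_t)p(\mathbf{g}_{1:t-1}\mid\mathbf{g}_t))\bigr]\ge 0$. Since the observation space is continuous, I would read $H$ as differential entropy where the densities exist. The mutual-information form is safer, because it stays non-negative even when differential entropies are infinite.

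For the MSE claim:
\begin{itemize}
\item The Bayes-optimal estimator from $Y$ is $f^*(Y)=\mathbb{E}[\mathbf{s}_t\mid Y]$, so $\mathrm{MSE}^*(Y)=\mathbb{E}\,\mathrm{tr}\,\mathrm{Cov}(\mathbf{s}_t\mid Y)$. This requires $\mathbb{E}\lVert\mathbf{s}_t\rVert^2<\infty$, which holds trivially since $\mathcal{S}=[0,1]^{3\times H_I\times W_I}$ is bounded.
\item Apply the law of total variance conditionally on $\mathbf{g}_t$, with the finer $\sigma$-algebra generated by $\mathbf{g}_{1:t}$:
\[
\mathrm{Cov}(\mathbf{s}_t\mid\mathbf{g}_t)=\mathbb{E}[\mathrm{Cov}(\mathbf{s}_t\mid\mathbf{g}_{1:t})\mid\mathbf{g}_t]+\mathrm{Cov}(\mathbb{E}[\mathbf{s}_t\mid\mathbf{g}_{1:t}]\mid\mathbf{g}_t).
\]
\item Take traces and expectations. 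The second term is positive semidefinite, which gives $\mathrm{MSE}^*(\mathbf{g}_{1:t})\le\mathrm{MSE}^*(\mathbf{g}_t)$.
\item Equivalently, any function of $\mathbf{g}_t$ is also a function of $\mathbf{g}_{1:t}$, so the minimum over the larger class can only be smaller.
\end{itemize}

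\textbf{Part (ii).} The key step is the chain-rule expansion of $I(\mathbf{s}_t,\mathbf{g}_t;\mathbf{g}_{1:t-1})$ in two orders:
\[
I(\mathbf{g}_t;\mathbf{g}_{1:t-1})+I(\mathbf{s}_t;\mathbf{g}_{1:t-1}\mid\mathbf{g}_t)=I(\mathbf{s}_t;\mathbf{g}_{1:t-1})+I(\mathbf{g}_t;\mathbf{g}_{1:t-1}\mid\mathbf{s}_t).
\]
By \Cref{lem:gradient_sufficiency}, conditional on $\theta_0$, we have $\mathbf{g}_t\perp\!\!\!\perp\mathbf{g}_{1:t-1}\mid\mathbf{s}_t$, so the last term vanishes. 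Rearranging gives $\Delta I_t=I(\mathbf{s}_t;\mathbf{g}_{1:t-1})-I(\mathbf{g}_t;\mathbf{g}_{1:t-1})$. The inequality then follows because $I(\mathbf{g}_t;\mathbf{g}_{1:t-1})\ge0$. As a consistency check, the data-processing inequality on the chain $\mathbf{g}_{1:t-1}\to\mathbf{s}_t\to\mathbf{g}_t$ gives $I(\mathbf{g}_t;\mathbf{g}_{1:t-1})\le I(\mathbf{s}_t;\mathbf{g}_{1:t-1})$, so the right-hand side is non-negative, in agreement with part (i).

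\textbf{Main obstacle.} I expect the difficulty to lie in the hypotheses rather than the algebra. The conditional independence in \Cref{lem:gradient_sufficiency} needs the following, all conditioned on $\theta_0$:
\begin{itemize}
\item $a_t$ depends only on $\mathbf{s}_t$, by the Markov policy.
\item The one-step noise (the reward and the bootstrap $\mathbf{s}_{t+1}$ entering $\hat{A}_t,\hat{R}_t$) is conditionally independent of the past given $(\mathbf{s}_t,a_t)$, which follows from the MDP kernel.
\item There is no rollout-level normalization coupling timesteps.
\end{itemize}
I would take the lemma as given, and state that all mutual informations are finite so that the subtraction in \eqref{eq:temporal_identity} is well-defined. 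If they are infinite, I would state only the chain-rule sum identity.
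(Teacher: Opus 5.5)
Your proposal is correct and follows essentially the same route as the paper. For part (i) the paper also uses the law of total variance, applied coordinate-wise rather than through the trace of the conditional covariance. For part (ii) it invokes $I(X;Y\mid Z)=I(X;Y)-I(Y;Z)+I(Y;Z\mid X)$ with $(X,Y,Z)=(\mathbf{s}_t,\mathbf{g}_{1:t-1},\mathbf{g}_t)$, which is exactly your two-order chain-rule expansion of $I(\mathbf{s}_t,\mathbf{g}_t;\mathbf{g}_{1:t-1})$, with the last term removed by \Cref{lem:gradient_sufficiency}. Your points about bounded second moments and finite mutual informations (so the subtraction is well-defined) are sound refinements that the paper leaves implicit.
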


\vspace{-2pt}
Part~(ii) is exact only under the conditions of \Cref{lem:gradient_sufficiency}. Both GAE with $\lambda > 0$ and rollout-level normalization violate these conditions by introducing cross-temporal coupling; the scaling behavior observed under the practical PPO and A2C estimators is therefore discussed informally in \Cref{rem:diminishing_returns} and validated empirically in \Cref{sec:evaluation}.

%% file: sections/evaluation.tex
\section{Experimental Evaluation}
\label{sec:evaluation}

\textbf{Experimental Setup.}
We evaluate \sysname against victim policies trained on a point-goal navigation task in the AI2-THOR simulator~\citep{kolve2017ai2}. Unless otherwise specified, the victim observes RGB frames downsampled to $84 \times 84$ and acts in a five-action discrete space: \texttt{MoveAhead} (with a grid step size of 0.25 meters), \texttt{RotateLeft} ($15^\circ$), \texttt{RotateRight} ($15^\circ$), \texttt{LookDown} ($15^\circ$), and \texttt{LookUp} ($15^\circ$).
The default victim policy is implemented as a shared-CNN actor-critic~\citep{konda1999actor} with separate policy and value heads. It is trained with either PPO~\citep{schulman2017proximal} or A2C~\citep{mnih2016asynchronous}, employing GAE advantages as formalized in \Cref{sec:problem_formulation}. Both methods employ the same critic, entropy regularization, and gradient-norm bound. Unless otherwise indicated, we report mean $\pm$ one standard deviation per image.
We defer the full definition of evaluation metrics to \Cref{sec:evaluation_metrics} with additional results to \Cref{sec:add_experimental_details}.

\subsection{Comparison with Baselines}
\label{sec:baselines}

\Cref{fig:inversion_ppo} shows an example of trajectory inversion result generated by \sysname\ using $T{=}8$ frames against PPO-based victim, where $T$ is the trajectory length.

\begin{figure}[t]
    \centering
    \includegraphics[width=.95\linewidth]{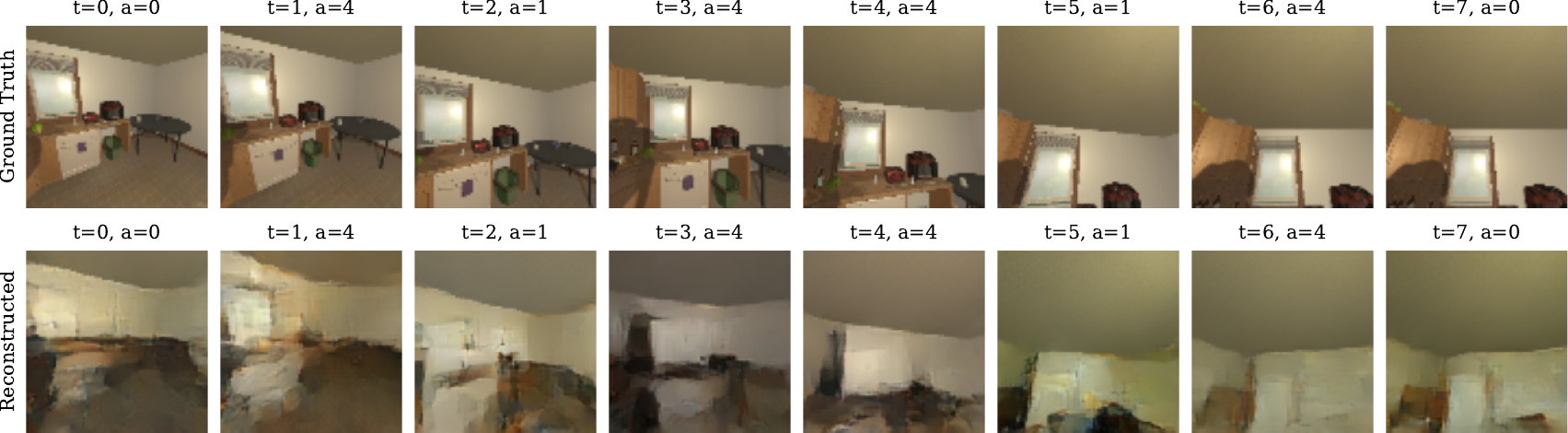}
    \vspace{-2pt}
    \caption{Ground-truth and \sysname reconstructions over $T{=}8$ PPO trajectory steps.}
    \label{fig:inversion_ppo}
    \vspace{-12pt}
\end{figure}

We further compare \sysname against DLG~\citep{zhu2019deep}\footnote{\url{https://github.com/mit-han-lab/dlg}}, Inverting Gradients (IG)~\citep{geiping2020inverting}\footnote{\url{https://github.com/JonasGeiping/invertinggradients}}, and Learning to Invert (LtI)~\citep{wu2023learning}\footnote{\url{https://github.com/wrh14/Learning_to_Invert}}. Because these baselines are designed for supervised classification, we adapt them to our RL setting by replacing the classification loss with a simplified actor-critic surrogate: policy cross-entropy combined with value regression, using fixed coefficients.
DLG jointly optimizes a dummy image and soft action distribution via L-BFGS for $300$ iterations per step. IG first recovers the discrete action analytically from the policy-head gradient (analogous to iDLG~\citep{zhao2020idlg}), then optimizes a Gaussian-initialized dummy image with Adam for $4{,}800$ iterations, minimizing gradient cosine similarity with anisotropic TV regularization and clamping to valid pixel bounds. LtI trains an MLP on state-gradient pairs to directly map randomly sub-sampled gradients to reconstructed images and discrete actions.
All methods are evaluated on 100 held-out trajectories (with $T{=}8$) for both PPO and A2C victims using consistent metrics.

\input{tables/baselines}

Table~\ref{tab:baselines} shows that \sysname attains the best performance across all reconstruction metrics under both PPO and A2C victims. Optimization-based baselines perform poorly: under PPO, DLG, and IG, PSNR reaches only $5.47$ and $8.73$\,dB, with action accuracies far below practical usefulness, and the trend persists under A2C ($6.11$ and $9.51$\,dB). LtI improves substantially, attaining $16.79$\,dB (PPO) and $17.65$\,dB (A2C) PSNR with near-perfect action accuracy, consistent with \Cref{thm:action_identifiability}. Adding the causal temporal transformer, \sysname further reduces MSE from $0.023$ to $0.014$ (PPO) and $0.018$ to $0.014$ (A2C) over LtI, gains $\sim 2$ and $\sim 1$\,dB PSNR respectively, and nearly halves LPIPS in both cases ($0.671$ to $0.362$ for PPO, $0.639$ to $0.358$ for A2C), confirming that exploiting inter-step dependencies yields superior fidelity. We note that \sysname at $T{=}1$ ($12.85$\,dB, \Cref{tab:ablations}) underperforms LtI's single-frame result, indicating its advantage comes from temporal conditioning rather than a stronger per-frame extractor, consistent with \Cref{thm:temporal_gain}. Both learned approaches run in milliseconds per frame, compared with tens of seconds for optimization-based methods, enabling near real-time inversion once the attack model is trained.

We report additional evaluations across victim architectures, multi-modal inputs, and larger action spaces in \Cref{sec:victim_variants}. We also examine four- and eight-step gradient averaging in \Cref{sec:aggregation} and auxiliary-data scaling in \Cref{sec:auxiliary_scaling}.

\subsection{Few-Shot Adaptation}
\label{sec:few_shot_adaptation}

\input{tables/ppo_adaptation}
To evaluate cross-scene robustness, we fine-tune the attacker on $p \in \{0, 10, 20, 30, 40, 50\}\%$ of target-domain trajectories, measuring how quickly leakage transfers and how much held-out data is needed for high-fidelity reconstruction.
As shown in \Cref{tab:ppo_fewshot}, the zero-shot attack provides a strong baseline ($p=0$) on held-out scenes, with perfect action recovery and reasonable structural fidelity (SSIM $= 0.627$, PSNR $= 18.8$ dB). Even minimal adaptation rapidly improves perception. Fine-tuning with $10\%$ of target trajectories increases PSNR by $2.1$ dB and lowers LPIPS to $0.273$. With a $50\%$ budget, reconstruction MSE drops to $0.005$, and SSIM exceeds $0.74$. These results show that \sysname transfers to unseen scenes without target-domain fine-tuning, while a small domain-specific set substantially reduces visual discrepancy, enabling high-fidelity reconstructions.
\subsection{Defense Evaluation}
\label{sec:defense_evaluation}

We evaluate the robustness of \sysname against four gradient defense mechanisms---quantization~\citep{alistarh2017qsgd}, magnitude-based pruning, additive Gaussian noise, and DP-SGD~\citep{abadi2016deep}---under both PPO and A2C victim models, with results shown in \Cref{tab:defenses}. 

\input{tables/defense}

The \emph{No Defense} row provides the clean reference under this defense-evaluation protocol. The attacker is evaluated in a \emph{zero-shot} model, i.e., trained on clean gradients and applied directly to defended ones. The trends are consistent across both algorithms. Pruning is largely ineffective: retaining only $10\%$ of gradient entries leaves PPO reconstruction nearly intact (PSNR $= 18.8$, Act.\ Acc.\ $= 99.9\%$), with A2C showing only a modest action accuracy reduction to $89.0\%$. Quantization to $8$-bit is similarly benign, but $4$-bit quantization degrades fidelity substantially (PPO PSNR drops to $12.4$, A2C to $13.3$), and at $2$-bit the attack approaches random-chance action recovery for PPO ($19.4\%$). Additive noise impedes reconstruction only at high variance ($\sigma{=}0.1$), where PSNR falls to $14.2$ (PPO) and $16.5$ (A2C) with action accuracy declining to $60$--$64\%$. DP-SGD offers the strongest protection: across all evaluated privacy budgets ($\varepsilon \in \{1, 5, 10\}$), SSIM drops to approximately $0.34$ (PPO) and $0.32$ (A2C), and action accuracy degrades to near-random levels (${\sim}20\%$). We do not assess downstream navigation utility under these defenses, as privacy-utility trade-offs require a separate analysis.

\subsection{Ablations}
\label{sec:ablations}

We conduct three ablation studies: (i) varying the trajectory length, (ii) restricting the attack to gradients originating from specific architectural components of the victim, and (iii) removing individual terms from the training objective. To isolate the effect of temporal modeling, we employ a reduced-capacity encoder by halving the hidden dimension from $d_{\mathrm{enc}}{=}2048$ to $d_{\mathrm{enc}}{=}1024$. This modification approximately halves the number of encoder parameters while leaving the transformer and decoder components unchanged. For the \textit{Comp.} (gradients from a single victim component) and \textit{Loss (w/o)} (one loss term removed) experiments, the trajectory length is fixed at $T = 8$. The corresponding results are presented in \Cref{tab:ablations}.

\input{tables/ppo_ablation}\vspace{-2pt}
\textbf{Trajectory Length ($T$).} Extending the horizon from $T{=}1$ to $T{=}8$ yields a $\sim 6$\,dB PSNR gain (from $12.85$ to $18.74$) and roughly halves LPIPS (from $0.781$ to $0.375$). Further extension continues to help modestly: $T{=}64$ reaches $20.29$\,dB PSNR and SSIM $0.666$. This trend in sequence length further supports the temporal conditioning mechanism formalized in \Cref{thm:temporal_gain}. We adopt $T{=}8$ as a balance between temporal context and optimization complexity.\footnote{Varying $T$ results in different total frame counts and, for $T{=}64$ the method transitions to RoPE~\citep{su2024roformer} (see ~\Cref{sec:attacker_impl} for implementation details). Nevertheless, the observed trend remains consistent for both PPO and A2C (see \Cref{tab:ablations_a2c}).}

\vspace{-2pt}
\textbf{Gradient Components.} Restricting the attack to the terminal policy/value \emph{heads} sharply degrades visual fidelity (PSNR $= 14.12$, LPIPS $= 0.511$), confirming that high-frequency structural cues reside in the earlier CNN and FC layers. Action recovery, however, stays near-perfect ($\geq 99.6\%$) across all components, indicating that behavioral signals are distributed throughout the network.

\textbf{Objective Composition.} Removing the action loss (`w/o Act.') collapses action accuracy to near-random ($19.0\%$) without improving visual fidelity. Dropping LPIPS marginally raises PSNR to $18.94$ but inflates LPIPS to $0.401$, while removing $L_1$ degrades both PSNR ($17.90$) and LPIPS ($0.409$). These trade-offs highlight the need for a multi-objective formulation balancing pixel-level distortion and perceptual similarity.

%% file: tables/baselines.tex
\begin{table}[htbp]
    \centering
    \setlength{\tabcolsep}{2pt}
    \vspace{-10pt}
    \caption{Comparison of \sysname with baselines under PPO and A2C, \emph{Act. Acc.} denotes action accuracy and \emph{Infer.} denotes inference time per frame/image.}
    \label{tab:baselines}
    \small
    \resizebox{\linewidth}{!}{%
        \begin{tabular}{@{}lccccccgggggg@{}}
        \toprule
        & \multicolumn{6}{c}{\textbf{PPO}} & \multicolumn{6}{c}{\textbf{A2C}} \\
        \cmidrule(lr){2-7} \cmidrule(lr){8-13}
        \textbf{Method}
        & \textbf{MSE}$\downarrow$
        & \textbf{PSNR}$\uparrow$
        & \textbf{SSIM}$\uparrow$
        & \textbf{LPIPS}$\downarrow$
        & \textbf{Act. Acc.}$\uparrow$
        & \textbf{Infer.}
        & \textbf{MSE}$\downarrow$
        & \textbf{PSNR}$\uparrow$
        & \textbf{SSIM}$\uparrow$
        & \textbf{LPIPS}$\downarrow$
        & \textbf{Act. Acc.}$\uparrow$
        & \textbf{Infer.} \\
        \midrule
        DLG~\citep{zhu2019deep}
        & 0.289 $\pm$ 0.052 & 5.474 $\pm$ 0.826 & 0.043 $\pm$ 0.012 & 1.224 $\pm$ 0.096 & 16.6 & 31.0s
        & 0.259 $\pm$ 0.072 & 6.106 $\pm$ 1.544 & 0.036 $\pm$ 0.006 & 1.184 $\pm$ 0.127 & 51.0 & 76.1s \\
        IG~\citep{geiping2020inverting}
        & 0.161 $\pm$ 0.089 & 8.727 $\pm$ 2.791 & 0.166 $\pm$ 0.087 & 0.766 $\pm$ 0.110 & 31.5 & 39.3s
        & 0.114 $\pm$ 0.021 & 9.511 $\pm$ 0.810 & 0.093 $\pm$ 0.023 & 0.686 $\pm$ 0.038 & 32.5 & 39.6s \\
        LtI~\citep{wu2023learning}
        & 0.023 $\pm$ 0.009 & 16.790 $\pm$ 1.708 & 0.529 $\pm$ 0.097 & 0.671 $\pm$ 0.092 & \textbf{100.0} & 1.2ms
        & 0.018 $\pm$ 0.006 & 17.652 $\pm$ 1.366 & 0.508 $\pm$ 0.062 & 0.639 $\pm$ 0.071 & \textbf{100.0} & 0.2ms \\
        \midrule
        \textbf{\sysname}
        & \textbf{0.014} $\pm$ 0.007 & \textbf{18.771} $\pm$ 1.778 & \textbf{0.627} $\pm$ 0.105 & \textbf{0.362} $\pm$ 0.070 & \textbf{100.0} & 4.5ms
        & \textbf{0.014} $\pm$ 0.005 & \textbf{18.873} $\pm$ 1.391 & \textbf{0.583} $\pm$ 0.061 & \textbf{0.358} $\pm$ 0.060 & \textbf{100.0} & 3.0ms \\
        \bottomrule
        \end{tabular}%
    }
    \vspace{-4pt}
\end{table}

%% file: tables/ppo_adaptation.tex
\begin{wraptable}{r}{0.5\textwidth}
    \centering
    \setlength{\tabcolsep}{2pt}
    \vspace{-14pt}
    \caption{Few-shot adaptation on held-out scenes (PPO). See \Cref{sec:ext_fewshot_adaptation} for A2C results.}
    \vspace{-3pt}
    \label{tab:ppo_fewshot}
    \rowcolors{1}{gray!10}{gray!10}
    \resizebox{\linewidth}{!}{%
        \begin{tabular}{@{}cccccc@{}}
            \toprule
            \hiderowcolors
            $p$ (\%) & \textbf{MSE}$\downarrow$ & \textbf{PSNR}$\uparrow$ & \textbf{SSIM}$\uparrow$ & \textbf{LPIPS}$\downarrow$ & \textbf{Act. Acc.}$\uparrow$ \\
            \midrule
            \showrowcolors
            0 & $0.014 {\scriptstyle \pm 0.007}$ & $18.8 {\scriptstyle \pm 1.8}$ & $0.627 {\scriptstyle \pm 0.105}$ & $0.362 {\scriptstyle \pm 0.070}$ & $100.0\%$ \\
            \hiderowcolors
            10 & $0.009 {\scriptstyle \pm 0.004}$ & $20.9 {\scriptstyle \pm 1.8}$ & $0.671 {\scriptstyle \pm 0.096}$ & $0.273 {\scriptstyle \pm 0.077}$ & $100.0\%$ \\
            \showrowcolors
            20 & $0.007 {\scriptstyle \pm 0.003}$ & $22.1 {\scriptstyle \pm 2.0}$ & $0.697 {\scriptstyle \pm 0.095}$ & $0.224 {\scriptstyle \pm 0.073}$ & $100.0\%$ \\
            \hiderowcolors
            30 & $0.006 {\scriptstyle \pm 0.003}$ & $23.0 {\scriptstyle \pm 2.1}$ & $0.717 {\scriptstyle \pm 0.092}$ & $0.186 {\scriptstyle \pm 0.063}$ & $100.0\%$ \\
            \showrowcolors
            40 & $0.005 {\scriptstyle \pm 0.003}$ & $23.4 {\scriptstyle \pm 2.2}$ & $0.726 {\scriptstyle \pm 0.095}$ & $0.175 {\scriptstyle \pm 0.064}$ & $100.0\%$ \\
            \hiderowcolors
            50 & $0.005 {\scriptstyle \pm 0.002}$ & $23.8 {\scriptstyle \pm 2.1}$ & $0.740 {\scriptstyle \pm 0.092}$ & $0.161 {\scriptstyle \pm 0.061}$ & $100.0\%$ \\
            \bottomrule
        \end{tabular}
    }
    \vspace{-10pt}
\end{wraptable}

%% file: tables/defense.tex
\begin{table}[htbp]
    \centering
    \setlength{\tabcolsep}{2pt}
    \vspace{-15pt}
    \caption{Evaluation of robustness of defenses under PPO and A2C.}
    \label{tab:defenses}
    \small
    \resizebox{\linewidth}{!}{%
        \begin{tabular}{@{}llcccccggggg@{}}
        \toprule
        & & \multicolumn{5}{c}{\textbf{PPO}} & \multicolumn{5}{c}{\textbf{A2C}} \\
        \cmidrule(lr){3-7} \cmidrule(lr){8-12}
        \textbf{Defense} & \textbf{Setting}
        & \textbf{MSE}$\downarrow$
        & \textbf{PSNR}$\uparrow$
        & \textbf{SSIM}$\uparrow$
        & \textbf{LPIPS}$\downarrow$
        & \textbf{Act. Acc.}$\uparrow$
        & \textbf{MSE}$\downarrow$
        & \textbf{PSNR}$\uparrow$
        & \textbf{SSIM}$\uparrow$
        & \textbf{LPIPS}$\downarrow$
        & \textbf{Act. Acc.}$\uparrow$ \\
        \midrule
        No Defense & --- & $0.018 {\scriptstyle \pm 0.014}$ & $18.9 {\scriptstyle \pm 3.7}$ & $0.628 {\scriptstyle \pm 0.143}$ & $0.374 {\scriptstyle \pm 0.090}$ & $99.9\%$ & $0.015 {\scriptstyle \pm 0.007}$ & $18.4 {\scriptstyle \pm 1.5}$ & $0.530 {\scriptstyle \pm 0.068}$ & $0.373 {\scriptstyle \pm 0.053}$ & $99.1\%$ \\
        \midrule
        \multirow{3}{*}{\makecell{Quantization\\(bits)}}
          & Mild (8-bit) & $0.020 {\scriptstyle \pm 0.017}$ & $18.5 {\scriptstyle \pm 3.9}$ & $0.619 {\scriptstyle \pm 0.148}$ & $0.384 {\scriptstyle \pm 0.095}$ & $99.9\%$ & $0.016 {\scriptstyle \pm 0.007}$ & $18.4 {\scriptstyle \pm 1.5}$ & $0.529 {\scriptstyle \pm 0.068}$ & $0.374 {\scriptstyle \pm 0.052}$ & $98.3\%$ \\
          & 4-bit & $0.075 {\scriptstyle \pm 0.056}$ & $12.4 {\scriptstyle \pm 3.2}$ & $0.422 {\scriptstyle \pm 0.127}$ & $0.645 {\scriptstyle \pm 0.104}$ & $37.9\%$ & $0.057 {\scriptstyle \pm 0.042}$ & $13.3 {\scriptstyle \pm 2.7}$ & $0.428 {\scriptstyle \pm 0.070}$ & $0.556 {\scriptstyle \pm 0.101}$ & $50.2\%$ \\
          & 2-bit & $0.091 {\scriptstyle \pm 0.063}$ & $11.3 {\scriptstyle \pm 2.8}$ & $0.380 {\scriptstyle \pm 0.093}$ & $0.689 {\scriptstyle \pm 0.094}$ & $19.4\%$ & $0.078 {\scriptstyle \pm 0.031}$ & $11.5 {\scriptstyle \pm 1.8}$ & $0.404 {\scriptstyle \pm 0.052}$ & $0.653 {\scriptstyle \pm 0.054}$ & $41.4\%$ \\
        \midrule
        \multirow{3}{*}{\makecell{Pruning\\(keep~\%)}}
          & 90\% & $0.018 {\scriptstyle \pm 0.014}$ & $18.9 {\scriptstyle \pm 3.7}$ & $0.628 {\scriptstyle \pm 0.143}$ & $0.374 {\scriptstyle \pm 0.090}$ & $99.9\%$ & $0.015 {\scriptstyle \pm 0.007}$ & $18.4 {\scriptstyle \pm 1.5}$ & $0.530 {\scriptstyle \pm 0.068}$ & $0.373 {\scriptstyle \pm 0.053}$ & $99.1\%$ \\
          & 50\% & $0.018 {\scriptstyle \pm 0.014}$ & $18.9 {\scriptstyle \pm 3.7}$ & $0.628 {\scriptstyle \pm 0.143}$ & $0.374 {\scriptstyle \pm 0.090}$ & $99.9\%$ & $0.015 {\scriptstyle \pm 0.007}$ & $18.4 {\scriptstyle \pm 1.5}$ & $0.530 {\scriptstyle \pm 0.068}$ & $0.373 {\scriptstyle \pm 0.053}$ & $99.1\%$ \\
          & 10\% & $0.018 {\scriptstyle \pm 0.015}$ & $18.8 {\scriptstyle \pm 3.8}$ & $0.624 {\scriptstyle \pm 0.146}$ & $0.371 {\scriptstyle \pm 0.091}$ & $99.9\%$ & $0.017 {\scriptstyle \pm 0.008}$ & $18.0 {\scriptstyle \pm 1.7}$ & $0.517 {\scriptstyle \pm 0.069}$ & $0.383 {\scriptstyle \pm 0.056}$ & $89.0\%$ \\
        \midrule
        \multirow{3}{*}{\makecell{Noise\\($\sigma$)}}
          & $0.001$ & $0.018 {\scriptstyle \pm 0.014}$ & $18.8 {\scriptstyle \pm 3.8}$ & $0.626 {\scriptstyle \pm 0.145}$ & $0.376 {\scriptstyle \pm 0.092}$ & $99.8\%$ & $0.015 {\scriptstyle \pm 0.007}$ & $18.4 {\scriptstyle \pm 1.5}$ & $0.530 {\scriptstyle \pm 0.068}$ & $0.373 {\scriptstyle \pm 0.053}$ & $98.8\%$ \\
          & $0.01$ & $0.023 {\scriptstyle \pm 0.022}$ & $18.1 {\scriptstyle \pm 4.1}$ & $0.599 {\scriptstyle \pm 0.160}$ & $0.401 {\scriptstyle \pm 0.109}$ & $94.9\%$ & $0.016 {\scriptstyle \pm 0.007}$ & $18.3 {\scriptstyle \pm 1.6}$ & $0.525 {\scriptstyle \pm 0.069}$ & $0.377 {\scriptstyle \pm 0.054}$ & $96.2\%$ \\
          & $0.1$ & $0.049 {\scriptstyle \pm 0.034}$ & $14.2 {\scriptstyle \pm 3.3}$ & $0.438 {\scriptstyle \pm 0.148}$ & $0.572 {\scriptstyle \pm 0.111}$ & $59.9\%$ & $0.026 {\scriptstyle \pm 0.016}$ & $16.5 {\scriptstyle \pm 2.3}$ & $0.452 {\scriptstyle \pm 0.092}$ & $0.440 {\scriptstyle \pm 0.081}$ & $64.1\%$ \\
        \midrule
        \multirow{3}{*}{\makecell{DP-SGD\\($\varepsilon, \delta$)}}
          & ($10$, $10^{-5}$) & $0.069 {\scriptstyle \pm 0.037}$ & $12.2 {\scriptstyle \pm 2.2}$ & $0.345 {\scriptstyle \pm 0.085}$ & $0.658 {\scriptstyle \pm 0.085}$ & $21.4\%$ & $0.050 {\scriptstyle \pm 0.018}$ & $13.3 {\scriptstyle \pm 1.5}$ & $0.322 {\scriptstyle \pm 0.053}$ & $0.563 {\scriptstyle \pm 0.068}$ & $27.4\%$ \\
          & ($5$, $10^{-5}$) & $0.069 {\scriptstyle \pm 0.035}$ & $12.1 {\scriptstyle \pm 2.1}$ & $0.342 {\scriptstyle \pm 0.084}$ & $0.663 {\scriptstyle \pm 0.082}$ & $21.4\%$ & $0.050 {\scriptstyle \pm 0.018}$ & $13.2 {\scriptstyle \pm 1.4}$ & $0.319 {\scriptstyle \pm 0.051}$ & $0.559 {\scriptstyle \pm 0.068}$ & $23.7\%$ \\
          & ($1$, $10^{-5}$) & $0.071 {\scriptstyle \pm 0.039}$ & $12.0 {\scriptstyle \pm 2.2}$ & $0.346 {\scriptstyle \pm 0.084}$ & $0.658 {\scriptstyle \pm 0.084}$ & $17.6\%$ & $0.049 {\scriptstyle \pm 0.017}$ & $13.3 {\scriptstyle \pm 1.4}$ & $0.322 {\scriptstyle \pm 0.053}$ & $0.562 {\scriptstyle \pm 0.067}$ & $24.5\%$ \\
        \bottomrule
        \end{tabular}%
    }
    \vspace{-6pt}
\end{table}

%% file: tables/ppo_ablation.tex
\begin{wraptable}{r}{0.55\textwidth}
    \centering
    \setlength{\tabcolsep}{2.5pt}
    \vspace{-12pt}
    \caption{Ablation study conducted on the held-out set (PPO). See \Cref{sec:ext_ablations} for A2C results.}
    \label{tab:ablations}
    \vspace{-4pt}
    \resizebox{\linewidth}{!}{%
        \begin{tabular}{@{}llccccc@{}}
            \toprule
             & \textbf{Settings} & \textbf{MSE}$\downarrow$ & \textbf{PSNR}$\uparrow$ & \textbf{SSIM}$\uparrow$ & \textbf{LPIPS}$\downarrow$ & \textbf{Act. Acc.}$\uparrow$ \\
            \midrule
            \multirow{6}{*}{\rotatebox{90}{$T$}}
                & 1 & $0.055 {\scriptstyle \pm 0.020}$ & $12.85 {\scriptstyle \pm 1.52}$ & $0.513 {\scriptstyle \pm 0.078}$ & $0.781 {\scriptstyle \pm 0.048}$ & 100.0\% \\
                & 8 & $0.014 {\scriptstyle \pm 0.006}$ & $18.74 {\scriptstyle \pm 1.73}$ & $0.617 {\scriptstyle \pm 0.108}$ & $0.375 {\scriptstyle \pm 0.071}$ & 100.0\% \\
                & 16 & $0.015 {\scriptstyle \pm 0.007}$ & $18.57 {\scriptstyle \pm 1.78}$ & $0.614 {\scriptstyle \pm 0.109}$ & $0.395 {\scriptstyle \pm 0.070}$ & 100.0\% \\
                & 32 & $0.014 {\scriptstyle \pm 0.010}$ & $19.62 {\scriptstyle \pm 3.08}$ & $0.634 {\scriptstyle \pm 0.131}$ & $0.377 {\scriptstyle \pm 0.078}$ & 99.8\% \\
                & 64 & $0.014 {\scriptstyle \pm 0.012}$ & $20.29 {\scriptstyle \pm 4.25}$ & $0.666 {\scriptstyle \pm 0.141}$ & $0.375 {\scriptstyle \pm 0.091}$ & 99.9\% \\
            \midrule
            \multirow{3}{*}{\rotatebox{90}{Comp.}}
                & \cellcolor{gray!10}{CNN} & \cellcolor{gray!10}{$0.015 {\scriptstyle \pm 0.006}$} & \cellcolor{gray!10}{$18.48 {\scriptstyle \pm 1.72}$} & \cellcolor{gray!10}{$0.584 {\scriptstyle \pm 0.118}$} & \cellcolor{gray!10}{$0.384 {\scriptstyle \pm 0.073}$} & \cellcolor{gray!10}{100.0\%} \\
                & \cellcolor{gray!10}{FC} & \cellcolor{gray!10}{$0.020 {\scriptstyle \pm 0.013}$} & \cellcolor{gray!10}{$17.72 {\scriptstyle \pm 2.17}$} & \cellcolor{gray!10}{$0.605 {\scriptstyle \pm 0.112}$} & \cellcolor{gray!10}{$0.388 {\scriptstyle \pm 0.074}$} & \cellcolor{gray!10}{100.0\%} \\
                & \cellcolor{gray!10}{Heads} & \cellcolor{gray!10}{$0.047 {\scriptstyle \pm 0.029}$} & \cellcolor{gray!10}{$14.12 {\scriptstyle \pm 2.71}$} & \cellcolor{gray!10}{$0.530 {\scriptstyle \pm 0.113}$} & \cellcolor{gray!10}{$0.511 {\scriptstyle \pm 0.088}$} & \cellcolor{gray!10}{99.6\%} \\
            \midrule
            \multirow{4}{*}{\rotatebox{90}{Loss (w/o)}}
                & Act. & $0.015 {\scriptstyle \pm 0.007}$ & $18.56 {\scriptstyle \pm 1.77}$ & $0.613 {\scriptstyle \pm 0.108}$ & $0.376 {\scriptstyle \pm 0.066}$ & 19.0\% \\
                & L1 & $0.018 {\scriptstyle \pm 0.009}$ & $17.90 {\scriptstyle \pm 1.88}$ & $0.608 {\scriptstyle \pm 0.111}$ & $0.409 {\scriptstyle \pm 0.074}$ & 100.0\% \\
                & LPIPS & $0.014 {\scriptstyle \pm 0.006}$ & $18.94 {\scriptstyle \pm 1.73}$ & $0.627 {\scriptstyle \pm 0.107}$ & $0.401 {\scriptstyle \pm 0.088}$ & 100.0\% \\
                & MSE & $0.015 {\scriptstyle \pm 0.007}$ & $18.56 {\scriptstyle \pm 1.75}$ & $0.612 {\scriptstyle \pm 0.110}$ & $0.379 {\scriptstyle \pm 0.070}$ & 100.0\% \\
            \bottomrule
        \end{tabular}%
    }
    \vspace{-10pt}%
\end{wraptable}

%% file: sections/related_work.tex
\section{Related Work}
\label{sec:related_work}

\vspace{-1pt}
\textbf{Learning-based inversion.}
Classical gradient inversion reconstructs each gradient through iterative optimization~\citep{zhu2019deep, geiping2020inverting,yin2021see}, incurring per-sample computational overhead that often ranges from minutes to hours. Learning-based attacks amortize this cost with feed-forward inversion models trained on auxiliary data: \citet{wu2023learning} demonstrated that such models can match or surpass optimization-based methods, even in the presence of gradient compression and perturbation defenses. Subsequent work on efficiency-focused attacks further narrowed this gap~\citep{huang2021evaluating,shi2023scale}. In contrast, \sysname replaces per-gradient optimization entirely with a causal transformer that autoregressively decodes the full trajectory, enabling near real-time inversion without any gradient-by-gradient search.

\vspace{-1pt}
\textbf{Gradient inversion attacks in distributed learning.}
Gradient inversion attacks in distributed or federated learning are well-studied
\citep{zhu2019deep,zhao2020idlg,geiping2020inverting,yin2021see,jeon2021gradient,hatamizadeh2022gradvit,lu2022april}. They may rely on a stronger assumption, wherein the server actively modifies either parameters or model architecture~\citep{pasquini2022eluding,wen2022fishing,zhao2024loki,fowl2021robbing}. More recent work extends these attacks to Vision Transformers~\citep{hatamizadeh2022gradvit,lu2022april}, scales them to larger architectures~\citep{shi2023scale}, and systematizes the attack landscape~\citep{carletti2025sok}. Unlike these single-sample attacks, \sysname targets temporally correlated gradient streams from embodied policy-learning agents, exploiting cross-step dependencies that prior work treats as i.i.d.\ noise.

\vspace{-1pt}
\textbf{Privacy risks in RL and embodied AI.}
Gradient inversion tailored to RL and embodied AI remains limited. \citet{li2024privacy} investigated privacy risks for household robots trained via federated RL. At the trajectory level, \citet{du2023orl} proposed ORL-AUDITOR, an offline deep-RL auditing mechanism that uses cumulative rewards as dataset fingerprints, showing that sequence decision data contains exploitable structure. Recent work also studies federated RL under transition-dynamics constraints~\citep{he2025gradient}. Building on these directions, \sysname exploits the temporal structure across consecutive frames of embodied agents to amortize trajectory-level inversion in near real-time.

\vspace{-1pt}
\textbf{Defenses.} 
Various defense strategies have also been proposed against gradient inversion, including representation-level protection~\citep{sun2021soteria}, gradient-leakage resilient FL~\citep{wei2021gradient}, auditing via generative gradient leakage~\citep{li2022auditing}, CENSOR via orthogonal subspace Bayesian sampling~\citep{zhang2025censor}, SVDefense via singular value decomposition~\citep{luo2025svdefense}, differentially private FL~\citep{geyer2017differentially}, and sparse communication~\citep{aji2017sparse}. While these defenses reduce leakage in static settings, their effectiveness against amortized temporal attacks on sequential trajectory data remains open.

%% file: sections/conclusion.tex
\section{Conclusion}
\label{sec:conclusion}

We introduced \sysname, an amortized temporal gradient inversion attack that reconstructs private observation-action trajectories from ordered policy-learning gradients. Our analysis integrates closed-form discrete-action recovery, an information-theoretic bound on the temporal reconstruction gain, and a local equivalence between A2C and PPO gradients. Across held-out embodied scenes, \sysname achieves $18.8$ dB PSNR with near-perfect action recovery in $3$--$4.5$ ms per reconstructed frame, improving over the learning-based baseline by ${\sim}2$ dB PSNR. Lightweight defenses such as pruning and mild quantization leave substantial leakage, motivating sequence-aware privacy mechanisms and future extensions to continuous-action, off-policy, and aggregated-update settings.

%% file: appendices/broader_impact.tex
\section{Broader Impact}
\label{sec:broader_impact}

In this work, we show that trajectories of private observations and actions can be reconstructed from shared policy gradients in the honest-but-curious setting. Our findings apply to advantage-weighted actor-critic algorithms studied in this paper: the temporal coherence of embodied RL trajectories renders them substantially more vulnerable to gradient inversion than isolated images. Lightweight per-step defenses such as pruning and mild quantization leave substantial cross-step leakage intact. In contrast, stronger perturbations such as DP-SGD, aggressive quantization, and large Gaussian noise degrade reconstruction quality but may also affect utility or communication efficiency. Receiving policy gradients from an embodied agent is thus nearly equivalent to receiving the raw sensor stream, and the amortized attack reconstructs an entire trajectory in milliseconds, making it inexpensive to deploy once trained.

These findings motivate sequence-aware privacy mechanisms that budget protection across an entire rollout rather than treating each gradient update independently. Promising directions include secure aggregation, differential privacy adapted to temporal gradient streams, and defenses that disrupt the cross-step correlations \sysname exploits. However, such mechanisms often incur computational overhead prohibitive for resource-constrained embodied deployments reliant on real-time training, so developing lightweight, sequence-aware defenses that balance privacy and efficiency remains an open challenge. To reduce misuse risk while preserving reproducibility, we do not release pretrained attacker checkpoints or captured trajectory datasets. By exposing the extent of this threat, we hope to accelerate research toward genuinely private collaborative training of embodied RL agents.

%% file: appendices/notations.tex
\section{Notations}
\label{sec:notations}

\Cref{tab:notations} summarizes the key mathematical notations used throughout this paper.

\input{tables/notations}

%% file: tables/notations.tex
\begin{table}[ht]
    \centering
    \newcommand{\twocol}[1]{\begin{tabular}[l]{@{}l@{}}#1\end{tabular}}
    \vspace{-8pt}
    \caption{Key notations used throughout the paper.}
    \label{tab:notations}
    \resizebox{1.0\linewidth}{!}{
    \begin{tabular}{ll@{\hskip 0.8cm}ll}
    \toprule
    \textbf{Symbol} & \multicolumn{1}{l}{\textbf{Description}} & \textbf{Symbol} & \multicolumn{1}{l}{\textbf{Description}} \\
    \midrule
    $t,\;T$ & Timestep index and trajectory length & $\theta,\;\theta_0$ & Victim parameters and shared checkpoint \\
    $\mathbf{s}_t,\;a_t$ & State observation and action at step $t$ & $\hat{A}_t,\;\hat{R}_t$ & Advantage and return targets \\
    $\pi_\theta,\;V_\theta$ & Policy and value functions & $r_t(\theta)$ & PPO importance ratio \\
    $\epsilon,\;c_v,\;c_e$ & PPO clip, value, and entropy coefficients & $\mathcal{L}^{\mathrm{ac}}_t$ & Per-step actor-critic objective \\
    $\mathbf{g}_t \in \mathbb{R}^d$ & Per-step communicated gradient & $\mathcal{G}_{1:T}$ & \twocol{Observed gradient sequence\\$(\mathbf{g}_1,\ldots,\mathbf{g}_T)$} \\
    $\phi_{\mathrm{enc}},\;\phi_{\mathrm{img}}$ & Gradient and image encoders & $\phi_{\mathrm{temp}},\;\phi_{\mathrm{dec}}$ & Temporal transformer and decoder \\
    $\mathbf{z}_t,\;\mathbf{e}_t,\;\tilde{\mathbf{z}}_t$ & \twocol{Gradient token, image-feedback\\token, and contextualized latent} & $D,\;H_I,\;W_I,\;K$ & \twocol{Latent dimension, image size,\\and action-space size} \\
    $\beta_n,\;\tau,\;R$ & \twocol{Scheduled-sampling probability,\\warmup duration, rollout horizon} & $\alpha_1,\ldots,\alpha_5$ & Training-loss weights \\
    $\mathbf{g}_t^{(\pi)},\mathbf{g}_t^{(v)},\mathbf{g}_t^{(e)}$ & \twocol{Policy, value, and entropy\\gradient components} & $\mathbf{W}_a,\;\mathbf{h}(s),\;d_h$ & \twocol{Policy-head weights, hidden\\representation, hidden width} \\
    $\mathbf{G}_a,\;\sigma_k,\;\widetilde{\sigma}_k$ & \twocol{Policy-head gradient, ideal and\\observed column sums} & $\Delta_t$ & Identifiability gap \\
    $\Delta I_t$ & Temporal information gain & $\mathrm{MSE}^*(\cdot)$ & Bayes-optimal reconstruction error \\
    $P(\cdot\mid \mathbf{s}_t,a_t)$ & MDP transition kernel & & \\
    \bottomrule
    \end{tabular}
    }
\end{table}

%% file: appendices/preliminaries.tex
\section{Preliminaries}
\label{sec:preliminaries}

\subsection{MDP and Trajectory Notation}
\label{mdp}
We model the victim agent's interaction with the environment as a Markov decision process (MDP) with state space $\mathcal{S}$, action space $\mathcal{A}$, transition kernel $P(\mathbf{s}_{t+1}\mid \mathbf{s}_t,a_t)$, and reward function $r(\mathbf{s}_t,a_t)$. At timestep $t$, the agent observes state $\mathbf{s}_t$, samples an action $a_t \sim \pi_\theta(\cdot \mid \mathbf{s}_t)$ from a parameterized policy $\pi_\theta$, receives reward $r_t$, and transitions to the next state $\mathbf{s}_{t+1}$. A trajectory segment of length $T$ is denoted by
\[
\tau_{1:T} = (\mathbf{s}_1,a_1,r_1,\dots,\mathbf{s}_T,a_T,r_T).
\]
In our setting, $\mathbf{s}_t$ denotes the private RGB observation at timestep $t$, and the attacker observes only the communicated gradients derived from such trajectory data.

\subsection{Advantage-Weighted Actor-Critic Objective}
\label{sec:actor_critic_preliminaries}

\sysname targets victims trained with advantage-weighted actor--critic algorithms; we instantiate our analysis and experiments with A2C~\citep{mnih2016asynchronous} and PPO~\citep{schulman2017proximal}. Both learn a stochastic policy $\pi_\theta(a\mid\mathbf{s})$ alongside a critic $V_\theta(\mathbf{s})$, and share the same credit signal, an advantage estimate $\hat{A}_t$ produced by Generalized Advantage Estimation~\citep{schulman2015high},
\[
\hat{A}_t = \sum_{k=0}^{\infty}(\gamma\lambda)^k\,\delta_{t+k},
\qquad
\delta_t = r_t + \gamma V(\mathbf{s}_{t+1}) - V(\mathbf{s}_t),
\]
with reward discount $\gamma\in(0,1)$, trace-decay $\lambda\in[0,1]$, and return target $\hat{R}_t = \hat{A}_t + V(\mathbf{s}_t)$. The two algorithms differ only in how the policy is updated.

\textbf{A2C}~\citep{mnih2016asynchronous} is a synchronous on-policy actor-critic that follows the advantage-weighted policy gradient
\[
\nabla_\theta J^{\mathrm{A2C}}(\theta) = \mathbb{E}_t\!\bigl[\nabla_\theta\log\pi_\theta(a_t\mid\mathbf{s}_t)\,\hat{A}_t\bigr],
\]
so the per-step policy loss is the REINFORCE-style~\citep{williams1992simple} estimator
\[
\ell^{\mathrm{pol,A2C}}_t(\theta) = -\hat{A}_t\,\log\pi_\theta(a_t\mid\mathbf{s}_t),
\]
identical in form to vanilla policy gradient but with $\hat{A}_t$ supplied by the learned critic rather than a Monte-Carlo return. Each rollout is used for one single gradient step and then discarded.

\textbf{PPO}~\citep{schulman2017proximal} retains the same actor-critic skeleton but replaces the log-likelihood estimator with a clipped importance-weighted surrogate that enables reuse of a rollout over multiple SGD epochs without the updated policy drifting far from the behavior policy. Let $\theta_0$ denote the broadcast policy under which the rollout was collected and define the importance ratio $r_t(\theta) = \pi_\theta(a_t\mid\mathbf{s}_t)/\pi_{\theta_0}(a_t\mid\mathbf{s}_t)$. PPO minimizes
\[
\ell^{\mathrm{pol,PPO}}_t(\theta) = -\min\!\bigl(r_t(\theta)\hat{A}_t,\,\mathrm{clip}(r_t(\theta),1-\epsilon,1+\epsilon)\hat{A}_t\bigr),
\]
with a small $\epsilon>0$ capping how far $r_t$ may move before the update is masked.

Let $\ell^{\mathrm{pol}}_t(\theta)\in\{\ell^{\mathrm{pol,A2C}}_t,\ell^{\mathrm{pol,PPO}}_t\}$ denote the algorithm-specific per-step policy loss. Both algorithms then optimize a common per-step objective
\[
\mathcal{L}^{\mathrm{ac}}_t(\theta) = \ell^{\mathrm{pol}}_t(\theta) + c_v\bigl(V_\theta(\mathbf{s}_t) - \hat{R}_t\bigr)^2 - c_e\,\mathcal{H}\!\bigl(\pi_\theta(\cdot\mid \mathbf{s}_t)\bigr),
\]
in which the critic regresses onto $\hat{R}_t$ (weighted by $c_v$), and the policy is regularized toward higher entropy (weighted by $c_e$). Because $r_t(\theta_0) = 1$, PPO's clipping is inactive in a neighborhood of $\theta_0$ and the two policy losses produce identical gradients at $\theta_0$; \Cref{prop:local_ppo_capture} formalizes this equivalence. Consequently, the local gradient decomposition and discrete-action recovery results apply uniformly to both algorithms, while the temporal-gain analysis later imposes additional assumptions on how the advantage and return estimators depend on the trajectory.

\subsection{Autoregressive Modeling}
\label{sec:autoregressive_modeling}

An autoregressive model factorizes a joint distribution over a sequence into a product of one-step conditional distributions:
\begin{equation}
p(x_{1:T}) = \prod_{t=1}^T p(x_t \mid x_{<t}),
\label{eq:ar_factorization}
\end{equation}
where $x_{<t} := (x_1,\dots,x_{t-1})$. Thus, the prediction at time $t$ is made using only past context.

In \sysname, the target sequence consists of per-timestep reconstructions $(\hat{\mathbf{s}}_t,\hat{a}_t)$, and autoregression arises through the visual context tokens fed into the causal transformer. Let $\mathbf{z}_t = \phi_{\mathrm{enc}}(\mathbf{g}_t)$ denote the encoded gradient at time $t$, and let $\mathbf{e}_t = \phi_{\mathrm{img}}(\mathbf{s}_t)$ for $t \geq 1$ denote a training-time encoded image-context token. The token $\mathbf{e}_0$ is a learned start token (a trainable parameter, not derived from any image). During the parallel teacher-forced warmup path, the transformer input is interleaved as
\[
\mathbf{X} = [\mathbf{z}_1, \mathbf{e}_0, \mathbf{z}_2, \mathbf{e}_1, \dots, \mathbf{z}_T, \mathbf{e}_{T-1}],
\]
so that the prediction at timestep $t$ is conditioned on the current gradient representation $\mathbf{z}_t$, together with the preceding gradient and lagged image-context tokens.

The lower-triangular causal mask $M_{ij} = \mathbf{1}[j \leq i]$ enforces the autoregressive constraint in the parallel path: the gradient token $\mathbf{z}_t$ at position $2t{-}1$ attends to $\{\mathbf{z}_1, \mathbf{e}_0, \ldots, \mathbf{z}_{t-1}, \mathbf{e}_{t-2}, \mathbf{z}_t\}$, receiving image context up to $\mathbf{e}_{t-2}$. The paired token $\mathbf{e}_{t-1}$ at position $2t$ is visible only to subsequent tokens $\mathbf{z}_{t+1}, \ldots$. Scheduled sampling and inference instead construct the context sequentially: after decoding timestep $t$, the encoded generated frame (or a sampled ground-truth frame during training) is appended as context for future steps.

%% file: appendices/theoretical_analysis_derivation.tex
\section{Proof and Derivations}
\label{sec:theoretical_analysis_derivation}

We provide complete proofs of the theoretical results stated in~\Cref{sec:theoretical_analysis}.

\subsection{Proof of Proposition~\ref{prop:local_ppo_capture} (A2C--PPO Equivalence and Decomposition)}
\label{sec:proof_local_form}

\begin{proof}
    We prove the result in three steps: (1) PPO's clipping is locally inactive at $\theta_0$; (2) the gradients of the A2C and PPO policy losses are identical at $\theta_0$; (3) aggregating these terms produces the decomposition.

    \textbf{Step 1 (clipping inactive).}
    By definition of the importance ratio (defined in \Cref{sec:problem_formulation}),
    \[
        r_t(\theta_0)
        = \frac{\pi_{\theta_0}(a_t \mid \mathbf{s}_t)}
              {\pi_{\theta_0}(a_t \mid \mathbf{s}_t)}
        = 1.
    \]
    Since $1$ lies in the interior of the clipping interval $[1-\epsilon,1+\epsilon]$ and $\theta \mapsto r_t(\theta)$ is continuous, there exists a neighborhood $\mathcal{U}_t$ of $\theta_0$ in which $r_t(\theta) \in [1-\epsilon,1+\epsilon]$ for all $\theta \in \mathcal{U}_t$.
    Within this neighborhood,
    \[
        \mathrm{clip}\bigl(r_t(\theta),\,1-\epsilon,\,1+\epsilon\bigr) = r_t(\theta),
    \]
    so the PPO loss reduces to
    \begin{equation}
        \label{eq:local_ppo_policy}
        \ell^{\mathrm{pol,PPO}}_t(\theta)
        = -\hat{A}_t\,r_t(\theta),\qquad \theta \in \mathcal{U}_t.
    \end{equation}

    \textbf{Step 2 (gradient equivalence at $\theta_0$).} Writing
    \[
        r_t(\theta)
        = \exp\!\Bigl(\log \pi_\theta(a_t \mid \mathbf{s}_t) - \log \pi_{\theta_0}(a_t \mid \mathbf{s}_t)\Bigr),
    \]
    where the denominator is constant with respect to $\theta$, and using $r_t(\theta_0) = 1$, the chain rule gives
    \[
        \nabla_\theta r_t(\theta)\big|_{\theta=\theta_0}
        = r_t(\theta_0) \cdot \nabla_\theta \log \pi_\theta(a_t \mid \mathbf{s}_t)\big|_{\theta=\theta_0}
        = \nabla_\theta \log \pi_\theta(a_t \mid \mathbf{s}_t)\big|_{\theta=\theta_0}.
    \]
    Therefore
    \begin{align*}
        \nabla_\theta \ell^{\mathrm{pol,PPO}}_t(\theta)\big|_{\theta_0}
        &= -\hat{A}_t\,\nabla_\theta r_t(\theta)\big|_{\theta_0}
        = -\hat{A}_t\,\nabla_\theta \log \pi_\theta(a_t\mid\mathbf{s}_t)\big|_{\theta_0} \\
        &= \nabla_\theta \bigl(-\hat{A}_t \log \pi_\theta(a_t\mid\mathbf{s}_t)\bigr)\big|_{\theta_0}
        = \nabla_\theta \ell^{\mathrm{pol,A2C}}_t(\theta)\big|_{\theta_0},
    \end{align*}
    establishing the A2C--PPO gradient equivalence at the broadcast checkpoint.

    \textbf{Step 3 (decomposition).}
    Since A2C and PPO share the same critic and entropy terms in \cref{eq:capture-loss} and produce the same policy-term gradient at $\theta_0$ by Step~2, the communicated gradient $\mathbf{g}_t = \nabla_\theta \mathcal{L}^{\mathrm{ac}}_t(\theta)\big|_{\theta_0}$ is identical in both algorithms and can be computed by differentiating either form. Using the A2C policy term for concreteness, we differentiate
    \begin{equation}
        \label{eq:local_loss}
        \mathcal{L}^{\mathrm{ac}}_t(\theta)
        = -\hat{A}_t\,\log\pi_\theta(a_t\mid\mathbf{s}_t)
        + c_v\bigl(V_\theta(\mathbf{s}_t) - \hat{R}_t\bigr)^2
        - c_e\,\mathcal{H}\!\bigl(\pi_\theta(\cdot \mid \mathbf{s}_t)\bigr).
    \end{equation}
    Collecting terms:
    \begin{align*}
        \mathbf{g}_t
        &= \nabla_\theta \mathcal{L}^{\mathrm{ac}}_t(\theta)\big|_{\theta=\theta_0} \\
        &= -\hat{A}_t \nabla_\theta \log \pi_\theta(a_t \mid \mathbf{s}_t)\big|_{\theta_0}
        + 2c_v\bigl(V_{\theta_0}(\mathbf{s}_t) - \hat{R}_t\bigr)\,\nabla_\theta V_\theta(\mathbf{s}_t)\big|_{\theta_0}
        - c_e\,\nabla_\theta \mathcal{H}\!\bigl(\pi_\theta(\cdot \mid \mathbf{s}_t)\bigr)\big|_{\theta_0},
    \end{align*}
    which is \Cref{eq:gradient_decomposition}.
\end{proof}

\subsection{Proof of Theorem~\ref{thm:action_identifiability} (Discrete-Action Recovery)}
\label{sec:proof_action_recovery}

We prove the two parts of \Cref{thm:action_identifiability} separately. Part (i) establishes the column-sum identity for the policy contribution; part (ii) bounds the entropy-induced perturbation and gives a sufficient condition under which \eqref{eq:action_recovery} remains exact.

\subsubsection*{Proof of part (i): exact recovery when $c_e=0$.}

\begin{proof}
    Fix a timestep $t$, and abbreviate $\mathbf{s} = \mathbf{s}_t$, $a = a_t$, and $\hat{A} = \hat{A}_t$. By \Cref{prop:local_ppo_capture}, the policy-term gradient at the broadcast checkpoint $\theta_0$ is
    \[
        \nabla_\theta\bigl(-\hat{A}\log\pi_\theta(a\mid\mathbf{s})\bigr)\big|_{\theta_0},
    \]
    for both A2C and PPO. Subsequent reasoning, therefore, applies to both algorithms. Under assumption (A1) the value head shares no parameters with $\mathbf{W}_a$, so $\nabla_{\mathbf{W}_a}(V_\theta(\mathbf{s})-\hat{R})^2 = \mathbf{0}$. When in addition $c_e=0$, the entropy term is absent, and the gradient with respect to the policy-head weights $\mathbf{W}_a$ is determined entirely by the policy-loss term.
    
    Let 
    \[
        \boldsymbol{\ell}=\mathbf{W}_a^\top\mathbf{h}(\mathbf{s})\in\mathbb{R}^K, \qquad \boldsymbol{\pi}(\mathbf{s})=\mathrm{softmax}(\boldsymbol{\ell}),
    \] 
    and let $\mathbf{e}_a \in \mathbb{R}^K$ denote the one-hot vector of the true action, i.e.  $[\mathbf{e}_a]_k = \mathbf{1}[k=a]$.
    Since
    \[
        \log\pi(a\mid \mathbf{s}) \;=\; \ell_a \;-\; \log\!\sum_{j=1}^{K}e^{\ell_j},
    \]
    differentiating with respect to $\ell_k$ gives the standard softmax identity
   \begin{equation}
        \label{eq:softmax-logit-grad}
        \frac{\partial \log\pi(a\mid \mathbf{s})}{\partial \ell_k}
        \;=\; \mathbf{1}[k=a] \;-\; \frac{e^{\ell_k}}{\sum_{j}e^{\ell_j}}
        \;=\; \mathbf{1}[k=a] - \pi(k\mid \mathbf{s}),
    \end{equation}
    Equivalently, in vector form, $\nabla_{\boldsymbol{\ell}}\log\pi(a\mid \mathbf{s}) = \mathbf{e}_a - \boldsymbol{\pi}(\mathbf{s})$ and consequently
    \[
        \nabla_{\boldsymbol{\ell}}\bigl(-\hat{A}\log\pi(a\mid \mathbf{s})\bigr)
        \;=\; \hat{A}\bigl(\boldsymbol{\pi}(\mathbf{s}) - \mathbf{e}_a\bigr).
    \]
    Next, since 
    \[
        \ell_k = \sum_{i=1}^{d_h}{[\mathbf{W}_a]_{i, k}h_i(\mathbf{s})},
    \] 
    we have
    \[
        \frac{\partial \ell_k}{\partial [\mathbf{W}_a]_{i,k'}}
        \;=\; h_i(\mathbf{s})\,\mathbf{1}[k=k'],
    \]
    Applying the chain rule,
    {
    \small
    \begin{equation}
        \label{eq:elementwise-Ga}
        \frac{\partial\bigl(-\hat{A}\log\pi(a\mid \mathbf{s})\bigr)}{\partial [\mathbf{W}_a]_{i,k}}
        \;=\; \sum_{k'=1}^{K}\frac{\partial\bigl(-\hat{A}\log\pi(a\mid \mathbf{s})\bigr)}{\partial \ell_{k'}}\cdot\frac{\partial \ell_{k'}}{\partial [\mathbf{W}_a]_{i,k}}
        \;=\; \hat{A}\,\bigl(\pi(k\mid \mathbf{s}) - \mathbf{1}[k=a]\bigr)\,h_i(\mathbf{s}).
    \end{equation}
    }
    Therefore
    \begin{equation}
        \label{eq:ppo-policy-grad}
        \mathbf{G}_a
        \;=\; \frac{\partial \bigl(-\hat{A} \log \pi(a \mid \mathbf{s})\bigr)}{\partial \mathbf{W}_a}
        \;=\; \hat{A} \cdot \mathbf{h}(\mathbf{s})\,\bigl(\boldsymbol{\pi}(\mathbf{s}) - \mathbf{e}_a\bigr)^\top,
    \end{equation}
    where $\boldsymbol{\pi}(\mathbf{s}) = [\pi(1 \mid \mathbf{s}),\ldots,\pi(K \mid \mathbf{s})]^\top$ and $\mathbf{e}_a$ is the one-hot vector of the true action.
    Hence
    \begin{equation}
        \label{eq:column-k}
        [\mathbf{G}_a]_{:,k} = \hat{A} \cdot \mathbf{h}(\mathbf{s})\,\bigl(\pi(k \mid \mathbf{s}) - \mathbf{1}[k = a]\bigr).
    \end{equation}
    
    Since $\mathbf{h}(\mathbf{s})$ is the output of a ReLU layer, $h_i(\mathbf{s}) \geq 0$ for all $i$, and $\lVert \mathbf{h}(\mathbf{s}) \rVert_1 > 0$ for any non-trivial input.
    Summing over the hidden dimension:
    \begin{equation}
        \label{eq:column-sum}
        \sigma_k
        \;:=\;
        \sum_{i=1}^{d_h} [\mathbf{G}_a]_{i,k}
        = \hat{A} \cdot \lVert \mathbf{h}(\mathbf{s}) \rVert_1 \,\bigl(\pi(k \mid \mathbf{s}) - \mathbf{1}[k = a]\bigr).
    \end{equation}
    Under (A4), $\pi(a\mid\mathbf{s}) < 1$, so $1-\pi(a\mid\mathbf{s})>0$ and $\pi(k\mid\mathbf{s}) > 0$ holds for at least one $k\neq a$ (in fact for all $k$ by full softmax support). If $\hat{A} > 0$, then
    $\sigma_a=-\hat{A}\|\mathbf{h}(\mathbf{s})\|_1(1-\pi(a\mid \mathbf{s}))<0$ and
    $\sigma_k=\hat{A}\|\mathbf{h}(\mathbf{s})\|_1\pi(k\mid \mathbf{s})\ge 0$ for $k\neq a$, with equality only if $\pi(k\mid\mathbf{s})=0$.
    Thus $a=\arg\min_k \sigma_k$ (uniquely, since $\sigma_a$ is strictly the smallest).
    If $\hat{A} < 0$, the signs reverse and $a=\arg\max_k \sigma_k$.
    This proves \cref{eq:action_recovery}.

    For $K \ge 3$, consider the sign-agnostic rule
    \[
        a^* = \operatorname*{arg\,max}_k |\sigma_k - \bar{\sigma}|,
        \qquad \bar{\sigma} = \tfrac{1}{K}\textstyle\sum_k \sigma_k,
    \]
    where $\bar{\sigma}=0$ by $\sum_k\pi(k\mid \mathbf{s})=1$.
    We have
    $|\sigma_a|=|\hat{A}|\|\mathbf{h}\|_1(1-\pi(a\mid \mathbf{s}))$ and
    $|\sigma_k|=|\hat{A}|\|\mathbf{h}\|_1\pi(k\mid \mathbf{s})$ for $k\neq a$.
    Because the softmax has full support, $\pi(j\mid \mathbf{s})>0$ for every $j$, so for any $k\neq a$ and $K\ge 3$ we have $\pi(k\mid \mathbf{s}) < 1-\pi(a\mid \mathbf{s})$ strictly (since $1-\pi(a\mid \mathbf{s})=\pi(k\mid \mathbf{s})+\sum_{j\neq a,k}\pi(j\mid \mathbf{s})$ and the residual sum is positive). Hence $|\sigma_k|<|\sigma_a|$ for all $k\neq a$, and the maximizer is unique at $a$.
    For $K=2$, the two magnitudes coincide, and the sign-aware rule is necessary.
\end{proof}

\subsubsection*{Proof of part (ii): robust recovery under entropy regularization.}

\begin{proof}
    Reusing the abbreviations $\mathbf{s}=\mathbf{s}_t$, $\hat{A}=\hat{A}_t$, $\boldsymbol{\ell}=\mathbf{W}_a^\top\mathbf{h}(\mathbf{s})$, and $\boldsymbol{\pi}=\mathrm{softmax}(\boldsymbol{\ell})$, we now retain the entropy term $-c_e\,\mathcal{H}(\boldsymbol{\pi})$ in $\mathcal{L}^{\mathrm{ac}}_t$ and quantify its contribution to the column sums.

    \textbf{Step 1 (entropy gradient with respect to logits).}
    Writing the entropy in a convenient form
    \[
        \mathcal{H}(\boldsymbol{\pi})
        \;=\; -\sum_{k=1}^{K}\pi_k\log\pi_k
        \;=\; \log Z - \sum_{k=1}^{K}\pi_k\,\ell_k,
        \qquad Z := \sum_{j=1}^{K} e^{\ell_j},
    \]
    and using the standard softmax identities $\partial \pi_j/\partial\ell_k = \pi_j(\delta_{jk}-\pi_k)$ and $\partial \log Z/\partial\ell_k = \pi_k$, we obtain
    \begin{equation}
        \label{eq:entropy_logit_grad}
        \frac{\partial \mathcal{H}}{\partial \ell_k}
        \;=\; \pi_k - \pi_k\bigl(\ell_k - \bar{\ell} + 1\bigr)
        \;=\; -\,\pi_k\,(\ell_k - \bar{\ell}),
        \qquad \bar{\ell} := \sum_{j=1}^{K} \pi_j\,\ell_j,
    \end{equation}
    or equivalently $\nabla_{\boldsymbol{\ell}}\mathcal{H} = -\,\boldsymbol{\pi}\odot(\boldsymbol{\ell} - \bar{\ell}\,\mathbf{1})$.

    \textbf{Step 2 (entropy gradient with respect to $\mathbf{W}_a$).}
    Composing \eqref{eq:entropy_logit_grad} with $\partial \ell_k/\partial[\mathbf{W}_a]_{i,k'} = h_i(\mathbf{s})\,\mathbf{1}[k=k']$ gives the closed-form expression
    \[
        \nabla_{\mathbf{W}_a}\mathcal{H}
        \;=\; -\,\mathbf{h}(\mathbf{s})\,\bigl(\boldsymbol{\pi}\odot(\boldsymbol{\ell}-\bar{\ell}\,\mathbf{1})\bigr)^\top.
    \]
    The contribution of the entropy term to column $k$ of the observed gradient is therefore
    \begin{equation}
        \label{eq:entropy_perturbation}
        \eta_k
        \;:=\; -c_e\sum_{i=1}^{d_h}\bigl[\nabla_{\mathbf{W}_a}\mathcal{H}\bigr]_{i,k}
        \;=\; c_e\,\lVert\mathbf{h}(\mathbf{s})\rVert_1\,\pi_k\,(\ell_k - \bar{\ell}),
    \end{equation}
    so that the observed column sums decompose as $\widetilde{\sigma}_k = \sigma_k + \eta_k$, where $\sigma_k$ is the policy-only column sum from part~(i).

    \textbf{Step 3 (perturbation bound).}
    Since $|\ell_k - \bar{\ell}| \le L_t$ and $\pi_k \le \pi_{\max}$, \eqref{eq:entropy_perturbation} gives the uniform bound
    \[
        |\eta_k| \;\le\; c_e\,\lVert\mathbf{h}(\mathbf{s})\rVert_1\,\pi_{\max}\,L_t
        \qquad \text{for every } k\in\{1,\ldots,K\}.
    \]

    \textbf{Step 4 (invariance of the argmin/argmax).}
    Suppose $\hat{A}>0$ (the case $\hat{A}<0$ is symmetric). By part~(i), $\sigma_{a_t} < \sigma_k$ for every $k\neq a_t$, with gap
    \[
        \sigma_k - \sigma_{a_t}
        \;=\; \hat{A}\,\lVert\mathbf{h}(\mathbf{s})\rVert_1\bigl(1 - \pi_{a_t} + \pi_k\bigr)
        \;\ge\; \Delta_t.
    \]
    The rule $\arg\min_k\widetilde{\sigma}_k$ still selects $a_t$ provided $\widetilde{\sigma}_{a_t} < \widetilde{\sigma}_k$ for every $k\neq a_t$, i.e.\
    \[
        \eta_k - \eta_{a_t} \;>\; -(\sigma_k - \sigma_{a_t}).
    \]
    A sufficient condition is $|\eta_k - \eta_{a_t}| < \sigma_k - \sigma_{a_t}$ for every $k\neq a_t$, which is in turn implied by the uniform bound $2\max_k|\eta_k| < \Delta_t$ (since $|\eta_k - \eta_{a_t}| \le 2\max_k|\eta_k|$ and $\sigma_k - \sigma_{a_t} \ge \Delta_t$). By Step~3, this uniform bound holds whenever $2\,c_e\,\lVert\mathbf{h}(\mathbf{s})\rVert_1\,\pi_{\max}\,L_t < \Delta_t$, i.e.\ \eqref{eq:entropy_safety}. The case $\hat{A}<0$ follows by reversing every inequality, completing the proof.
\end{proof}

\begin{corollary}
    \label{cor:uniform}
    Part~(i) of the recovery rule holds regardless of the policy's confidence.
    Even under a uniform policy $\pi_k = 1/K$ for all~$k$, the true-action column sum satisfies $|\sigma_{a_t}| = |\hat{A}|\,\lVert\mathbf{h}(\mathbf{s})\rVert_1\,(1 - 1/K) > 0$, while all other column sums satisfy $|\sigma_k| = |\hat{A}|\,\lVert\mathbf{h}(\mathbf{s})\rVert_1/K$.
    Therefore, for $K\ge 3$ the true action remains the unique maximizer under the sign-agnostic rule (since $(K-1)/K > 1/K$); for $K=2$ the magnitudes tie, and the sign-aware rule~\eqref{eq:action_recovery} is required, with the sign of $\hat A_t$ supplied by information outside the policy-head column magnitudes.
\end{corollary}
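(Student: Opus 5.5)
The corollary is a direct specialization of the column-sum identity \eqref{eq:column-sum} from the proof of part~(i) of \Cref{thm:action_identifiability}. I would begin from
\[
\sigma_k=\hat{A}_t\,\lVert\mathbf{h}(\mathbf{s}_t)\rVert_1\bigl(\pi_k-\mathbf{1}[k=a_t]\bigr),
\]
which holds under (A1)--(A3) with $c_e=0$ and uses no assumption on how peaked $\boldsymbol{\pi}$ is. Next I would check that (A4) holds in the uniform case. Since $\pi_{a_t}=1/K<1$ for every $K\ge 2$, the hypotheses of part~(i) are met, so the sign-aware rule \eqref{eq:action_recovery} applies. Substituting $\pi_k=1/K$ gives
\[
\sigma_{a_t}=-\hat{A}_t\lVert\mathbf{h}\rVert_1(1-1/K), \qquad \sigma_k=\hat{A}_t\lVert\mathbf{h}\rVert_1/K \ \text{ for } k\ne a_t .
\]
Taking absolute values yields the two magnitudes claimed. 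Positivity of $|\sigma_{a_t}|$ follows from (A2), (A3), and $K\ge 2$.

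For the sign-agnostic claim, I would compare $(K-1)/K$ with $1/K$. The strict inequality $(K-1)/K>1/K$ is equivalent to $K>2$. So for $K\ge 3$ the value $|\sigma_{a_t}|$ strictly exceeds every $|\sigma_k|$ with $k\neq a_t$, and the maximizer of $|\widetilde{\sigma}_k|=|\sigma_k|$ is unique and equal to $a_t$. This is also a special case of the general strict inequality $\pi_k<1-\pi_{a_t}$ already shown in part~(i).

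For $K=2$ both magnitudes equal $|\hat{A}_t|\lVert\mathbf{h}\rVert_1/2$, so magnitudes alone cannot identify the action. Here I would invoke the sign-aware rule \eqref{eq:action_recovery} directly. With $\operatorname{sign}(\hat{A}_t)$ known, $\sigma_{a_t}$ has the opposite sign to $\sigma_k$, so arg\,min or arg\,max recovers $a_t$. The sign must come from outside the policy-head magnitudes, for example from the value-head gradient, as noted in the theorem.

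The only subtle point is stating the scope of "regardless of confidence" correctly: it covers any $\boldsymbol{\pi}$ satisfying (A4), and the uniform case shows the rule does not need a peaked policy. The argument itself is routine substitution.
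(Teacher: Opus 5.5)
Your proposal is correct and follows the paper's argument: substitute $\pi_k = 1/K$ into the column-sum identity \eqref{eq:column-sum}, compare $(K-1)/K$ with $1/K$, and use the sign-aware rule \eqref{eq:action_recovery} to break the $K=2$ tie. A minor aside: a uniform policy forces all logits to be equal, so the entropy perturbation $\eta_k$ in \eqref{eq:entropy_perturbation} vanishes, and your restriction to $c_e=0$ is not needed in this case.
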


\begin{remark}[Scope of the entropy-safety condition]
    \label{rem:entropy_safety}
    Condition~\eqref{eq:entropy_safety} is a sufficient, not necessary, condition. The standard PPO/A2C entropy coefficients lie in $c_e\in[0.005,0.05]$, and the right-hand side $\Delta_t/\lVert\mathbf{h}(\mathbf{s})\rVert_1 = |\hat{A}_t|\,(1-\pi_{a_t}+\min_{k\neq a_t}\pi_k)$ remains bounded away from zero whenever $|\hat{A}_t|$ is non-negligible and the policy is not fully deterministic (\Cref{rem:deterministic_policy}). The left-hand side $2 c_e L_t \pi_{\max}$ scales with the logit spread $L_t$, which in practice is controlled by either (i) standard regularization of the policy head ($L_2$ weight decay or gradient clipping) or (ii) the entropy bonus itself, which actively penalizes large $L_t$. Empirically, the near-perfect action recovery reported in \Cref{tab:baselines} is consistent with the evaluated checkpoints remaining outside the degenerate regime, but we do not require this sufficient condition to be tight.
\end{remark}

\subsection{Conditional Sufficiency of $\mathbf{s}_t$ for $\mathbf{g}_t$ (\Cref{lem:gradient_sufficiency})}
\label{sec:proof_gradient_sufficiency}

We first restate the structural lemma that underlies the conditional-sufficiency argument in part~(ii) of \Cref{thm:temporal_gain}, with the assumptions on the advantage and return estimators stated explicitly. Throughout, we condition only on the broadcast checkpoint $\theta_0$ (no conditioning on realized advantages or returns).

\begin{lemma}[Conditional sufficiency of $\mathbf{s}_t$ for $\mathbf{g}_t$]
    \label{lem:gradient_sufficiency}
    Assume:
    \begin{itemize}
        \setlength\itemsep{0pt}
        \item[\emph{(M)}] \emph{Markov victim policy:} $a_t \sim \pi_{\theta_0}(\cdot\mid \mathbf{s}_t)$ depends on the trajectory only through $\mathbf{s}_t$.
        \item[\emph{(B)}] \emph{Causally adapted, un-normalized one-step estimators:} there exist deterministic functions $h_A,\,h_R$ such that
        \[
            \hat{A}_t \;=\; h_A(\mathbf{s}_t,a_t,\xi_t),
            \qquad
            \hat{R}_t \;=\; h_R(\mathbf{s}_t,a_t,\xi_t),
        \]
        where $\xi_t := (r_t,\mathbf{s}_{t+1})$ is the one-step Markov noise drawn from the reward kernel $R(\cdot\mid \mathbf{s}_t,a_t)$ and the transition kernel $P(\cdot\mid \mathbf{s}_t,a_t)$.
    \end{itemize}
    Then the per-step gradient is the deterministic map
    {
        \small
        \begin{align*}
            \small    
            \mathbf{g}_t \;&=\; G\!\bigl(\theta_0,\mathbf{s}_t,a_t,\xi_t\bigr) \\&:=\; \nabla_\theta\Bigl[
                    -h_A(\mathbf{s}_t,a_t,\xi_t)\,\log\pi_\theta(a_t\mid \mathbf{s}_t)
                + c_v\bigl(V_\theta(\mathbf{s}_t)-h_R(\mathbf{s}_t,a_t,\xi_t)\bigr)^2
                - c_e\,\mathcal{H}\!\bigl(\pi_\theta(\cdot\mid \mathbf{s}_t)\bigr)
                \Bigr]\Big|_{\theta=\theta_0}.
        \end{align*}
    }
    Consequently, conditional on $\theta_0$,
    \begin{equation}
        \label{eq:lemma_strong}
        \mathbf{g}_t \;\perp\!\!\!\perp\; \bigl(\{(\mathbf{s}_{t'},a_{t'})\}_{t'<t},\;\mathbf{g}_{1:t-1}\bigr)
        \;\bigm|\; (\mathbf{s}_t,a_t),
    \end{equation}
    and, marginalizing $a_t \sim \pi_{\theta_0}(\cdot\mid \mathbf{s}_t)$,
    \begin{equation}
        \label{eq:lemma_marginal}
        \mathbf{g}_t \;\perp\!\!\!\perp\; \mathbf{g}_{1:t-1} \;\bigm|\; \mathbf{s}_t.
    \end{equation}
\end{lemma}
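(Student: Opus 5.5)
The plan is to first make the deterministic-map claim explicit, then derive both conditional independences from the Markov structure of the MDP plus assumption (M), using the history $\sigma$-algebra up to time $t$ as the bridge. Throughout, $\theta_0$ is fixed (conditioned on), so $V_{\theta_0}$, $\pi_{\theta_0}$ and $\nabla_\theta$ of every term at $\theta_0$ are fixed deterministic functions of their data arguments.

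\textbf{Step 1 (deterministic map).} I will substitute (B) into the per-step objective \eqref{eq:capture-loss}. Using \Cref{prop:local_ppo_capture}, the policy term may be taken in the A2C form for both algorithms. Then $\hat{A}_t$ and $\hat{R}_t$ are replaced by $h_A(\mathbf{s}_t,a_t,\xi_t)$ and $h_R(\mathbf{s}_t,a_t,\xi_t)$, and these are constants with respect to $\theta$. Differentiating at $\theta_0$ gives exactly $G(\theta_0,\mathbf{s}_t,a_t,\xi_t)$ as displayed. This step is pure substitution.

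\textbf{Step 2 (history measurability).} I will define the history $\mathcal{F}_t := (\mathbf{s}_1,a_1,r_1,\ldots,\mathbf{s}_{t-1},a_{t-1},r_{t-1},\mathbf{s}_t)$. For each $t'<t$, Step 1 gives $\mathbf{g}_{t'}=G(\theta_0,\mathbf{s}_{t'},a_{t'},\xi_{t'})$ with $\xi_{t'}=(r_{t'},\mathbf{s}_{t'+1})$ and $t'+1\le t$. Hence $\mathbf{g}_{1:t-1}$ and $\{(\mathbf{s}_{t'},a_{t'})\}_{t'<t}$ are all measurable functions of $\mathcal{F}_t$. This is where the causal adaptation and one-step nature of (B) is essential: GAE with $\lambda>0$ would make $\mathbf{g}_{t'}$ depend on $\mathbf{s}_{t+1},\ldots$ and break the argument. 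Rollout-level normalization would likewise couple all steps through shared statistics.

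\textbf{Step 3 (conditional independence).} The MDP Markov property says that $(r_t,\mathbf{s}_{t+1})$ is drawn from $R(\cdot\mid\mathbf{s}_t,a_t)\otimes P(\cdot\mid\mathbf{s}_t,a_t)$ regardless of $\mathcal{F}_t$. Hence $\xi_t\perp\!\!\!\perp\mathcal{F}_t\mid(\mathbf{s}_t,a_t)$. Since $\mathbf{g}_t$ is a function of $(\mathbf{s}_t,a_t,\xi_t)$, it follows that $\mathbf{g}_t\perp\!\!\!\perp\mathcal{F}_t\mid(\mathbf{s}_t,a_t)$. By Step 2, any function of $\mathcal{F}_t$ inherits this, which yields \eqref{eq:lemma_strong}.

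For \eqref{eq:lemma_marginal}, I will use (M): $a_t\perp\!\!\!\perp\mathcal{F}_t\mid\mathbf{s}_t$. Combining this with the previous independence by the chain rule for conditional independence (contraction) gives $(a_t,\xi_t)\perp\!\!\!\perp\mathcal{F}_t\mid\mathbf{s}_t$. Concretely, the conditional law of $(a_t,\xi_t)$ given $\mathcal{F}_t$ factors as $p(a_t\mid\mathcal{F}_t)\,p(\xi_t\mid\mathcal{F}_t,a_t)=\pi_{\theta_0}(a_t\mid\mathbf{s}_t)\,P\otimes R(\xi_t\mid\mathbf{s}_t,a_t)$, which depends on $\mathcal{F}_t$ only through $\mathbf{s}_t$. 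Then $\mathbf{g}_t=G(\theta_0,\mathbf{s}_t,a_t,\xi_t)$ is conditionally independent of $\mathcal{F}_t$, and hence of $\mathbf{g}_{1:t-1}$, given $\mathbf{s}_t$.

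The main obstacle is not computational but bookkeeping. I must ensure that no quantity entering $\mathbf{g}_{1:t-1}$ reaches beyond $\mathbf{s}_t$ (Step 2). I must also ensure that conditioning is only on $\theta_0$ and not on realized advantages, so that $V_{\theta_0}$ inside $h_A,h_R$ is a fixed function rather than a random coupling. I would state the measurability in Step 2 as an explicit claim before invoking contraction.
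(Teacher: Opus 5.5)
Your proposal is correct and follows essentially the same route as the paper: you substitute (B) to obtain the deterministic map $G$, invoke the MDP Markov property to get the independence given $(\mathbf{s}_t,a_t)$, and then use (M) to marginalize out $a_t$. Your contraction step is the paper's decomposition $p(\mathbf{g}_t\mid\mathbf{s}_t,\mathbf{g}_{1:t-1})=\sum_{a_t}p(\mathbf{g}_t\mid\mathbf{s}_t,a_t)\,\pi_{\theta_0}(a_t\mid\mathbf{s}_t)$ in conditional-independence form. Your explicit Step 2 shows that $\mathbf{g}_{1:t-1}$ is a function of the history through $\mathbf{s}_t$, including past rewards, because no $\xi_{t'}$ with $t'<t$ reaches beyond $\mathbf{s}_t$; this spells out a point the paper's proof uses only implicitly when it asserts that $\xi_t$ is independent of $\mathbf{g}_{1:t-1}$ given $(\mathbf{s}_t,a_t)$.
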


\begin{proof}
    By \Cref{prop:local_ppo_capture} the A2C and PPO policy gradients agree at $\theta_0$, so $\mathbf{g}_t$ equals the displayed expression. Substituting the assumed forms of $\hat{A}_t,\hat{R}_t$ from~(B) shows that $\mathbf{g}_t = G(\theta_0,\mathbf{s}_t,a_t,\xi_t)$ depends on the trajectory only through $(\mathbf{s}_t,a_t,\xi_t)$. By the Markov property of the MDP, the noise $\xi_t = (r_t,\mathbf{s}_{t+1})$ is, conditional on $(\mathbf{s}_t,a_t)$, drawn from $R(\cdot\mid \mathbf{s}_t,a_t)\otimes P(\cdot\mid \mathbf{s}_t,a_t)$ independently of the past $\{(\mathbf{s}_{t'},a_{t'})\}_{t'<t}$ and of $\mathbf{g}_{1:t-1}$. Combined with the determinism of $G$, this yields \eqref{eq:lemma_strong}.

    To derive \eqref{eq:lemma_marginal}, write
    \[
        p(\mathbf{g}_t\mid \mathbf{s}_t,\mathbf{g}_{1:t-1},\theta_0)
        \;=\; \sum_{a_t \in \mathcal{A}} p(\mathbf{g}_t\mid \mathbf{s}_t,a_t,\mathbf{g}_{1:t-1},\theta_0)\;\pi_{\theta_0}(a_t\mid \mathbf{s}_t).
    \]
    The first factor equals $p(\mathbf{g}_t\mid \mathbf{s}_t,a_t,\theta_0)$ by~\eqref{eq:lemma_strong}; the second factor is $\pi_{\theta_0}(a_t\mid \mathbf{s}_t)$ by~(M), since $a_t$ is sampled from $\pi_{\theta_0}(\cdot\mid \mathbf{s}_t)$ independently of $\mathbf{g}_{1:t-1}$ given $\mathbf{s}_t$. Both factors are independent of $\mathbf{g}_{1:t-1}$, proving~\eqref{eq:lemma_marginal}.
\end{proof}

\begin{remark}[Scope of assumption (B)]
    \label{rem:scope_assumption_B}
    Assumption~(B) is satisfied by TD($0$) advantage--return pairs computed with stop-gradient on the bootstrap value (e.g.\ $\hat{A}_t = r_t + \gamma V_{\theta_0}(\mathbf{s}_{t+1}) - V_{\theta_0}(\mathbf{s}_t)$, $\hat{R}_t = r_t + \gamma V_{\theta_0}(\mathbf{s}_{t+1})$). It is \emph{violated} by GAE with $\lambda > 0$, which uses future $\xi_{t+k}$ for $k>0$, and by rollout-level normalization $\hat{A}_t \leftarrow (\hat{A}_t-\mu)/\sigma$, which couples every timestep through $(\mu,\sigma)$. We discuss the consequences of relaxing~(B) in \Cref{rem:diminishing_returns}.
\end{remark}

\subsection{Proof of Theorem~\ref{thm:temporal_gain} (Temporal Reconstruction Gain)}
\label{sec:proof_temporal}

\begin{proof}
    For part~(i), $\Delta I_t = I(\mathbf{s}_t;\mathbf{g}_{1:t-1}\mid \mathbf{g}_t)\ge 0$ by non-negativity of conditional mutual information. For the MSE statement, we work coordinate-wise. Writing $\mathbf{s}_t \in \mathbb{R}^{D_s}$ and using $\mathrm{MSE}^*(Y) = \mathbb{E}\!\left[\operatorname{tr}\mathrm{Cov}(\mathbf{s}_t\mid Y)\right] = \sum_{j=1}^{D_s} \mathbb{E}\!\left[\mathrm{Var}([\mathbf{s}_t]_j\mid Y)\right]$, the scalar law of total variance applied to each coordinate gives
    \[
        \mathrm{Var}([\mathbf{s}_t]_j \mid \mathbf{g}_t)
        \;=\; \mathbb{E}\!\bigl[\mathrm{Var}([\mathbf{s}_t]_j \mid \mathbf{g}_{1:t}) \bigm| \mathbf{g}_t\bigr]
        + \mathrm{Var}\!\bigl(\mathbb{E}[[\mathbf{s}_t]_j \mid \mathbf{g}_{1:t}] \bigm| \mathbf{g}_t\bigr)
        \;\ge\; \mathbb{E}\!\bigl[\mathrm{Var}([\mathbf{s}_t]_j \mid \mathbf{g}_{1:t}) \bigm| \mathbf{g}_t\bigr].
    \]
    Taking expectations and summing over $j$ yields $\mathrm{MSE}^*(\mathbf{g}_t) \ge \mathrm{MSE}^*(\mathbf{g}_{1:t})$.

    For part~(ii), assume~(M) and~(B) of \Cref{lem:gradient_sufficiency} and condition throughout on $\theta_0$. By~\eqref{eq:lemma_marginal}, $\mathbf{g}_t \perp\!\!\!\perp \mathbf{g}_{1:t-1} \mid \mathbf{s}_t$, equivalently $I(\mathbf{g}_t;\mathbf{g}_{1:t-1}\mid \mathbf{s}_t)=0$. This is exactly the Markov chain
    \[
        \mathbf{g}_{1:t-1}\;\to\;\mathbf{s}_t\;\to\;\mathbf{g}_t.
    \]
    Using the identity $I(X;Y\mid Z) = I(X;Y) - I(Y;Z) + I(Y;Z\mid X)$ with $(X,Y,Z) = (\mathbf{s}_t,\mathbf{g}_{1:t-1},\mathbf{g}_t)$ and $I(\mathbf{g}_t;\mathbf{g}_{1:t-1}\mid \mathbf{s}_t) = 0$ yields
    \[
        \Delta I_t
        \;=\;
        I(\mathbf{s}_t;\mathbf{g}_{1:t-1}\mid \mathbf{g}_t)
        \;=\;
        I(\mathbf{s}_t;\mathbf{g}_{1:t-1})
        -
        I(\mathbf{g}_{1:t-1};\mathbf{g}_t),
    \]
    which is \eqref{eq:temporal_identity}. Since mutual information is non-negative, $I(\mathbf{g}_{1:t-1};\mathbf{g}_t)\ge 0$, and therefore
    \begin{equation}
        \label{eq:temporal_upper_bound}
        \Delta I_t \;\le\; I(\mathbf{s}_t;\mathbf{g}_{1:t-1}),
    \end{equation}
    proving~\eqref{eq:temporal_upper_bound}.
\end{proof}

\begin{remark}[Beyond TD($0$): GAE and rollout normalization]
    \label{rem:diminishing_returns}
    The identity~\eqref{eq:temporal_identity} and bound~\eqref{eq:temporal_upper_bound} are exact under (M)+(B); the following discussion of the practical PPO/A2C setting is intuition, not a proved correction term. (i)~\emph{GAE with $\lambda{>}0$:} $\hat{A}_t=\sum_{k\ge 0}(\gamma\lambda)^k\,\delta_{t+k}$ depends on future noise $\xi_{t:T}$, so~(B) fails; since the policy gradient at $\theta_0$ scales linearly in $\hat{A}_t$ (\Cref{prop:local_ppo_capture}), the influence of $\xi_{t+k}$ on $\mathbf{g}_t$ is geometrically suppressed by $(\gamma\lambda)^k$~\citep{schulman2015high}. (ii)~\emph{Rollout normalization} $\hat{A}_t\!\leftarrow\!(\hat{A}_t-\mu)/\sigma$ couples all timesteps through two scalar statistics, breaking the per-step factorization of~(B) but only through a low-dimensional channel.

    The ablations in \Cref{tab:ablations,tab:ablations_a2c} show different empirical scaling across the two victim algorithms. Both practical settings use GAE and rollout-level advantage normalization, so assumption~(B) should be viewed as an idealized reference point rather than an exact description of these runs. A2C plateaus in $[18.3,18.8]$\,dB PSNR for $T\in\{8,16,32,64\}$, whereas PPO continues to improve from $18.74$\,dB at $T{=}8$ to $20.29$\,dB at $T{=}64$. We interpret this as algorithm- and checkpoint-dependent temporal signal strength rather than a theorem-level consequence of the TD(0) idealization.
\end{remark}

\subsection{Derivation of Training Objective}
\label{sec:derivation_of_training_objective}

Following \Cref{eq:total-loss}, let $\{\mathbf{s}_t\}_{t=1}^T$ denote a ground-truth trajectory and $\{\hat{\mathbf{s}}_t\}_{t=1}^T$ its reconstruction. We begin with two standard pixel-space reconstruction losses, namely the mean-squared error and the $\ell_1$ error:
{
\small
\begin{equation}
    \mathcal{L}_{\mathrm{MSE}} = \frac{1}{T} \sum_{t=1}^{T} \left\| \hat{\mathbf{s}}_t-\mathbf{s}_t \right\|_2^2, \qquad 
    \mathcal{L}_{L_1} = \frac{1}{T} \sum_{t=1}^{T} \left\| \hat{\mathbf{s}}_t-\mathbf{s}_t \right\|_1.
\end{equation}
}

To supervise action recovery, we include a cross-entropy loss on the predicted action distribution:
{
\small
\begin{equation}
    \mathcal{L}_{\mathrm{act}} = -\frac{1}{T} \sum_{t=1}^{T} \log \hat{p}_t(a_t),
\end{equation}
}%
where $\hat{p}_t(a_t)$ denotes the probability assigned to the ground-truth action $a_t$ at timestep $t$.

Finally, to improve perceptual quality beyond pixel-wise agreement, we further incorporate the LPIPS loss~\citep{zhang2018unreasonable}:
{
\small
\begin{equation}
    \mathcal{L}_{\mathrm{perc}} = \frac{1}{T} \sum_{t=1}^{T} \mathrm{LPIPS}\!\left(\hat{\mathbf{s}}_t,\mathbf{s}_t\right).
\end{equation}
}

To mitigate exposure bias arising from teacher forcing (see \Cref{sec:autoregressive-training}), we add a rollout loss $\mathcal{L}_{\mathrm{rollout}}$ that supervises a short autoregressive rollout under self-generated context.
Let $\ell(\hat{\mathbf{s}}, \mathbf{s}, \hat{p}, a) = \alpha_1\|\hat{\mathbf{s}} - \mathbf{s}\|_2^2 + \alpha_2\|\hat{\mathbf{s}} - \mathbf{s}\|_1 + \alpha_3(-\log \hat{p}(a)) + \alpha_4\,\mathrm{LPIPS}(\hat{\mathbf{s}}, \mathbf{s})$ denote the per-step loss.
We uniformly sample a start index $t_0 \sim \mathrm{Uniform}\{1, 2, \ldots, T{-}R{+}1\}$ and unroll $R$ steps with predicted frames fed back as image context:
{
\small
\begin{equation}
    \mathcal{L}_{\mathrm{rollout}} = \frac{1}{R} \sum_{r=0}^{R-1} \ell\!\left(\hat{\mathbf{s}}_{t_0+r},\, \mathbf{s}_{t_0+r},\, \hat{p}_{t_0+r},\, a_{t_0+r}\right),
\end{equation}
}%
where $\hat{\mathbf{s}}_{t_0+r} = \phi_{\mathrm{dec}}(\tilde{\mathbf{z}}_{t_0+r})$ uses the previous predicted frame as image context without backpropagating gradients through it.
This prevents the model from optimizing the quality of its own context inputs, ensuring the rollout trains robustness to imperfect context rather than learning to produce easily-decoded outputs.
We set $R{=}3$ in all experiments.

The overall objective combines these terms to jointly enforce reconstruction fidelity, perceptual realism, accurate action prediction, and robustness to autoregressive error accumulation:
\begin{equation}
\mathcal{L}_{\mathrm{total}} = \alpha_1 \mathcal{L}_{\mathrm{MSE}} + \alpha_2 \mathcal{L}_{\mathrm{L1}} + \alpha_3 \mathcal{L}_{\mathrm{act}} + \alpha_4 \mathcal{L}_{\mathrm{perc}} + \alpha_5 \mathcal{L}_{\mathrm{rollout}}.
\end{equation}

%% file: appendices/evaluation_metrics.tex
\section{Evaluation Metrics}
\label{sec:evaluation_metrics}

We present the evaluation metrics used to assess the correctness of the reconstructed images. 

\textbf{Mean Squared Error (MSE$\downarrow$)~\citep{wang2009mean}.} It computes the average squared differences between corresponding pixels in the original and reconstructed images. As a pixel-wise distortion metric, lower values indicate better reconstruction. 

\textbf{Peak Signal to Noise Ratio (PSNR$\uparrow$)~\citep{lin2005visual}.}
It is a logarithmic transformation of the MSE in decibels, offering an intuitive interpretation where higher values indicate superior image fidelity. 

\textbf{Structural Similarity (SSIM$\uparrow$)~\citep{wang2004image}.} It evaluates perceptual image quality by comparing luminance, contrast, and structural components between original and reconstructed images, capturing inter-pixel dependencies better than error-based metrics like MSE. 

\textbf{Learned Perceptual Image Patch Similarity (LPIPS$\downarrow$).~\citep{zhang2018unreasonable}}
It measures perceptual distance using deep features from a pre-trained convolutional neural network and correlates better with human perceptual judgments than traditional metrics. Lower LPIPS values indicate higher perceptual similarity.

\textbf{Action Accuracy $\uparrow$.}
It reports the percentage of correctly inferred actions from observed gradients, measuring the attacker's ability to recover the victim's navigation decisions. Higher accuracy indicates more severe leakage of private control information.

%% file: appendices/add_experimental_details.tex
\section{Additional Experiment Details}
\label{sec:add_experimental_details}

In this section, we provide additional details on the experimental setup and training procedure described in \Cref{sec:evaluation}.

\textbf{Environment and Navigation Task.} We use the AI2-THOR simulator~\citep{kolve2017ai2} with a point-goal navigation task. At the start of each episode, the agent and the target are initialized at randomly selected navigable locations. The objective is to navigate to within a $1.0$-meter radius of the target location within a maximum of $200$ time steps. A step penalty of $-0.01$ is applied at each time step, and a sparse terminal reward of $+1.0$ is granted upon successfully reaching the target. To prevent overfitting to specific visual configurations, the training procedures interleave multiple indoor scenes and randomly shuffle the environment at the end of each episode.

\textbf{Victim Model.} The default victim follows a standard actor-critic architecture~\citep{konda1999actor}: a shared convolutional encoder feeds parallel policy and value heads. The encoder is a four-block stack with channel widths $(32, 64, 64, 64)$, kernel sizes $(8, 4, 3, 3)$, and strides $(4, 2, 1, 2)$, and paddings $(2, 1, 1, 1)$, using \texttt{ReLU} activations and batch normalization. For an $84{\times}84$ RGB input, these settings produce a $5{\times}5{\times}64=1600$ flattened feature map. The flattened features are passed through a linear projection to a $512$-dimensional hidden representation (followed by \texttt{ReLU}) that is shared by two linear heads: a policy head over $|\mathcal{A}|{=}5$ discrete actions and a scalar value head. Including convolution, batch-normalization, fully connected, policy-head, and value-head parameters, the resulting model has a gradient dimension of $d = 936{,}102$, which defines the attacker's input space.

\textbf{Attacker (\sysname) Model.} 
\label{sec:attacker_impl}
The attacker model uses the autoregressive architecture described in \Cref{sec:overview}. The gradient encoder applies a learned input projection $W_{\mathrm{proj}}\!\in\!\mathbb{R}^{d \times d_{\mathrm{enc}}}$ followed by residual blocks and a linear map to $\mathbb{R}^D$. The temporal transformer uses $N{=}6$ layers of pre-normalized multi-head causal self-attention with $8$ heads and FlashAttention~\citep{dao2022flashattention}. By default, the transformer applies learned absolute positional embeddings $\mathbf{P}\!\in\!\mathbb{R}^{1\times 2T\times D}$ to the input sequence. For longer sequences ($T{=}64$), where the fixed embedding table becomes a bottleneck, we switch to Rotary Position Embeddings (RoPE)~\citep{su2024roformer}. The residual decoder upsamples the contextualized latent to $84{\times}84$ RGB images and jointly predicts a discrete action distribution. In the standard configuration ($d_{\mathrm{enc}}{=}2048$, $D{=}1024$), the model has approximately $2.27$ billion parameters, dominated by the input projection ($d \cdot d_{\mathrm{enc}} \approx 1.9 \times 10^9$ weights). In the ablation study, we use $d_{\mathrm{enc}}{=}1024$, reducing the model to approximately $1.21$ billion parameters.

\textbf{Dataset and Augmentation.} We prepare the training dataset by running the victim policy in AI2-THOR~\citep{kolve2017ai2} and capturing per-step gradients alongside the corresponding RGB observations and actions. The data is stored in \texttt{HDF5}~\citep{hdf5} format as temporally ordered trajectories. We use $108$ out of $120$ available indoor scenes for training and reserve the remaining $12$ for held-out test scenes. To enhance dataset diversity, we apply color-jitter transformations---specifically perturbations in brightness, contrast, saturation, and hue---to the recorded observations. Subsequently, gradients are recomputed through the frozen victim model using the exact training loss objective, which may be either the PPO surrogate or the A2C policy gradient incorporating GAE advantages, value loss, and entropy.

\textbf{Computational Requirements.} We implement \sysname in Python using PyTorch~\citep{paszke2019pytorch} as the primary framework for both the victim and attacker models. Default attacker training is distributed across six NVIDIA H200 GPUs using PyTorch DistributedDataParallel (DDP); the additional victim experiments use four or six GPUs, as reported in \Cref{tab:victim_variant_setup}. These GPUs are not required for the method itself: comparable accelerator clusters such as NVIDIA A100 GPUs are sufficient, with lower-end GPUs mainly increasing training time.

\textbf{Hyperparameter Settings.} \Cref{tab:hyperparameters} summarizes the main hyperparameters used for the default \sysname model training configuration. The complete configuration used for experiments is provided in the accompanying \texttt{yaml} file with the source code.

\input{tables/hyperparameters}

\subsection{Runtime of Experiments}
\label{sec:runtime}

\Cref{tab:runtime} reports end-to-end training runtimes for the main \sysname configurations.

\input{tables/runtime}

\subsection{Few-Shot Adaptation}
\label{sec:ext_fewshot_adaptation}

\input{tables/a2c_adaptation}

We extend the few-shot adaptation analysis from \Cref{sec:few_shot_adaptation} to the A2C victim policy. \Cref{tab:a2c_fewshot} reports the same adaptation sweep ($p \in \{0, 10, 20, 30, 40, 50\}\%$) on held-out scenes. The trends mirror the PPO setting: zero-shot transfer already achieves near-perfect action recovery and reasonable visual fidelity (PSNR $= 18.9$), while adapting with $50\%$ of target trajectories raises PSNR to $25.0$ and lowers LPIPS to $0.132$.

\subsection{Applicability across victim configurations}
\label{sec:victim_variants}

To assess the broader applicability of \sysname, we examine trajectory leakage beyond the default shared-CNN victim with RGB observations and five actions. Alternative visual encoders and a larger residual backbone vary the representation and capacity of the victim, while recurrent state makes current decisions depend on observation history. Multi-modal inputs introduce goal information alongside visual observations. With larger discrete action spaces, we assess whether accurate action recovery persists as the number of candidate actions increases.

The PPO variants include a compact Vision Transformer (Tiny ViT)~\citep{dosovitskiy2020image}, two IMPALA-style residual CNNs~\citep{espeholt2018impala}, a CNN with a gated recurrent unit (GRU), a multi-modal CNN, and the original CNN with ten actions. Here, multi-modal inputs comprise RGB observations and an egocentric point-goal vector encoding distance and relative bearing. We train a separate \sysname attacker for each victim using unaugmented captures and evaluate on $800$ images from $100$ held-out sequences of length $T{=}8$. These experiments assess applicability across matched victim settings within AI2-THOR point-goal navigation, not transfer of one attacker between architectures.

\textbf{Architectures and observations.} Tiny ViT uses $14{\times}14$ patches, $128$-dimensional embeddings, two transformer layers with four attention heads and MLP width $512$, and a class token feeding the policy and value heads. It is a compact ViT implementation, not a pretrained model. The IMPALA-style CNN has three stages with channel widths $(16,32,32)$, each comprising a convolution, max pooling, two residual blocks, and group normalization, followed by a $256$-unit fully connected layer. The wider variant doubles the channel widths to $(32,64,64)$, increasing the victim from $1.09$M to $2.37$M parameters. CNN+GRU retains the default convolutional stack but uses a $256$-unit projection and GRU state. The multi-modal victim fuses the default $512$-dimensional visual features with a two-layer, $128$-unit goal encoder, then projects to $512$ shared features. Its three goal inputs are distance divided by $10$ meters and clipped at one, and the sine and cosine of the egocentric goal bearing. The attacker receives gradients only and reconstructs RGB and actions, not the goal vector.

\textbf{Actions and recurrent capture.} The ten-action set adds \texttt{MoveBack}, \texttt{MoveLeft}, \texttt{MoveRight}, \texttt{Crouch}, and \texttt{Stand} to the default five actions; rotation and look angles remain $15^\circ$. For CNN+GRU, hidden state is carried through each episode and reset at episode boundaries. Per-step PPO gradients use the recorded pre-step hidden state as a detached input, without backpropagation through earlier observations. The hidden state is not an attacker input.

\textbf{Attacker training and cost.} We use the same $108$ training and $12$ held-out scene split, with $T{=}8$, stride $2$, and a $95\%/5\%$ training/validation window split. Each attacker trains for $50$ epochs, using the default transformer, decoder, and loss with input dimension and action-head size matched to the victim. For the wider IMPALA victim, we replace the single dense input projection with two learned projections, using an intermediate width of $768$ and an output width of $2048$, to reduce memory use. \Cref{tab:victim_variant_setup} reports training sequences, model sizes, and elapsed time through the last training/validation record. These times exclude final evaluation, victim training, and data generation. All runs use batch size $8$ per GPU and two accumulation steps, giving effective batch sizes of $64$ or $96$ on four or six GPUs. Thus, neither compute nor attacker capacity is held fixed across variants, and the offline cost remains substantial despite fast online inference.

\input{tables/victim_variant_setup}

\textbf{Reconstruction results.} \Cref{tab:victim_variants} shows $99.5\%$ action accuracy with ten actions. The recurrent and multi-modal victims yield $18.62$ and $19.78$\,dB PSNR, respectively; only RGB and actions are reconstructed in the multi-modal setting. The wider residual victim has $2.37$M parameters and yields $22.36$\,dB PSNR. Tiny ViT retains $99.0\%$ action accuracy but reaches only $13.99$\,dB PSNR, so reliable action recovery does not imply equally reliable image recovery. Since trajectories, dataset sizes, and attacker capacities differ across victims, differences in reconstruction fidelity cannot be attributed solely to the victim architecture.

\input{tables/victim_variants}

\subsection{Reconstruction under temporal gradient aggregation}
\label{sec:aggregation}

\sysname's main evaluation assumes access to ordered per-step gradients. When a learner shares only an aggregate across several steps, the server loses the direct correspondence between individual gradients and observation--action pairs. We therefore examine whether visual content and action proportions can still be reconstructed at the window level, comparing an unadapted per-step attacker, fine-tuning on matched aggregates, and training from scratch on aggregates.

Using the default five-action PPO victim, we average groups of four or eight consecutive stored gradients within each episode, preserve the order of the resulting aggregates, and discard incomplete groups at episode boundaries. Each aggregate is paired with the last RGB frame in its group and the normalized histogram of the actions in that group. The attacker receives $T{=}8$ aggregated gradients per sequence in both settings. Each aggregate corresponds to four original steps in Agg4 or eight in Agg8, so the sequences span $32$ and $64$ original steps, respectively. This experiment averages existing per-step captures; it does not recompute PPO minibatch updates or shuffle steps across windows.

\textbf{Training and evaluation.} We use the architecture and losses in \Cref{tab:hyperparameters}, with action histograms replacing single labels for cross-entropy supervision. The attacker has $2.27$B parameters and takes $936{,}102$-dimensional gradients. Both four-step aggregate-training runs use the same episode-disjoint split, with $452$ validation sequences and training stride $2$. Four-step training uses four GPUs and $3$ accumulation steps, giving an effective batch size of $96$. The from-scratch attacker follows the $50$-epoch schedule in \Cref{tab:hyperparameters}. For fine-tuning, we initialize from an attacker trained on unaugmented per-step captures, changing the duration to $20$ epochs, learning rate to $10^{-5}$, learning-rate warmup to $2$ epochs, and minimum learning rate to $10^{-7}$. We select each checkpoint by the lowest combined validation loss. All three four-step evaluations use autoregressive inference on the same first $100$ held-out aggregate sequences with stride $8$, yielding $800$ last-frame targets and action histograms. For eight-step averaging, the from-scratch run uses the same model, losses, $50$-epoch schedule, and episode-disjoint splitting procedure. Evaluation uses autoregressive inference on $39$ held-out sequences with stride $8$, yielding $312$ last-frame targets and action histograms. Histogram error is measured by total variation distance, $\mathrm{TV}(\hat{q},q)=\frac{1}{2}\sum_{k=1}^{5}|\hat{q}_k-q_k|$, where $q_k$ is the observed fraction of action $k$ within the aggregation window and $\hat{q}_k$ is its predicted proportion. Lower TV indicates closer action proportions.

The \emph{uniform baseline} assigns $20\%$ probability to each of five actions in every window, without training or gradients. We divide action counts by the window length, compute the true histogram's TV distance from the uniform prediction, and average over the same evaluation windows as \sysname, separately for each aggregation size. This tests whether \sysname outperforms an equal-frequency guess, because improvement over the unadapted attacker alone does not establish an advantage over this no-gradient reference.

\input{tables/aggregation}

\textbf{Results.} For four-step averaging, \Cref{tab:aggregation} shows that fine-tuning reduces mean histogram TV from $0.576$ to $0.457$, while PSNR rises from $17.90$ to $18.04$\,dB and SSIM from $0.604$ to $0.611$. LPIPS does not improve, increasing slightly from $0.410$ to $0.412$. Fine-tuning still underperforms the uniform baseline, whose mean TV is $0.428$. Training from scratch reduces TV to $0.296$, approximately $31\%$ below uniform, with lower sequence-averaged TV in $94$ of $100$ held-out sequences. \Cref{fig:aggregation_action_error} shows these error distributions. Compared with fine-tuning, training from scratch recovers action proportions more accurately but produces less accurate image reconstructions, with PSNR of $17.03$\,dB and LPIPS of $0.510$. The runs use different training budgets and learning rates, so this comparison does not isolate the effect of initialization.

For eight-step averaging, training from scratch yields $16.03$\,dB PSNR and histogram TV of $0.268$, approximately $12\%$ below its uniform baseline of $0.304$. Training-set sizes, window targets, and evaluation sample counts differ between aggregation sizes, so these results do not isolate the effect of aggregation size. The image metrics quantify last-frame reconstruction under gradient averaging, not recovery of every observation or its order within a window. These single-run results do not establish robustness to shuffled minibatches, multi-epoch model updates, or secure aggregation.

Overall, training from scratch recovers action proportions more accurately than uniform guessing under both aggregation settings. For Agg4, however, it yields lower PSNR and SSIM and higher MSE and LPIPS than the pretrained attacker, both before and after fine-tuning. The models trained from scratch use $8{,}736$ aggregate training sequences for Agg4 and $2{,}808$ for Agg8. Learning image reconstruction from these smaller datasets without per-step pretraining may contribute to the lower visual fidelity. These experiments do not isolate the effects of dataset size, initialization, and training schedule.

\begin{figure}[t]
    \centering
    \begin{minipage}[c]{0.7\linewidth}
        \centering
        \begin{subfigure}[t]{0.61\linewidth}
            \centering
            \includegraphics[width=\linewidth]{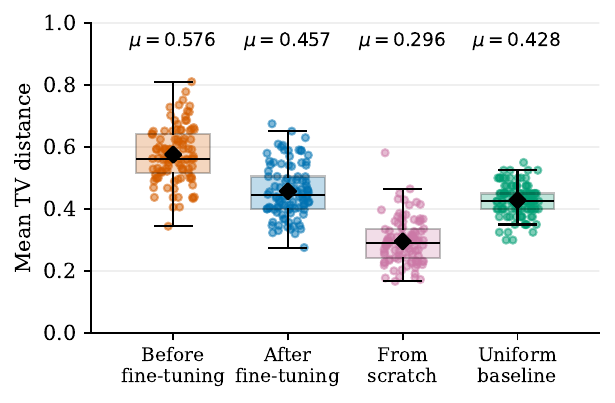}
            \caption{Agg4}
        \end{subfigure}\hfill
        \begin{subfigure}[t]{0.38\linewidth}
            \centering
            \includegraphics[width=\linewidth]{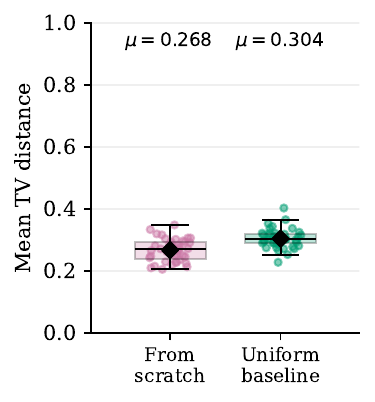}
            \caption{Agg8}
        \end{subfigure}
    \end{minipage}\hfill
    \begin{minipage}[c]{0.28\linewidth}
        \caption{Action-histogram TV for Agg4 and Agg8; lower is better. Reductions in mean TV relative to each panel's uniform baseline indicate improved action-proportion recovery. Each point averages eight windows in one sequence. Boxes show medians and interquartile ranges; diamonds mark means.}
        \label{fig:aggregation_action_error}
    \end{minipage}
\end{figure}

\subsection{Effect of auxiliary-data scale}
\label{sec:auxiliary_scaling}

The few-shot study in \Cref{sec:few_shot_adaptation} varies target-scene data available for adaptation, but does not measure the auxiliary data needed to train an attacker. Here, we vary the amount of unaugmented auxiliary data for the default PPO victim while keeping the victim checkpoint and attacker architecture fixed. We reserve $5\%$ of eligible source episodes for validation, then select nested subsets of the remaining episodes using seed $42$. The reported fractions refer to this training-episode pool, not to the number of scenes or sliding-window sequences. All sequences from an episode remain in the same split.

\textbf{Protocol.} Each attacker trains from scratch using the settings in \Cref{tab:hyperparameters}. The $10\%$--$90\%$ runs use $3$ accumulation steps with four GPUs, giving an effective batch size of $96$. Training uses stride $2$ and a fixed validation set of $2{,}487$ sequences. Evaluation uses $800$ images from the same first $100$ sequences of the held-out per-step dataset, with stride $2$ and autoregressive inference, without target-scene adaptation. We report the checkpoint selected by combined validation loss. Since epochs rather than optimizer updates are held fixed, larger subsets also receive more training updates.

\input{tables/auxiliary_scaling}

\textbf{Results.} \Cref{tab:auxiliary_scaling} reports the completed $10\%$, $30\%$, $50\%$, $70\%$, $90\%$, and $100\%$ runs. PSNR increases from $16.25$ to $17.66$, $18.24$, $18.44$, $18.60$, and $18.71$\,dB, while LPIPS falls from $0.502$ to $0.419$, $0.389$, $0.373$, $0.359$, and $0.354$. Action accuracy is $100\%$ in all six evaluations. Thus, action recovery remains accurate at the smallest tested fraction, while visual reconstruction benefits from more auxiliary data under this fixed-epoch training protocol. These single-run results do not identify the minimum auxiliary-data requirement or establish robustness to a different task or data domain.

\subsection{Qualitative Defense Example}
\label{sec:ext_defense_evaluation}

This section complements the defense analysis in \Cref{sec:defense_evaluation} with a qualitative example under representative defenses. \Cref{fig:defense} shows the reconstructed images for $T{=}8$ under different defense techniques. For this evaluation, we employ the $20\%$ adapted model of the PPO victim to present a visual assessment of the defense's effectiveness.

\begin{figure}[t]
    \centering
    \includegraphics[width=\linewidth]{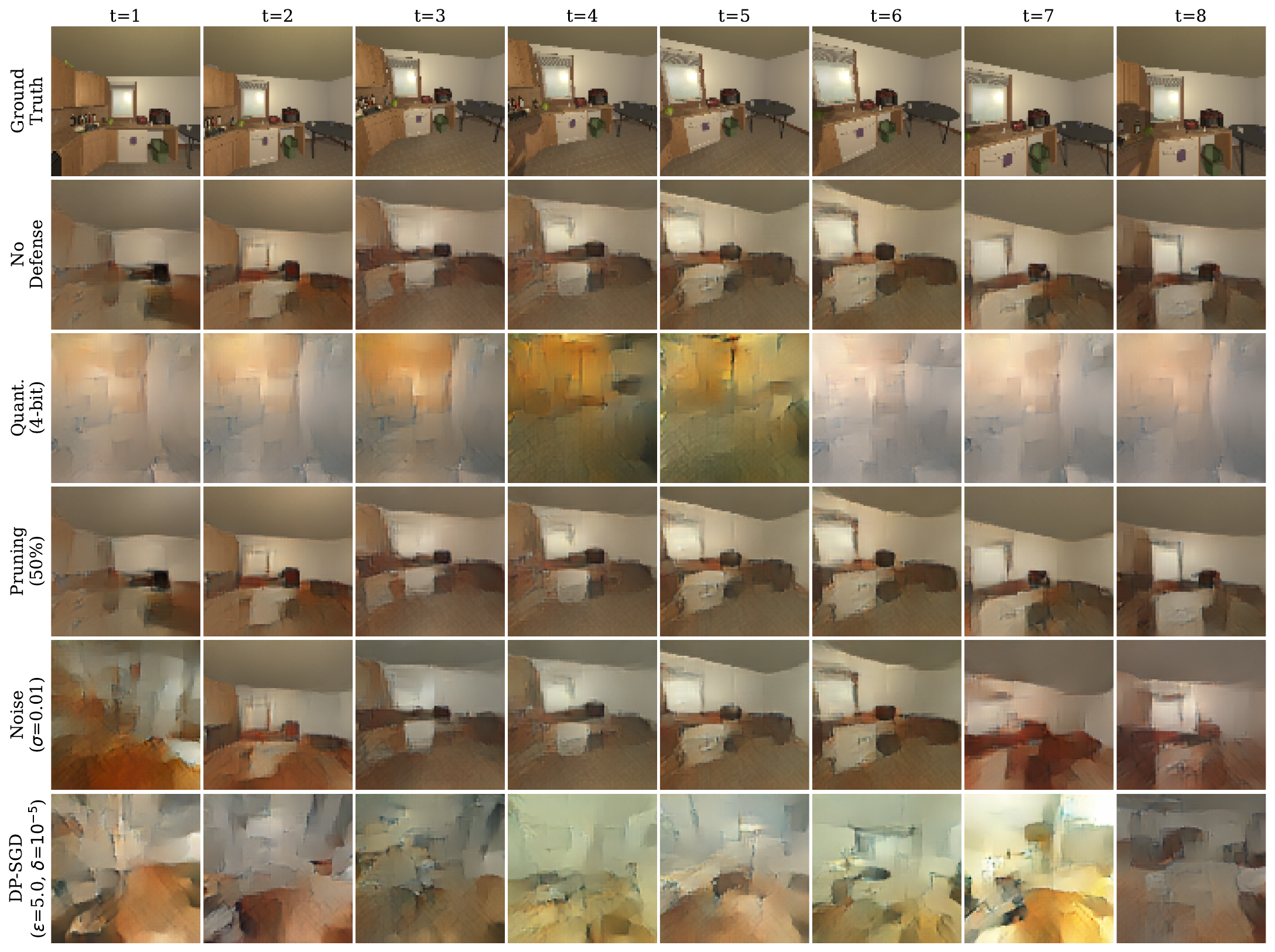}  
    \caption{
    Reconstructions by \sysname under different defense techniques (20\%--adapted PPO attacker; corresponding zero-shot results are reported in \Cref{tab:defenses}).
    }
    \label{fig:defense}
    \vspace{-10pt}
\end{figure}

\subsection{Extended Ablation Study}
\label{sec:ext_ablations}

\input{tables/a2c_ablation}

\Cref{tab:ablations_a2c} repeats the main-body ablations of \Cref{tab:ablations} with an A2C victim and makes three observations concrete. \emph{(i) Context window.} Enlarging $T$ from $1$ to $8$ raises PSNR from $15.54$ to $18.60$\,dB ($+3.1$\,dB), after which A2C plateaus in $[18.26, 18.77]$\,dB for $T\in\{16,32,64\}$, in contrast to PPO, which continues to improve to $20.29$\,dB at $T{=}64$ (\Cref{tab:ablations}). This division is consistent with algorithm-dependent temporal signal strength, while the exact sufficiency result in \Cref{lem:gradient_sufficiency} serves as an idealized reference for these practical GAE-based estimators. One possible explanation is that A2C gradients are more strongly tied to the current state, making additional context less informative after short horizons, whereas PPO's use of GAE and rollout-level normalization may preserve cross-timestep dependencies that longer contexts can exploit. \emph{(ii) Gradient components.} CNN-only and FC-only subsets recover PSNR within roughly $0.5$\,dB of the full gradient, while the policy/value heads alone drop $2.6$\,dB yet still achieve $100.0\%$ action accuracy---mirroring the PPO pattern in \Cref{tab:ablations} and aligning with \Cref{thm:action_identifiability}, which predicts exact action recovery from the policy-head structure in our $K{=}5$ setting. \emph{(iii) Loss ablations.} Removing the action loss collapses action accuracy to $16.4\%$ without affecting image PSNR, confirming that the two heads learn complementary signals. Removing LPIPS produces a $0.59$\,dB PSNR drop for A2C, whereas the PPO ablation (\Cref{tab:ablations}) shows a slight PSNR increase upon removing LPIPS---consistent with the standard perception--distortion tradeoff. We attribute this discrepancy to the different loss landscapes of the two algorithms: the A2C model appears to rely more heavily on the perceptual loss for stable optimization, though the precise mechanism warrants further investigation.

%% file: tables/hyperparameters.tex
\begin{table}[t]
    \centering
    \caption{Hyperparameter settings for \sysname training.}
    \label{tab:hyperparameters}
    \small
    \begin{tabular}{@{}ll@{}}
        \toprule
        \textbf{Hyperparameter} & \textbf{Value} \\
        \midrule
        Sequence length ($T$) & $8$ \\
        Latent dimension ($D$) & $1024$ \\
        Encoder hidden dimension ($d_{\mathrm{enc}}$) & $2048$ \\
        Transformer layers & $6$ \\
        Attention heads & $8$ \\
        Encoder / Decoder type & residual / residual \\
        \midrule
        Epochs & $50$ \\
        Batch size (per GPU) & $8$ \\
        Gradient accumulation steps & 2 \\
        Learning rate & $1 \times 10^{-4}$ \\
        Weight decay & $1 \times 10^{-5}$ \\
        LR schedule & cosine (warmup: 5 epochs, min LR: $1 \times 10^{-6}$) \\
        \midrule
        Scheduled sampling range & $[0.1, 0.8]$ (warmup: $5$ epochs) \\
        Rollout steps & $3$ \\
        \midrule
        $\alpha_1$ (MSE) & $1.0$ \\
        $\alpha_2$ (L1) & $0.5$ \\
        $\alpha_3$ (Action Cross Entropy) & $0.01$ \\
        $\alpha_4$ (LPIPS) & $0.2$ \\
        $\alpha_5$ (Rollout) & $0.5$ \\
        \bottomrule
    \end{tabular}
\end{table}

%% file: tables/runtime.tex
\begin{table}[t]
    \centering
    \setlength{\tabcolsep}{2.5pt}
    \caption{Runtime of \sysname across various training configurations. \textit{Seq.} indicates the number of gradient sequences used per run. Differences between PPO and A2C arise from the differing sizes of their captured gradient datasets. Reported times, measured in wall-clock hours, encompass training, validation every fifth epoch, and final evaluation on 100 held-out sequences. Unless otherwise specified, reported timings use six NVIDIA H200 GPUs; few-shot adaptation uses two NVIDIA H200 GPUs due to the smaller dataset. These timings reflect our measurement hardware rather than a strict hardware requirement.}
    \label{tab:runtime}
    \small
    \resizebox{\linewidth}{!}{%
        \begin{tabular}{@{}llccccccgg@{}}
            \toprule
            \multirow{2}{*}{\textbf{Experiment}}
            & \multirow{2}{*}{\textbf{Setting}}
            & \multirow{2}{*}{$d_{\mathrm{enc}}$}
            & \multirow{2}{*}{$D$}
            & \multirow{2}{*}{$T$}
            & \multirow{2}{*}{\textbf{Epochs}}
            & \multicolumn{2}{c}{\textbf{PPO}}
            & \multicolumn{2}{c}{\textbf{A2C}} \\
            \cmidrule(lr){7-8}
            \cmidrule(lr){9-10}
            &&&&&& \textbf{Seq.} & \textbf{Time (h)} & \textbf{Seq.} & \textbf{Time (h)} \\
            \midrule
            Base
                & Full Gradients & $2048$ & $1024$ & $8$ & $50$ & $127{,}188$ & $10.57$ & $149{,}250$ & $12.92$ \\
            \midrule
            \multirow{5}{*}{\makecell{Sequence\\length ($T$)}}
                & $T{=}1$ (single-frame) & $1024$ & $1024$ & $1$ & $50$ & $137{,}013$ & $2.63$ & $162{,}301$ & $3.43$ \\
                & $T{=}8$ & $1024$ & $1024$ & $8$ & $50$ & $127{,}188$ & $8.92$ & $149{,}250$ & $12.26$ \\
                & $T{=}16$ & $1024$ & $1024$ & $16$ & $50$ & $115{,}161$ & $13.79$ & $134{,}589$ & $18.84$ \\
                & $T{=}32$ & $1024$ & $1024$ & $32$ & $50$ & $47{,}541$ & $16.37$ & $54{,}836$ & $19.15$ \\
                & $T{=}64$ (RoPE) & $1024$ & $1024$ & $64$ & $50$ & $12{,}966$ & $17.62$ & $14{,}600$ & $20.24$ \\
            \midrule
            \multirow{3}{*}{\makecell{Gradient\\component}}
                & CNN layers only & $1024$ & $1024$ & $8$ & $50$ & $127{,}188$ & $6.69$ & $149{,}250$ & $10.18$ \\
                & FC layer only & $1024$ & $1024$ & $8$ & $50$ & $127{,}188$ & $8.59$ & $149{,}250$ & $11.50$ \\
                & Policy \& value heads only & $1024$ & $1024$ & $8$ & $50$ & $127{,}188$ & $6.50$ & $149{,}250$ & $9.93$ \\
            \midrule
            \multirow{4}{*}{\makecell{Loss\\ablation}}
                & w/o MSE & $1024$ & $1024$ & $8$ & $50$ & $127{,}188$ & $8.92$ & $149{,}250$ & $11.97$ \\
                & w/o L1 & $1024$ & $1024$ & $8$ & $50$ & $127{,}188$ & $9.17$ & $149{,}250$ & $11.91$ \\
                & w/o LPIPS & $1024$ & $1024$ & $8$ & $50$ & $127{,}188$ & $8.37$ & $149{,}250$ & $11.94$ \\
                & w/o Action & $1024$ & $1024$ & $8$ & $50$ & $127{,}188$ & $8.91$ & $149{,}250$ & $11.85$ \\
            \midrule
            \multirow{5}{*}{\makecell{Few-shot\\adaptation}}
                & $10\%$ target data & $2048$ & $1024$ & $8$ & $20$ & $531$ & $0.35$ & $1{,}611$ & $0.54$ \\
                & $20\%$ target data & $2048$ & $1024$ & $8$ & $20$ & $1{,}181$ & $0.44$ & $3{,}510$ & $0.67$ \\
                & $30\%$ target data & $2048$ & $1024$ & $8$ & $20$ & $1{,}831$ & $0.51$ & $5{,}431$ & $0.85$ \\
                & $40\%$ target data & $2048$ & $1024$ & $8$ & $20$ & $2{,}359$ & $0.53$ & $7{,}046$ & $0.99$ \\
                & $50\%$ target data & $2048$ & $1024$ & $8$ & $20$ & $3{,}022$ & $0.61$ & $8{,}988$ & $1.18$ \\
            \bottomrule
        \end{tabular}%
    }
\end{table}

%% file: tables/a2c_adaptation.tex
\begin{wraptable}{r}{0.5\textwidth}
    \centering
    \setlength{\tabcolsep}{2pt}
    \vspace{-14pt}
    \caption{Few-shot adaptation on held-out scenes (A2C).}
    \vspace{-3pt}
    \label{tab:a2c_fewshot}
    \rowcolors{1}{gray!10}{gray!10}
    \resizebox{\linewidth}{!}{%
        \begin{tabular}{@{}cccccc@{}}
            \hiderowcolors
            \toprule
            $p$ (\%) & \textbf{MSE}$\downarrow$ & \textbf{PSNR}$\uparrow$ & \textbf{SSIM}$\uparrow$ & \textbf{LPIPS}$\downarrow$ & \textbf{Act. Acc.}$\uparrow$ \\
            \midrule
            \showrowcolors
            0 & $0.014 {\scriptstyle \pm 0.005}$ & $18.9 {\scriptstyle \pm 1.4}$ & $0.583 {\scriptstyle \pm 0.061}$ & $0.358 {\scriptstyle \pm 0.060}$ & $100.0\%$ \\
            \hiderowcolors
            10 & $0.009 {\scriptstyle \pm 0.007}$ & $21.2 {\scriptstyle \pm 2.6}$ & $0.649 {\scriptstyle \pm 0.085}$ & $0.252 {\scriptstyle \pm 0.087}$ & $100.0\%$ \\
            \showrowcolors
            20 & $0.006 {\scriptstyle \pm 0.004}$ & $22.7 {\scriptstyle \pm 2.4}$ & $0.680 {\scriptstyle \pm 0.087}$ & $0.208 {\scriptstyle \pm 0.083}$ & $100.0\%$ \\
            \hiderowcolors
            30 & $0.005 {\scriptstyle \pm 0.003}$ & $23.6 {\scriptstyle \pm 2.6}$ & $0.703 {\scriptstyle \pm 0.092}$ & $0.173 {\scriptstyle \pm 0.080}$ & $100.0\%$ \\
            \showrowcolors
            40 & $0.004 {\scriptstyle \pm 0.002}$ & $24.4 {\scriptstyle \pm 2.9}$ & $0.722 {\scriptstyle \pm 0.098}$ & $0.148 {\scriptstyle \pm 0.080}$ & $100.0\%$ \\
            \hiderowcolors
            50 & $0.004 {\scriptstyle \pm 0.002}$ & $25.0 {\scriptstyle \pm 3.1}$ & $0.738 {\scriptstyle \pm 0.099}$ & $0.132 {\scriptstyle \pm 0.077}$ & $100.0\%$ \\
            \bottomrule
        \end{tabular}
    }
    \vspace{-10pt}
\end{wraptable}

%% file: tables/victim_variant_setup.tex
\begin{table}[t]
    \centering
    \small
    \setlength{\tabcolsep}{5pt}
    \caption{Setup and offline cost of the additional PPO attackers trained without data augmentation. Full-gradient dimension $d$ equals the victim's trainable parameter count. \textit{Seq.} indicates the number of training sequences, excluding the $5\%$ validation split. Attacker sizes are in billions. All GPUs are NVIDIA H200s.}
    \label{tab:victim_variant_setup}
    \rowcolors{1}{gray!10}{gray!10}
    \begin{tabular}{@{}lrrrrr@{}}
        \hiderowcolors
        \toprule
        Victim & $d$ & Attacker (B) & Seq. & GPUs & Time (h) \\
        \midrule
        \showrowcolors
        CNN/Nav10 & 938,667 & 2.278 & 46,871 & 6 & 5.29 \\
        \hiderowcolors
        Tiny ViT/Nav5 & 477,830 & 1.335 & 46,770 & 4 & 5.46 \\
        \showrowcolors
        IMPALA/Nav5 & 1,090,790 & 2.590 & 47,118 & 4 & 6.85 \\
        \hiderowcolors
        Wider IMPALA/Nav5 & 2,373,318 & 2.180 & 47,098 & 6 & 12.19 \\
        \showrowcolors
        CNN+GRU/Nav5 & 919,462 & 2.239 & 47,215 & 6 & 7.71 \\
        \hiderowcolors
        Multi-modal/Nav5 & 1,281,318 & 2.980 & 39,859 & 6 & 5.85 \\
        \bottomrule
    \end{tabular}
\end{table}

%% file: tables/victim_variants.tex
\begin{table}[t]
    \centering
    \small
    \setlength{\tabcolsep}{4pt}
    \caption{Additional PPO victims, each with a separately trained attacker and no data augmentation. $K$ is the number of actions.}
    \label{tab:victim_variants}
    \rowcolors{1}{gray!10}{gray!10}
    \begin{tabular}{@{}lrrrrrr@{}}
        \hiderowcolors
        \toprule
        Victim & $K$ & MSE$\downarrow$ & PSNR (dB)$\uparrow$ & SSIM$\uparrow$ & LPIPS$\downarrow$ & Act. Acc. (\%)$\uparrow$ \\
        \midrule
        \showrowcolors
        CNN & 10 & $0.013{\scriptstyle\pm0.003}$ & $18.92{\scriptstyle\pm0.96}$ & $0.610{\scriptstyle\pm0.051}$ & $0.372{\scriptstyle\pm0.074}$ & 99.5 \\
        \hiderowcolors
        Tiny ViT & 5 & $0.048{\scriptstyle\pm0.028}$ & $13.99{\scriptstyle\pm2.81}$ & $0.456{\scriptstyle\pm0.096}$ & $0.543{\scriptstyle\pm0.057}$ & 99.0 \\
        \showrowcolors
        IMPALA-style CNN & 5 & $0.007{\scriptstyle\pm0.003}$ & $21.85{\scriptstyle\pm1.93}$ & $0.725{\scriptstyle\pm0.103}$ & $0.244{\scriptstyle\pm0.057}$ & 97.4 \\
        \hiderowcolors
        Wider IMPALA CNN & 5 & $0.007{\scriptstyle\pm0.005}$ & $22.36{\scriptstyle\pm3.05}$ & $0.773{\scriptstyle\pm0.127}$ & $0.208{\scriptstyle\pm0.106}$ & 98.5 \\
        \showrowcolors
        CNN+GRU & 5 & $0.015{\scriptstyle\pm0.006}$ & $18.62{\scriptstyle\pm1.50}$ & $0.565{\scriptstyle\pm0.089}$ & $0.390{\scriptstyle\pm0.072}$ & 97.5 \\
        \hiderowcolors
        Multi-modal CNN & 5 & $0.011{\scriptstyle\pm0.004}$ & $19.78{\scriptstyle\pm1.64}$ & $0.697{\scriptstyle\pm0.076}$ & $0.335{\scriptstyle\pm0.081}$ & 97.4 \\
        \bottomrule
    \end{tabular}
\end{table}

%% file: tables/aggregation.tex
\begin{table}[t]
    \centering
    \small
    \setlength{\tabcolsep}{4pt}
    \caption{Reconstruction from four- and eight-step gradient averages for the five-action PPO victim. Uniform baselines have no image output.}
    \label{tab:aggregation}
    \rowcolors{1}{gray!10}{gray!10}
    \begin{tabular}{@{}lrrrrr@{}}
        \hiderowcolors
        \toprule
        Attacker & MSE$\downarrow$ & PSNR (dB)$\uparrow$ & SSIM$\uparrow$ & LPIPS$\downarrow$ & Hist. TV$\downarrow$ \\
        \midrule
        \multicolumn{6}{@{}l}{\underline{\textit{Four-step averaging} (Agg4)}} \\
        [3pt]
        Before fine-tuning & $0.022{\scriptstyle\pm0.015}$ & $17.90{\scriptstyle\pm3.80}$ & $0.604{\scriptstyle\pm0.148}$ & $0.410{\scriptstyle\pm0.097}$ & $0.576{\scriptstyle\pm0.226}$ \\
        After fine-tuning & $0.021{\scriptstyle\pm0.014}$ & $18.04{\scriptstyle\pm3.60}$ & $0.611{\scriptstyle\pm0.144}$ & $0.412{\scriptstyle\pm0.097}$ & $0.457{\scriptstyle\pm0.189}$ \\
        From scratch & $0.025{\scriptstyle\pm0.016}$ & $17.03{\scriptstyle\pm3.23}$ & $0.578{\scriptstyle\pm0.140}$ & $0.510{\scriptstyle\pm0.085}$ & $0.296{\scriptstyle\pm0.161}$ \\
        Uniform baseline & -- & -- & -- & -- & $0.428{\scriptstyle\pm0.140}$ \\
        \midrule
        \showrowcolors
        \multicolumn{6}{@{}l}{\underline{\textit{Eight-step averaging} (Agg8)}} \\[3pt]
        From scratch & $0.031{\scriptstyle\pm0.019}$ & $16.03{\scriptstyle\pm3.31}$ & $0.568{\scriptstyle\pm0.142}$ & $0.559{\scriptstyle\pm0.096}$ & $0.268{\scriptstyle\pm0.115}$ \\
        Uniform baseline & -- & -- & -- & -- & $0.304{\scriptstyle\pm0.094}$ \\
        \bottomrule
    \end{tabular}
\end{table}

%% file: tables/auxiliary_scaling.tex
\begin{table}[t]
    \centering
    \small
    \setlength{\tabcolsep}{4pt}
    \caption{Auxiliary-data scaling without augmentation for PPO. Percentages select training episodes after a fixed validation holdout; \textit{Seq.} counts training sequences.}
    \label{tab:auxiliary_scaling}
    \rowcolors{1}{gray!10}{gray!10}
    \begin{tabular}{@{}lrrrrrr@{}}
        \hiderowcolors
        \toprule
        Auxiliary data & Seq. & MSE$\downarrow$ & PSNR (dB)$\uparrow$ & SSIM$\uparrow$ & LPIPS$\downarrow$ & Act. Acc. (\%)$\uparrow$ \\
        \midrule
        \showrowcolors
        $10\%$ & 4,846 & $0.026{\scriptstyle\pm0.010}$ & $16.25{\scriptstyle\pm1.80}$ & $0.548{\scriptstyle\pm0.104}$ & $0.502{\scriptstyle\pm0.063}$ & 100.0 \\
        \hiderowcolors
        $30\%$ & 14,267 & $0.019{\scriptstyle\pm0.007}$ & $17.66{\scriptstyle\pm1.78}$ & $0.594{\scriptstyle\pm0.104}$ & $0.419{\scriptstyle\pm0.062}$ & 100.0 \\
        \showrowcolors
        $50\%$ & 23,694 & $0.016{\scriptstyle\pm0.006}$ & $18.24{\scriptstyle\pm1.65}$ & $0.609{\scriptstyle\pm0.108}$ & $0.389{\scriptstyle\pm0.072}$ & 100.0 \\
        \hiderowcolors
        $70\%$ & 32,981 & $0.016{\scriptstyle\pm0.007}$ & $18.44{\scriptstyle\pm1.78}$ & $0.617{\scriptstyle\pm0.105}$ & $0.373{\scriptstyle\pm0.074}$ & 100.0 \\
        \showrowcolors
        $90\%$ & 42,329 & $0.015{\scriptstyle\pm0.005}$ & $18.60{\scriptstyle\pm1.67}$ & $0.625{\scriptstyle\pm0.100}$ & $0.359{\scriptstyle\pm0.065}$ & 100.0 \\
        \hiderowcolors
        $100\%$ & 47,313 & $0.014{\scriptstyle\pm0.005}$ & $18.71{\scriptstyle\pm1.62}$ & $0.627{\scriptstyle\pm0.102}$ & $0.354{\scriptstyle\pm0.074}$ & 100.0 \\
        \bottomrule
    \end{tabular}
\end{table}

%% file: tables/a2c_ablation.tex
\begin{table}[t]
    \centering
    \setlength{\tabcolsep}{2.5pt}
    \caption{Ablation study conducted on the held-out set (A2C).}
    \label{tab:ablations_a2c}
    \small
        \begin{tabular}{@{}llccccc@{}}
            \toprule
             & \textbf{Settings} & \textbf{MSE}$\downarrow$ & \textbf{PSNR}$\uparrow$ & \textbf{SSIM}$\uparrow$ & \textbf{LPIPS}$\downarrow$ & \textbf{Act. Acc.}$\uparrow$ \\
            \midrule
            \multirow{6}{*}{\rotatebox{90}{$T$}}
                & 1 & $0.028 {\scriptstyle \pm 0.004}$ & $15.54 {\scriptstyle \pm 0.69}$ & $0.483 {\scriptstyle \pm 0.034}$ & $0.695 {\scriptstyle \pm 0.064}$ & 99.0\% \\
                & 8 & $0.014 {\scriptstyle \pm 0.004}$ & $18.60 {\scriptstyle \pm 1.23}$ & $0.564 {\scriptstyle \pm 0.060}$ & $0.388 {\scriptstyle \pm 0.064}$ & 100.0\% \\
                & 16 & $0.015 {\scriptstyle \pm 0.005}$ & $18.60 {\scriptstyle \pm 1.31}$ & $0.579 {\scriptstyle \pm 0.061}$ & $0.375 {\scriptstyle \pm 0.055}$ & 98.4\% \\
                & 32 & $0.014 {\scriptstyle \pm 0.004}$ & $18.77 {\scriptstyle \pm 1.29}$ & $0.541 {\scriptstyle \pm 0.068}$ & $0.390 {\scriptstyle \pm 0.053}$ & 98.9\% \\
                & 64 & $0.016 {\scriptstyle \pm 0.007}$ & $18.26 {\scriptstyle \pm 1.75}$ & $0.560 {\scriptstyle \pm 0.084}$ & $0.443 {\scriptstyle \pm 0.061}$ & 98.9\% \\
            \midrule
            \multirow{3}{*}{\rotatebox{90}{Comp.}}
                & \cellcolor{gray!10}{CNN} & \cellcolor{gray!10}{$0.016 {\scriptstyle \pm 0.005}$} & \cellcolor{gray!10}{$18.15 {\scriptstyle \pm 1.20}$} & \cellcolor{gray!10}{$0.553 {\scriptstyle \pm 0.061}$} & \cellcolor{gray!10}{$0.391 {\scriptstyle \pm 0.067}$} & \cellcolor{gray!10}{99.5\%} \\
                & \cellcolor{gray!10}{FC} & \cellcolor{gray!10}{$0.016 {\scriptstyle \pm 0.007}$} & \cellcolor{gray!10}{$18.23 {\scriptstyle \pm 1.43}$} & \cellcolor{gray!10}{$0.558 {\scriptstyle \pm 0.063}$} & \cellcolor{gray!10}{$0.405 {\scriptstyle \pm 0.069}$} & \cellcolor{gray!10}{99.5\%} \\
                & \cellcolor{gray!10}{Heads} & \cellcolor{gray!10}{$0.028 {\scriptstyle \pm 0.013}$} & \cellcolor{gray!10}{$15.97 {\scriptstyle \pm 1.92}$} & \cellcolor{gray!10}{$0.500 {\scriptstyle \pm 0.074}$} & \cellcolor{gray!10}{$0.431 {\scriptstyle \pm 0.091}$} & \cellcolor{gray!10}{100.0\%} \\
            \midrule
            \multirow{4}{*}{\rotatebox{90}{Loss (w/o)}}
                & Act. & $0.013 {\scriptstyle \pm 0.004}$ & $18.88 {\scriptstyle \pm 1.17}$ & $0.573 {\scriptstyle \pm 0.058}$ & $0.386 {\scriptstyle \pm 0.054}$ & 16.4\% \\
                & L1 & $0.014 {\scriptstyle \pm 0.004}$ & $18.68 {\scriptstyle \pm 1.23}$ & $0.568 {\scriptstyle \pm 0.057}$ & $0.388 {\scriptstyle \pm 0.066}$ & 100.0\% \\
                & LPIPS & $0.017 {\scriptstyle \pm 0.007}$ & $18.01 {\scriptstyle \pm 1.47}$ & $0.567 {\scriptstyle \pm 0.060}$ & $0.409 {\scriptstyle \pm 0.067}$ & 100.0\% \\
                & MSE & $0.015 {\scriptstyle \pm 0.004}$ & $18.51 {\scriptstyle \pm 1.10}$ & $0.564 {\scriptstyle \pm 0.062}$ & $0.382 {\scriptstyle \pm 0.064}$ & 100.0\% \\
            \bottomrule
        \end{tabular}%
    \vspace{-10pt}
\end{table}

%% file: appendices/limitations.tex
\section{Discussion}
\label{sec:discussion}

\sysname is designed to reconstruct private trajectories from ordered per-step gradients in near real-time scenarios. Consequently, our evaluation focuses on practical defense mechanisms compatible with these conditions, including pruning, quantization, additive noise, and DP-SGD~\citep{abadi2016deep}. DP-SGD provides stronger protection in our experiments but entails additional computational and utility costs. We do not consider heavier defenses such as CENSOR~\citep{zhang2025censor} and SVDefense~\citep{luo2025svdefense}, whose overhead are challenging for real-time deployment. 
\Cref{sec:aggregation} evaluates averages of four consecutive stored gradients with last-frame and action-histogram targets. This experiment retains ordering across aggregate windows; it does not evaluate recomputed rollout-level updates, multi-epoch client policy updates, or multi-client secure aggregation. These mechanisms combine gradients across time steps, optimization states, or clients, potentially diminishing or eliminating the ordered temporal signal exploited by \sysname.

\section{Limitations}
\label{sec:limitations}

Our experiments remain confined to point-goal navigation in AI2-THOR~\citep{kolve2017ai2}. The additional PPO victims cover recurrent, residual, and compact transformer policies, multi-modal inputs, and up to ten discrete actions, but require a separately trained attacker for each victim. They do not establish cross-task or cross-simulator transfer, nor scalability to substantially larger policies. The multi-modal experiment reconstructs RGB and actions, not the goal input; depth, proprioception, and off-policy algorithms remain unevaluated. Recurrent captures condition on detached hidden states rather than backpropagating through complete histories. These settings retain the ordered per-step gradient interface and do not resolve the aggregation limitation above. Finally, our analysis targets discrete-action victims: \Cref{thm:action_identifiability} applies to a categorical policy over a finite set of actions.
For Gaussian continuous-control policies, gradients may reveal advantage-weighted residual structure, but identifiability depends on both the covariance parameterization and the advantage-sign information. Consequently, the categorical argmin/argmax rule does not apply directly, and the corresponding identifiability statement is left to future work.

%% file: references.bib
@inproceedings{hatamizadeh2022gradvit,
  title={Gradvit: Gradient inversion of vision transformers},
  author={Hatamizadeh, Ali and Yin, Hongxu and Roth, Holger R and Li, Wenqi and Kautz, Jan and Xu, Daguang and Molchanov, Pavlo},
  booktitle={Proceedings of the IEEE/CVF conference on computer vision and pattern recognition},
  pages={10021--10030},
  year={2022}
}

@article{shi2023scale,
  title={Scale-mia: A scalable model inversion attack against secure federated learning via latent space reconstruction},
  author={Shi, Shanghao and Wang, Ning and Xiao, Yang and Zhang, Chaoyu and Shi, Yi and Hou, Y Thomas and Lou, Wenjing},
  journal={arXiv preprint arXiv:2311.05808},
  year={2023}
}

@inproceedings{lu2022april,
  title={April: Finding the achilles' heel on privacy for vision transformers},
  author={Lu, Jiahao and Zhang, Xi Sheryl and Zhao, Tianli and He, Xiangyu and Cheng, Jian},
  booktitle={Proceedings of the IEEE/CVF conference on computer vision and pattern recognition},
  pages={10051--10060},
  year={2022}
}

@inproceedings{yin2021see,
  title={See through gradients: Image batch recovery via gradinversion},
  author={Yin, Hongxu and Mallya, Arun and Vahdat, Arash and Alvarez, Jose M and Kautz, Jan and Molchanov, Pavlo},
  booktitle={Proceedings of the IEEE/CVF conference on computer vision and pattern recognition},
  pages={16337--16346},
  year={2021}
}

@article{schulman2017proximal,
  title={Proximal policy optimization algorithms},
  author={Schulman, John and Wolski, Filip and Dhariwal, Prafulla and Radford, Alec and Klimov, Oleg},
  journal={arXiv preprint arXiv:1707.06347},
  year={2017}
}

@inproceedings{carletti2025sok,
  title={$\{$SoK$\}$: Gradient Inversion Attacks in Federated Learning},
  author={Carletti, Vincenzo and Foggia, Pasquale and Mazzocca, Carlo and Parrella, Giuseppe and Vento, Mario},
  booktitle={34th USENIX Security Symposium (USENIX Security 25)},
  pages={6439--6459},
  year={2025}
}

@article{zhang2025censor,
  title={Censor: Defense against gradient inversion via orthogonal subspace Bayesian sampling},
  author={Zhang, Kaiyuan and Cheng, Siyuan and Shen, Guangyu and Ribeiro, Bruno and An, Shengwei and Chen, Pin-Yu and Zhang, Xiangyu and Li, Ninghui},
  journal={arXiv preprint arXiv:2501.15718},
  year={2025}
}

@inproceedings{li2022auditing,
  title={Auditing privacy defenses in federated learning via generative gradient leakage},
  author={Li, Zhuohang and Zhang, Jiaxin and Liu, Luyang and Liu, Jian},
  booktitle={Proceedings of the IEEE/CVF conference on computer vision and pattern recognition},
  pages={10132--10142},
  year={2022}
}

@article{geyer2017differentially,
  title={Differentially private federated learning: A client level perspective},
  author={Geyer, Robin C and Klein, Tassilo and Nabi, Moin},
  journal={arXiv preprint arXiv:1712.07557},
  year={2017}
}

@inproceedings{aji2017sparse,
  title={Sparse communication for distributed gradient descent},
  author={Aji, Alham Fikri and Heafield, Kenneth},
  booktitle={Proceedings of the 2017 conference on empirical methods in natural language processing},
  pages={440--445},
  year={2017}
}

@inproceedings{wei2021gradient,
  title={Gradient-leakage resilient federated learning},
  author={Wei, Wenqi and Liu, Ling and Wu, Yanzhao and Su, Gong and Iyengar, Arun},
  booktitle={2021 IEEE 41st International Conference on Distributed Computing Systems (ICDCS)},
  pages={797--807},
  year={2021},
  organization={IEEE}
}

@inproceedings{sun2021soteria,
  title={Soteria: Provable defense against privacy leakage in federated learning from representation perspective},
  author={Sun, Jingwei and Li, Ang and Wang, Binghui and Yang, Huanrui and Li, Hai and Chen, Yiran},
  booktitle={Proceedings of the IEEE/CVF conference on computer vision and pattern recognition},
  pages={9311--9319},
  year={2021}
}

@article{kolve2017ai2,
  title={Ai2-thor: An interactive 3d environment for visual ai},
  author={Kolve, Eric and Mottaghi, Roozbeh and Han, Winson and VanderBilt, Eli and Weihs, Luca and Herrasti, Alvaro and Deitke, Matt and Ehsani, Kiana and Gordon, Daniel and Zhu, Yuke and others},
  journal={arXiv preprint arXiv:1712.05474},
  year={2017}
}

@article{luo2025svdefense,
  title={SVDefense: Effective Defense against Gradient Inversion Attacks via Singular Value Decomposition},
  author={Luo, Chenxiang and Yau, David KY and Song, Qun},
  journal={arXiv preprint arXiv:2510.03319},
  year={2025}
}

@article{konda1999actor,
  title={Actor-critic algorithms},
  author={Konda, Vijay and Tsitsiklis, John},
  journal={Advances in neural information processing systems},
  volume={12},
  year={1999}
}

@article{zhu2019deep,
  title={Deep leakage from gradients},
  author={Zhu, Ligeng and Liu, Zhijian and Han, Song},
  journal={Advances in neural information processing systems},
  volume={32},
  year={2019}
}

@article{geiping2020inverting,
  title={Inverting gradients-how easy is it to break privacy in federated learning?},
  author={Geiping, Jonas and Bauermeister, Hartmut and Dr{\"o}ge, Hannah and Moeller, Michael},
  journal={Advances in neural information processing systems},
  volume={33},
  pages={16937--16947},
  year={2020}
}

@article{zhao2020idlg,
  title={idlg: Improved deep leakage from gradients},
  author={Zhao, Bo and Mopuri, Konda Reddy and Bilen, Hakan},
  journal={arXiv preprint arXiv:2001.02610},
  year={2020}
}

@article{jeon2021gradient,
  title={Gradient inversion with generative image prior},
  author={Jeon, Jinwoo and Lee, Kangwook and Oh, Sewoong and Ok, Jungseul and others},
  journal={Advances in neural information processing systems},
  volume={34},
  pages={29898--29908},
  year={2021}
}

@inproceedings{bonawitz2017practical,
  title={Practical secure aggregation for privacy-preserving machine learning},
  author={Bonawitz, Keith and Ivanov, Vladimir and Kreuter, Ben and Marcedone, Antonio and McMahan, H Brendan and Patel, Sarvar and Ramage, Daniel and Segal, Aaron and Seth, Karn},
  booktitle={proceedings of the 2017 ACM SIGSAC Conference on Computer and Communications Security},
  pages={1175--1191},
  year={2017}
}

@article{wang2004image,
  title={Image quality assessment: from error visibility to structural similarity},
  author={Wang, Zhou and Bovik, Alan C and Sheikh, Hamid R and Simoncelli, Eero P},
  journal={IEEE transactions on image processing},
  volume={13},
  number={4},
  pages={600--612},
  year={2004},
  publisher={IEEE}
}

@inproceedings{zhang2018unreasonable,
  title={The unreasonable effectiveness of deep features as a perceptual metric},
  author={Zhang, Richard and Isola, Phillip and Efros, Alexei A and Shechtman, Eli and Wang, Oliver},
  booktitle={Proceedings of the IEEE conference on computer vision and pattern recognition},
  pages={586--595},
  year={2018}
}

@inproceedings{li2024privacy,
  title={Privacy risks in reinforcement learning for household robots},
  author={Li, Miao and Ding, Wenhao and Zhao, Ding},
  booktitle={2024 IEEE International Conference on Robotics and Automation (ICRA)},
  pages={5148--5154},
  year={2024},
  organization={IEEE}
}

@article{wen2022fishing,
  title={Fishing for user data in large-batch federated learning via gradient magnification},
  author={Wen, Yuxin and Geiping, Jonas and Fowl, Liam and Goldblum, Micah and Goldstein, Tom},
  journal={arXiv preprint arXiv:2202.00580},
  year={2022}
}

@inproceedings{pasquini2022eluding,
  title={Eluding secure aggregation in federated learning via model inconsistency},
  author={Pasquini, Dario and Francati, Danilo and Ateniese, Giuseppe},
  booktitle={Proceedings of the 2022 ACM SIGSAC Conference on computer and communications security},
  pages={2429--2443},
  year={2022}
}

@article{fowl2021robbing,
  title={Robbing the fed: Directly obtaining private data in federated learning with modified models},
  author={Fowl, Liam and Geiping, Jonas and Czaja, Wojtek and Goldblum, Micah and Goldstein, Tom},
  journal={arXiv preprint arXiv:2110.13057},
  year={2021}
}

@inproceedings{zhao2024loki,
  title={Loki: Large-scale data reconstruction attack against federated learning through model manipulation},
  author={Zhao, Joshua C and Sharma, Atul and Elkordy, Ahmed Roushdy and Ezzeldin, Yahya H and Avestimehr, Salman and Bagchi, Saurabh},
  booktitle={2024 IEEE Symposium on Security and Privacy (SP)},
  pages={1287--1305},
  year={2024},
  organization={IEEE}
}

@inproceedings{wu2023learning,
  title={Learning to invert: Simple adaptive attacks for gradient inversion in federated learning},
  author={Wu, Ruihan and Chen, Xiangyu and Guo, Chuan and Weinberger, Kilian Q},
  booktitle={Uncertainty in Artificial Intelligence},
  pages={2293--2303},
  year={2023},
  organization={PMLR}
}

@article{du2023orl,
  title={Orl-auditor: Dataset auditing in offline deep reinforcement learning},
  author={Du, Linkang and Chen, Min and Sun, Mingyang and Ji, Shouling and Cheng, Peng and Chen, Jiming and Zhang, Zhikun},
  journal={arXiv preprint arXiv:2309.03081},
  year={2023}
}

@article{he2025gradient,
  title={Gradient Inversion in Federated Reinforcement Learning},
  author={He, Shenghong},
  journal={arXiv preprint arXiv:2512.00303},
  year={2025}
}

@article{huang2021evaluating,
  title={Evaluating gradient inversion attacks and defenses in federated learning},
  author={Huang, Yangsibo and Gupta, Samyak and Song, Zhao and Li, Kai and Arora, Sanjeev},
  journal={Advances in neural information processing systems},
  volume={34},
  pages={7232--7241},
  year={2021}
}

@inproceedings{abadi2016deep,
  title={Deep learning with differential privacy},
  author={Abadi, Martin and Chu, Andy and Goodfellow, Ian and McMahan, H Brendan and Mironov, Ilya and Talwar, Kunal and Zhang, Li},
  booktitle={Proceedings of the 2016 ACM SIGSAC conference on computer and communications security},
  pages={308--318},
  year={2016}
}

@article{bengio2015scheduled,
  title={Scheduled sampling for sequence prediction with recurrent neural networks},
  author={Bengio, Samy and Vinyals, Oriol and Jaitly, Navdeep and Shazeer, Noam},
  journal={Advances in neural information processing systems},
  volume={28},
  year={2015}
}

@article{ranzato2015sequence,
  title={Sequence level training with recurrent neural networks},
  author={Ranzato, Marc'Aurelio and Chopra, Sumit and Auli, Michael and Zaremba, Wojciech},
  journal={arXiv preprint arXiv:1511.06732},
  year={2015}
}

@article{schulman2015high,
  title={High-dimensional continuous control using generalized advantage estimation},
  author={Schulman, John and Moritz, Philipp and Levine, Sergey and Jordan, Michael and Abbeel, Pieter},
  journal={arXiv preprint arXiv:1506.02438},
  year={2015}
}

@article{dao2022flashattention,
  title={Flashattention: Fast and memory-efficient exact attention with io-awareness},
  author={Dao, Tri and Fu, Dan and Ermon, Stefano and Rudra, Atri and R{\'e}, Christopher},
  journal={Advances in neural information processing systems},
  volume={35},
  pages={16344--16359},
  year={2022}
}

@article{su2024roformer,
  title={Roformer: Enhanced transformer with rotary position embedding},
  author={Su, Jianlin and Ahmed, Murtadha and Lu, Yu and Pan, Shengfeng and Bo, Wen and Liu, Yunfeng},
  journal={Neurocomputing},
  volume={568},
  pages={127063},
  year={2024},
  publisher={Elsevier}
}

@inproceedings{mnih2016asynchronous,
  title={Asynchronous methods for deep reinforcement learning},
  author={Mnih, Volodymyr and Badia, Adria Puigdomenech and Mirza, Mehdi and Graves, Alex and Lillicrap, Timothy and Harley, Tim and Silver, David and Kavukcuoglu, Koray},
  booktitle={International conference on machine learning},
  pages={1928--1937},
  year={2016},
  organization={PmLR}
}

@article{williams1992simple,
  title={Simple statistical gradient-following algorithms for connectionist reinforcement learning},
  author={Williams, Ronald J},
  journal={Machine learning},
  volume={8},
  number={3},
  pages={229--256},
  year={1992},
  publisher={Springer}
}

@article{paszke2019pytorch,
  title={Pytorch: An imperative style, high-performance deep learning library},
  author={Paszke, Adam and Gross, Sam and Massa, Francisco and Lerer, Adam and Bradbury, James and Chanan, Gregory and Killeen, Trevor and Lin, Zeming and Gimelshein, Natalia and Antiga, Luca and others},
  journal={Advances in neural information processing systems},
  volume={32},
  year={2019}
}

@article{wang2009mean,
  title={Mean squared error: Love it or leave it? A new look at signal fidelity measures},
  author={Wang, Zhou and Bovik, Alan C},
  journal={IEEE signal processing magazine},
  volume={26},
  number={1},
  pages={98--117},
  year={2009},
  publisher={IEEE}
}

@article{lin2005visual,
  title={Visual distortion gauge based on discrimination of noticeable contrast changes},
  author={Lin, Weisi and Dong, Li and Xue, Ping},
  journal={IEEE transactions on circuits and systems for video technology},
  volume={15},
  number={7},
  pages={900--909},
  year={2005},
  publisher={IEEE}
}

@article{alistarh2017qsgd,
  title={QSGD: Communication-efficient SGD via gradient quantization and encoding},
  author={Alistarh, Dan and Grubic, Demjan and Li, Jerry and Tomioka, Ryota and Vojnovic, Milan},
  journal={Advances in neural information processing systems},
  volume={30},
  year={2017}
}

@inproceedings{zhou2025taming,
  title={Taming teacher forcing for masked autoregressive video generation},
  author={Zhou, Deyu and Sun, Quan and Peng, Yuang and Yan, Kun and Dong, Runpei and Wang, Duomin and Ge, Zheng and Duan, Nan and Zhang, Xiangyu},
  booktitle={Proceedings of the IEEE/CVF Conference on Computer Vision and Pattern Recognition},
  pages={7374--7384},
  year={2025}
}

@inproceedings{dosovitskiy2020image,
  title = {An {{Image}} Is {{Worth}} 16x16 {{Words}}: {{Transformers}} for {{Image Recognition}} at {{Scale}}},
  booktitle = {International {{Conference}} on {{Learning Representations}}},
  author = {Dosovitskiy, Alexey and Beyer, Lucas and Kolesnikov, Alexander and Weissenborn, Dirk and Zhai, Xiaohua and Unterthiner, Thomas and Dehghani, Mostafa and Minderer, Matthias and Heigold, Georg and Gelly, Sylvain and Uszkoreit, Jakob and Houlsby, Neil},
  year = 2020
}

@misc{espeholt2018impala,
  title = {{{IMPALA}}: {{Scalable Distributed Deep-RL}} with {{Importance Weighted Actor-Learner Architectures}}},
  author = {Espeholt, Lasse and Soyer, Hubert and Munos, Remi and Simonyan, Karen and Mnih, Volodymir and Ward, Tom and Doron, Yotam and Firoiu, Vlad and Harley, Tim and Dunning, Iain and Legg, Shane and Kavukcuoglu, Koray},
  year = 2018,
  eprint = {1802.01561},
  primaryclass = {cs.LG},
  doi = {10.48550/arXiv.1802.01561},
  archiveprefix = {arXiv}
}


%% file: software.bib
@software{hdf5,
author = {{The HDF Group}},
title = {{Hierarchical Data Format, version 5}},
url = {https://github.com/HDFGroup/hdf5}
}
